\documentclass[11pt,dvipsnames]{article}

\usepackage[font={scriptsize,sf}]{caption}
\usepackage{sansmath}

\usepackage{titling}

\usepackage{times}
\usepackage[T1]{fontenc}
\topmargin 0.0cm
\oddsidemargin 0cm
\textwidth 17cm 
\textheight 21cm
\footskip 1.0cm

\usepackage[margin=1in]{geometry}
\usepackage[utf8]{inputenc}

\usepackage{amsmath,amssymb,amsthm,thmtools,mathtools}
\usepackage{graphicx}
\usepackage{xcolor,colortbl}

\usepackage{sansmath}
\newcommand{\s}{\fontsize{7pt}{8pt}\selectfont}
\newcommand{\sm}{\fontsize{5pt}{8pt}\selectfont}

\usepackage{lmodern}

\usepackage{listings}
\lstset{
  basicstyle=\ttfamily,
  mathescape
}
\usepackage{mdframed}

\newtheorem{proposition}{Proposition}

\newtheorem{remark}{Remark}

\newcounter{rcodeno}
\newcommand{\rcode}[1]{\refstepcounter{rcodeno}\label{#1}}

\newcommand{\changelog}{{}}

\newcommand{\actMNash}{-0.2}

\newcommand{\R}{\mathbb{R}}

\newcommand{\mc}{\mathcal}
\DeclareMathOperator*{\amin}{\text{argmin}}

\newcommand{\eqn}[1]{\begin{equation*}\begin{aligned}#1\end{aligned}\end{equation*}}

\newcommand{\grad}{\partial}


\newcommand{\ActH}{\mathcal{H}}
\newcommand{\ActM}{\mathcal{M}}
\newcommand{\costH}{c_H}
\newcommand{\costM}{c_M}
\newcommand{\actH}{h}
\newcommand{\actM}{m}

\newcommand{\polM}{\pi_M}
\newcommand{\Conj}{L}

\newcommand{\ConjH}{L_H}
\newcommand{\conjH}{\ell_H}
\newcommand{\Hconj}{v_H}
\newcommand{\ConjM}{L_M}
\newcommand{\conjM}{\ell_M}
\newcommand{\Mconj}{v_M}
\newcommand{\NE}{\text{NE}}
\newcommand{\SE}{\text{SE}}
\newcommand{\RSE}{\text{RSE}}
\newcommand{\CVE}{\text{CVE}}
\newcommand{\CCVE}{\text{CCVE}}
\newcommand{\Nash}[1]{#1^{\NE}}
\newcommand{\Opt}[1]{#1^{*}}
\newcommand{\Stack}[1]{#1^{\SE}}
\newcommand{\RStack}[1]{#1^{\RSE}}
\newcommand{\Cve}[1]{#1^{\CVE}}
\newcommand{\Ccve}[1]{#1^{\CCVE}}

\newcommand{\eqnn}[2]{\begin{equation}\label{eq:#1}\begin{aligned}#2\end{aligned}\end{equation}}
\newcommand{\eqnns}[2]{\begin{subequations}\label{eq:#1}\begin{align}#2\end{align}\end{subequations}}
\newcommand{\paren}[1]{\left( #1 \right)}

\newcommand{\set}[1]{\left\{ #1 \right\}}
\newcommand{\st}{\mid}
\newcommand{\est}[1]{\widetilde{#1}}

\newcommand{\HA}{A_H}
\newcommand{\HB}{B_H}
\newcommand{\HD}{D_H}
\newcommand{\Hb}{b_H}
\newcommand{\Hd}{d_H}
\newcommand{\MA}{A_M}
\newcommand{\MB}{B_M}
\newcommand{\MD}{D_M}
\newcommand{\Mb}{b_M}
\newcommand{\Md}{d_M}
\newcommand{\figref}[1]{Figure~\ref{fig:#1}}
\newcommand{\subfigref}[2]{Figure~\ref{fig:#1}#2}
\newcommand{\secref}[1]{Section~\ref{sec:#1}}

\newcommand{\citedef}[2]{(Definition~#1 in~\cite{#2})}

\newcommand{\learningrate}{adaptation rate}
\newcommand{\optimum}{optimum}
\newcommand{\optima}{optima}

%


%

\newcommand{\into}{\rightarrow}

\newcommand{\tops}{{}} 

\usepackage{lineno}
\definecolor{lightgray}{rgb}{0.7,0.7,0.7}

\usepackage[normalem]{ulem}



\usepackage[sort&compress,round]{natbib}
\bibliographystyle{plainnat}
\usepackage{xcolor}
\usepackage{hyperref}
\hypersetup{
    colorlinks=true,
    linkcolor=OrangeRed,
    filecolor=magenta,      
    urlcolor=cyan,
    citecolor=RoyalBlue, 
    }
\let\cite\citep

\title{Human adaptation to adaptive machines \\ converges to game-theoretic equilibria}

\author {
Benjamin J. Chasnov, Lillian J.\ Ratliff, Samuel A.\ Burden\\ 
\\
\footnotesize 
Department of Electrical \& Computer Engineering\\
\footnotesize
University of Washington, Seattle, WA 98195, USA
}

\date{}

\begin{document}

\maketitle

\begin{abstract}
Adaptive machines have the potential to assist \emph{or} interfere with \mbox{human} behavior in a range of contexts, from cognitive decision-making~\cite{sutton2020overview,mehrabi2021survey} to physical device assistance~\cite{Felt2015-qc,Zhang2017-jq,slade2022personalizing}.
Therefore it is critical to understand 
how machine learning algorithms
can influence human actions, particularly in situations where machine goals are misaligned with those of people~\cite{Thomas2019-qt}.
Since humans continually adapt to their environment using a combination of explicit and implicit strategies~\cite{taylor2014explicit,heald2021contextual}, 
when the environment contains an adaptive machine, the human and machine play a \emph{game}~\cite{von1947theory,bacsar1998dynamic}.
Game theory is an established
framework for modeling interactions between two or more decision-makers that has been applied extensively in economic markets~\cite{varian1992microeconomic}
and machine algorithms~\cite{Goodfellow2014-yh}.
However, existing approaches make assumptions about, rather than empirically test, how adaptation by individual humans is affected by interaction with an adaptive machine~\cite{nikolaidis2017game,madduri2021game}.
Here we tested learning algorithms
for machines playing general-sum games with human \mbox{subjects}.
Our algorithms enable the machine to select the outcome of the co-adaptive interaction 
from a constellation of game-theoretic equilibria in action and policy spaces.
Importantly, the machine learning algorithms work directly from observations of human actions 
without solving an inverse problem 
to estimate the human's utility function as in prior work~\cite{li2019differential,Ng2000-aj}.
\mbox{Surprisingly}, one algorithm can steer the human-machine interaction to the machine's optimum, effectively controlling the human's actions even while the human responds optimally to their perceived cost landscape.
Our results show that game theory can be used to predict and design outcomes of co-adaptive interactions between intelligent humans and machines.
\end{abstract}

%

We studied games played between humans $H$ and machines $M$. The games were defined by quadratic functions that mapped scalar actions of each human $h$ and machine $m$ to costs $c_H(h,m)$ and $c_M(h,m)$.
Games were played 
continuously in time over a sequence of trials, 
and the machine adapted within or between trials.
Human actions $h$ were determined from a manual input device (mouse or touchscreen) as in \figref{exp1}{a}, while machine actions $m$ were determined algorithmically from the machine's cost function $\costM$ and the human's action $h$ as in \figref{exp1}{b}. 
The human's cost $c_H(h,m)$
was continuously shown to the human subjects via the height of a rectangle on a computer display as in \figref{exp1}{a}, 
which the subject was instructed to ``make as small as possible'', while the machine's actions were hidden.

\subsection*{Game-theoretic equilibria}
The experiments reported here were based on a game that is \emph{general-sum}, meaning that the cost functions prescribed to the human and machine were neither aligned nor opposed. 
There is no single ``solution'' concept for general-sum games -- unlike pure optimization problems,
players do not get to choose all decision variables that determine their cost.
Although each player seeks its own preferred outcome, 
the game outcome will generally represent a compromise between players' conflicting goals.
We considered 
\emph{Nash}~\cite{Nash1950-aq}, \emph{Stackelberg}~\cite{Von_Stackelberg1934-xc}, \emph{consistent conjectural variations}~\cite{bowley1924mathematical},
and 
\emph{reverse Stackelberg}~\cite{Ho1982-eu}
equilibria of the game (Definitions~4.1,~4.6,~4.9,~7.1 in~\citet{bacsar1998dynamic} respectively), in addition to each player's \emph{global {\optimum}}, as possible outcomes in the experiments. 
Formal definitions of these game-theoretic concepts are provided in \secref{theory-def} of the Supplement, but we provide plain-language descriptions in the next paragraph.
Table~1 contains expressions for the cost functions that defined the game considered here as well as numerical values of the resulting game-theoretic equilibria.

Nash equilibria~\cite{Nash1950-aq} arise in games with simultaneous play, and constitute  points in the joint action space from which neither player is incentivized to deviate (see Section~4.2 in~\citet{bacsar1998dynamic}).
In games with ordered play where one player (the \emph{leader}) chooses its action assuming the other (the \emph{follower}) will play using its best response,
a Stackelberg equilibrium~\cite{Von_Stackelberg1934-xc} may arise instead.
The leader in this case employs a \emph{conjecture} about the follower's {policy}, i.e.\ a function from the leader's actions to the follower's actions, and this conjecture is consistent with how the follower plays the game (Section~4.5 in~\citet{bacsar1998dynamic});
the leader's conjecture can be regarded as an \emph{internal model}~\cite{Huang2018-nh,Wolpert1995-cr,nikolaidis2017game} for the follower.
Shifting from Nash to Stackelberg equilibria in our quadratic setting is generally in favor of the leader whose cost decreases.
Of course, the follower may then form a conjecture of its own about the leader's play, and the players may iteratively update their policies and conjectures in response to their opponent's play. 
In the game we consider, this iteration converges to a
\emph{consistent} conjectural variations equilibrium~\cite{bowley1924mathematical}
defined in terms of actions \emph{and} conjectures: 
each player's conjecture is 
equal to
their opponent's policy, and each player's policy is optimal with respect to its conjecture about the opponent (Section~7.1 in~\citet{bacsar1998dynamic}).
Finally, if one player realizes how their choice of policy influences the other, they can design an \emph{incentive} 
to steer the game to their preferred outcome, termed a \emph{reverse} Stackelberg equilibrium~\cite{Ho1982-eu}
(Section~7.4.4 in~\citet{bacsar1998dynamic}).

\subsection*{Experimental results}
We conducted three 
experiments with different populations of human subjects
using a pair of quadratic cost functions $\costH$, $\costM$ illustrated in~\figref{exp1}a,b that were  designed to yield distinct game-theoretic equilibria in both action and policy spaces.
These analytically-determined equilibria were compared with the empirical distributions of actions and policies 
reached by humans and machines over a sequence of trials in each experiment.
In all three experiments, we found that empirically-measured actions or policies converged to their predicted game-theoretic values.

In our first experiment (\figref{exp1}), the machine adapted its action within trials using what is arguably the simplest optimization scheme: gradient descent~\cite{Ma2019-bh,Chasnov2020-yo}.
We tested seven adaptation rates $\alpha \geq 0$ for the gradient descent algorithm as illustrated in~\figref{exp1}{c,d,e} for each human subject, with two repetitions for each rate and the sequence of rates occurring in random order.
We found that distributions of median action vectors 
for the population of $n = 20$ human subjects in this experiment shifted from the \emph{Nash equilibrium} (NE) at the slowest adaptation rate
to the \emph{human-led Stackelberg equilibrium} (SE) at the fastest adaptation rate 
(\figref{exp1}{c}). 
Importantly, this result would not have obtained if the human was also adapting its action using 
gradient descent, as merely changing adaptation rates in simultaneous gradient play does not change stationary points~\cite{Chasnov2020-yo}.
The shift we observed from Nash to Stackelberg, 
which was in favor of the human (\figref{exp1}e), 
was statistically significant in that the distribution of actions was distinct from SE but not NE at the slowest adaptation rate and vice-versa for the fastest rate 
(\figref{exp1}d; $^*P \leq 0.05$; two-sided $t$-tests, degrees of freedom (df) $19$; exact statistics in Table~\ref{tab:pvalues}).
Discovering that the human's empirical play is consistent with the theoretically-predicted best-response function for its prescribed cost is important, as this insight motivated us in subsequent experiments to elevate the machine's play beyond the action space to reason over its space of \emph{policies}, that is, functions from human actions to machine actions.

\begin{figure}[t]
\centering
\includegraphics[width=\textwidth]{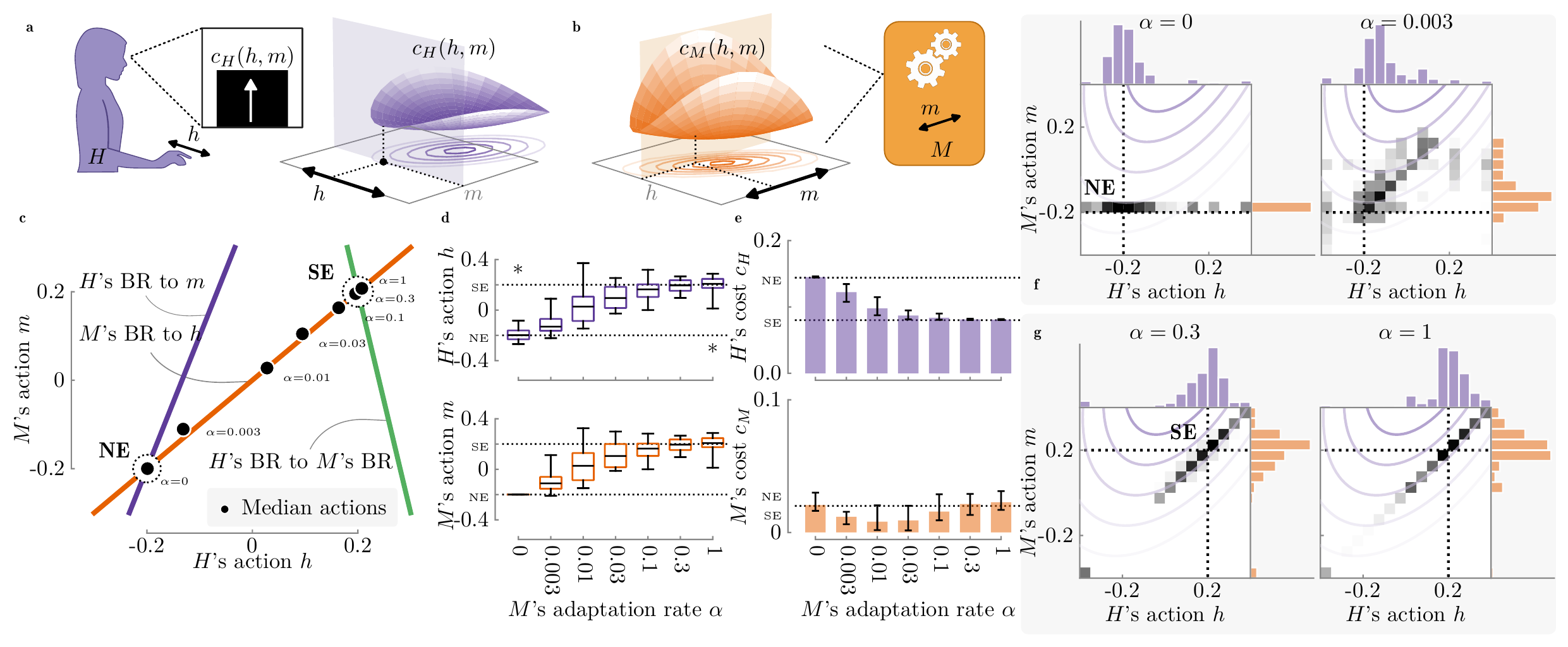}
  \caption[Experiment 1]{\label{fig:exp1}
    \textbf{Gradient descent in action space (Experiment 1, $n = 20$).}
    (\textbf{a}) Each human subject $H$ is instructed to provide manual input $h$ to make a black bar on a computer display as small as possible. The bar's height represents the value of a prescribed cost $c_H$.
    (\textbf{b}) The machine $M$ has its own cost $c_M$ chosen to yield game-theoretic equilibria that are distinct from each other and from each player's global optima. The machine knows its cost and observes human actions $h$.
    In this experiment, the machine updates its action by gradient descent on its cost $\frac{1}{2}m^2-hm+h^2$ with adaptation rate $\alpha$.
    (\textbf{c}) Median joint actions for each machine adaptation rate $\alpha$ overlaid on game-theoretic equilibria and best-response (BR) curves that define the Nash and Stackelberg equilibria (NE and SE, respectively).
    (\textbf{d}) Action distributions for each machine adaptation rate displayed by box-and-whiskers plots showing 5th, 25th, 50th, 75th, and 95th percentiles.
    Statistical significance ($*$) 
    determined by 
    comparing 
    to NE (shown below distributions) and SE (shown above distributions) 
    using two-sided $t$-tests ($^*P \leq 0.05$).
    (\textbf{e}) Cost distributions for each machine adaptation rate displayed using box plots with error bars showing 25th, 50th, and 75th percentiles. 
    (\textbf{f,g}) One- and two-dimensional histograms of actions for different adaptation rates ($\alpha \in \set{\text{0,0.003}}$ in (f), $\alpha\in\set{\text{0.3, 1}}$ in (g)) with game-theoretic equilibria overlaid (NE in (f), SE in (g)).
}
\end{figure}

\begin{figure}[t]
\centering
\includegraphics[width=\textwidth]{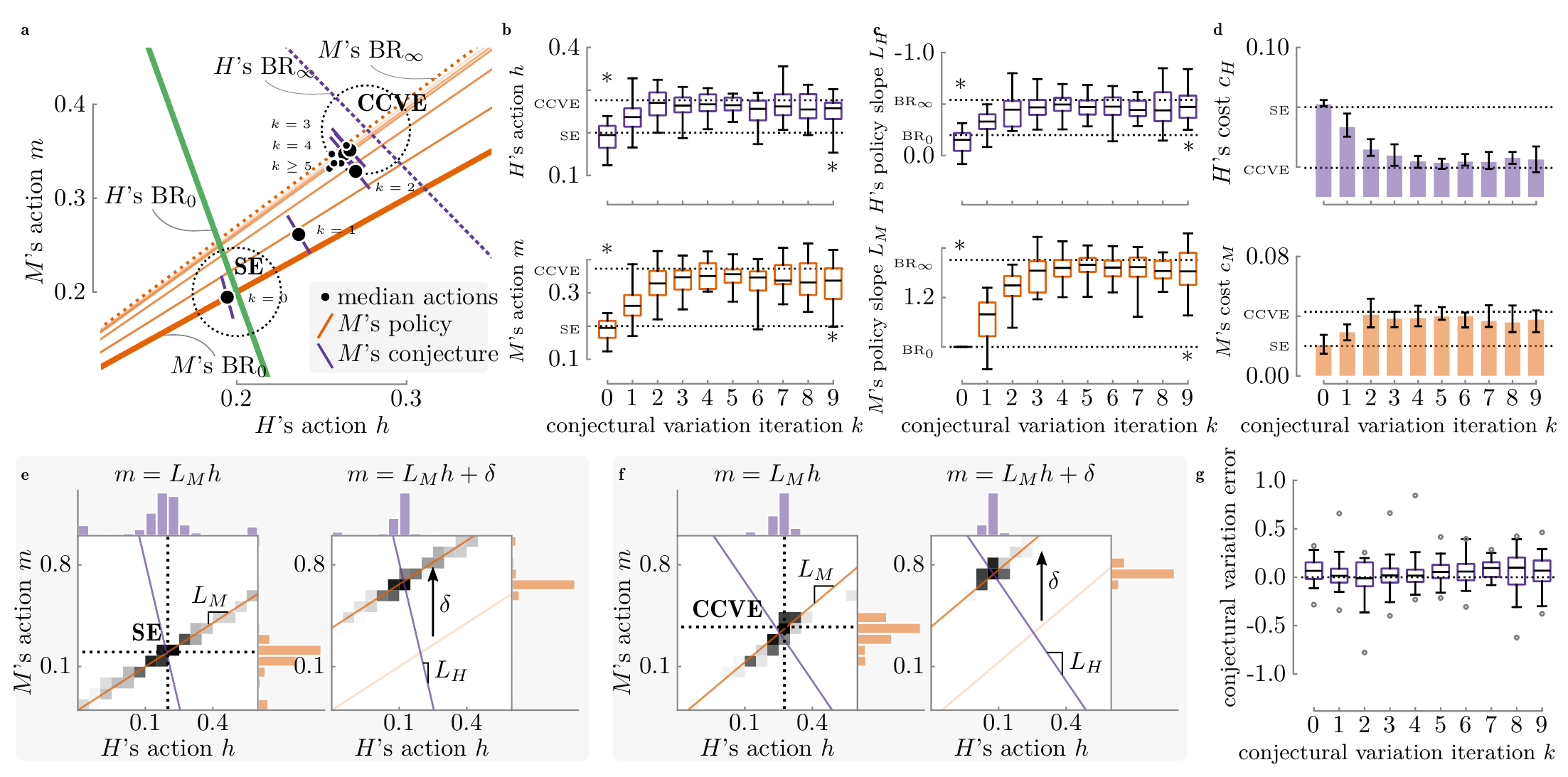}
  \caption[Experiment 2]{\label{fig:exp2}
  \textbf{Conjectural variation in policy space (Experiment 2, $n = 20$).}
  Experimental setup and costs are the same as~\figref{exp1}{a,b}
  except that the machine uses a different adaptation algorithm:
  in this experiment
  $M$ iteratively implements and updates affine policies $m = L_M h$, $m = L_M + \delta$ to measure and best-respond to conjectures of the human's policy.
  (\textbf{a}) Median actions, conjectures, and policies for each conjectural variation iteration $k$ overlaid on game-theoretic equilibria corresponding to best-responses (BR) at initial and limiting iterations (BR$_0$ and BR$_\infty$, respectively) predicted from Stackelberg and Consistent Conjectural Variations equilibria of the game (SE and CCVE), respectively. 
  (\textbf{b}) Action distributions for each iteration displayed by box-and-whiskers plots as in~\figref{exp1}d, 
  with statistical significance ($*$) 
  analogously determined using the same tests by 
  comparing 
  to SE (shown below distributions) and CCVE (above).
  (\textbf{c}) Policy slope distributions for each iteration displayed with the same conventions as (b); 
  note that the sign of the top $y$-axis is reversed for consistency with other plots.
  Statistical significance ($*$) 
  determined as in (b) by 
  comparing 
  to initial (shown below distributions) and limiting (above) best-responses 
  using two-sided $t$-tests ($^*P \leq 0.05$).
  (\textbf{d}) Cost distributions 
  for each iteration displayed using box-and-whiskers plots as in~\figref{exp1}e.
  (\textbf{e,f}) One- and two-dimensional histograms of actions for different iterations ($k=0$ in (e), $k=9$ in (f)) with policies and game-theoretic equilibria overlaid (SE and BR$_0$ in (e), CCVE and BR$_\infty$ in (f)).
  (\textbf{g})
  Error between measured 
  and theoretically-predicted 
  machine conjectures about human policies at each iteration displayed as box-and-whiskers plots as in (b,c).
}
\end{figure}

In our second experiment (\figref{exp2}), the machine played affine policies (i.e.\ $\actM$ was determined as an affine function of $\actH$) 
and adapted its policies by observing the human's response.
Trials 
came in pairs, with the machine's policy in each pair 
differing only in the constant term. 
After each pair of trials,
the machine used the median action vectors 
from the pair to estimate a \emph{conjecture}~\cite{bowley1924mathematical,Figuieres2004-ls} (or \emph{internal model}~\cite{nikolaidis2017game,Huang2018-nh,Wolpert1995-cr}) about the human's policy,
and the machine's policy 
was updated to be optimal with respect to this conjecture.
Unsurprisingly, the human adapted its own policy in response. 
Iterating this process shifted the distribution of median action vectors for a population of $n = 20$ human subjects (distinct from the population in the first experiment) from the \emph{human-led Stackelberg equilibrium} (SE) toward a
\emph{consistent conjectural variations equilibrium} (CCVE) in action and policy spaces (\subfigref{exp2}{a}).
\changelog{
The shift we observed away from SE toward CCVE from the first to last iteration 
was statistically significant in policy space
(\figref{exp2}c; $^*P \leq 0.05$; two-sided $t$-tests, degrees of freedom (df) $19$; exact statistics in Table~\ref{tab:pvalues})
but \emph{not} action space 
(\figref{exp2}b; $^*P \leq 0.05$; two-sided $t$-tests, df $19$; exact statistics in Table~\ref{tab:pvalues}).
}%
This shift was in favor of the human at the machine's expense (\figref{exp2}d).
The machines' empirical conjectures were not significantly different from theoretical predictions of human policies at all conjectural variation iterations 
(\figref{exp2}g; $P > 0.05$; two-sided $t$-tests, df $19$; exact statistics in Table~\ref{tab:pvalues}),
suggesting that both humans and machines estimated consistent conjectures of their opponent.

\begin{figure}[t]
\centering
\includegraphics[width=\textwidth]{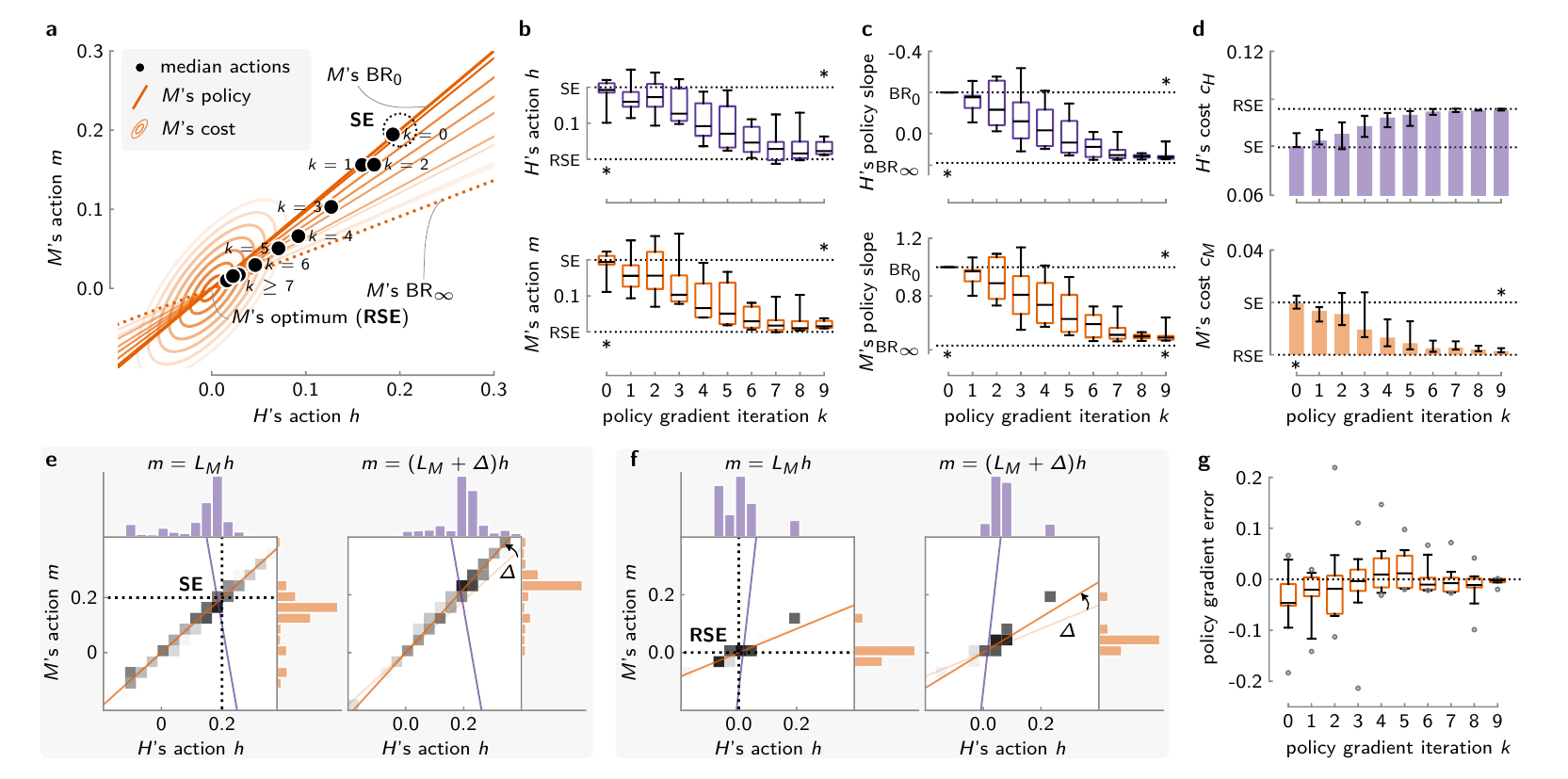}
  \caption[Experiment 3]{\label{fig:exp3}
  \textbf{Gradient descent in policy space (Experiment 3, $n = 20$).} 
  Experimental setup and costs are the same as~\figref{exp1}a,b except that the machine uses a different adaptation algorithm:
  in this experiment, $M$ iteratively implements linear policies $m = L_M h$, $m = (L_M + \Delta)h$ to measure the gradient of its cost with respect to its policy slope parameter $L_M$ and updates this parameter to descend its cost landscape.
  (\textbf{a}) 
  Median actions and policies for each policy gradient iteration $k$ overlaid on game-theoretic equilibria corresponding to machine best-responses (BR) at initial and limiting iterations (BR$_0$ and BR$_\infty$, respectively) predicted from the Stackelberg equilibrium (SE) and the machine's global optimum (RSE), respectively.
  (\textbf{b}) 
  Action distributions for each iteration displayed by box-and-whiskers plots as in~\figref{exp1}d, with statistical significance ($*$) analogously determined using the same tests by comparing to SE (shown above distributions) and $M$'s optimum (shown below distributions)
  using two-sided $t$-tests ($^*P \leq 0.05$);
  (\textbf{c}) 
  Policy slope distributions for each iteration displayed with the same conventions as (b); 
  note that the sign of the top subplot's $y$-axis is reversed for consistency with other plots.
  Statistical significance ($*$) determined as in (b) by comparing to SE (shown above distributions) 
  and RSE (below)
  using two-sided $t$-tests ($^*P \leq 0.05$).
  (\textbf{d}) 
  Cost distributions for each iteration displayed using box-and-whiskers plots as in~Figures~\ref{fig:exp1}e and~\ref{fig:exp2}d. 
  (\textbf{e,f}) One- and two-dimensional histograms of actions for different iterations ($k=0$ in (e), $k=9$ in (f)) with policies and game-theoretic equilibria overlaid (SE in (e), RSE in (f)).
  (\textbf{g}) 
  Error between measured and theoretically-predicted policy slopes at each iteration displayed as box-and-whiskers plots as in (b,c).
}
\end{figure}

In our third experiment (\figref{exp3}), the machine adapted its affine policy using a \emph{policy gradient} strategy~\cite{Chasnov2020-yo}. 
Trials 
again came in pairs, with the machine's policy in each pair 
differing this time only in the linear term. 
After a pair of trials,
the median costs of the trials were used to estimate the gradient of the machine's cost with respect to the linear term in its policy%
, and the linear term was adjusted in the direction opposing the gradient to decrease the cost.
Iterating this process shifted the distribution of median action vectors for a population of human subjects (distinct from the populations in the first two experiments) from the \emph{human-led Stackelberg equilibrium} (SE) toward the machine's \emph{global {\optimum}} (\subfigref{exp3}a), 
which can also be regarded as a \emph{reverse Stackelberg equilibrium}~\cite{Ho1982-eu} (RSE),
this time optimizing the machine's cost at the human's expense (\subfigref{exp3}d).
The shift we observed away from SE toward RSE from the first to last iterations was statistically significant in action space 
(\figref{exp3}b; $^*P \leq 0.05$; two-sided $t$-tests, df $19$; exact statistics in Table~\ref{tab:pvalues})
while the final policy distribution was significantly different from both SE and RSE policies 
(\figref{exp3}c; $^*P \leq 0.05$; two-sided $t$-tests, df $19$; exact statistics in Table~\ref{tab:pvalues}).
However, 
the machines' empirical policy gradients were not significantly different from theoretically-predicted values 
(\figref{exp3}g; $P > 0.05$; two-sided $t$-tests, df $19$; exact statistics in Table~\ref{tab:pvalues}),
and the final distribution of machine costs were not significantly different from the optimal value 
(\figref{exp3}d; $P > 0.05$; one-sided $t$-tests, df $19$; exact statistics in Table~\ref{tab:pvalues}),
suggesting that the machine can accurately estimate its policy gradient 
and 
minimize its cost.
In essence, the machine elevated its play by reasoning in the space of policies to steer the game outcome in this experiment to the point it desires in the joint action space.
We report results from variations of this experiment with different initializations and machine {\optima} in~Extended~Data~(Sections~\ref{sec:exp3-other-init},~\ref{sec:exp3-other-optima}).

\subsection*{Discussion}
When the machine played any policy in our experiments (i.e.\ when the machine's action $m$ was determined as a function of the human's action $h$), it effectively imposed a constraint on the human's optimization problem.
The policy could arise indirectly, as in the first experiment where the machine descended the gradient of its cost at a fast rate, or be employed directly, as in the second and third experiments.
In all three experiments, the empirical distributions of human actions or policies were consistent with the analytical solution of the human's constrained optimization problem for each machine policy 
(\subfigref{exp1}{d}; \subfigref{exp2}{b,c}; \subfigref{exp3}{b,c}).
This finding is significant because it shows that optimality of human behavior was robust with respect to the cost we prescribed and the constraints the machine imposed,
indicating our results may generalize to other settings where people (approximately) optimize their own utility function.
We report results from variations of all three experiments with non-quadratic cost functions in the Supplement (Section 
\ref{sec:exp-cobb}).

There is an exciting prospect for adaptive machines to assist humans in work and activities of daily living as
tele- or co-robots~\cite{nikolaidis2017game},
interfaces between computers and the brain or body~\cite{Perdikis2020-ze,De_Santis2021-aw},
and devices like exoskeletons or prosthetics~\cite{Felt2015-qc,Zhang2017-jq,slade2022personalizing}.
But designing adaptive algorithms that play well with humans -- who are constantly learning from and adapting to their world -- remains an open problem in robotics, neuroengineering, and machine learning~\cite{nikolaidis2017game,Recht2019-vf,Perdikis2020-ze}.
We 
validated game-theoretic
methods for machines to 
provide assistance by shaping  
outcomes during co-adaptive interactions with human partners.
Importantly, our methods do not entail solving an inverse optimization problem~\cite{li2019differential,Ng2000-aj} -- rather than estimating the human's cost function, our machines learn directly from human actions.
This feature may be valuable in the context of the emerging \emph{body-/human-in-the-loop optimization} paradigm for assistive devices~\cite{Felt2015-qc,Zhang2017-jq,slade2022personalizing}, where the machine's cost is deliberately chosen with deference to the human's metabolic energy consumption~\cite{Abram2022-rg} or other preferences~\cite{Ingraham2022-rj}.

Our results demonstrate the power of machines in co-adaptive interactions played with human opponents.
Although humans responded rationally at one level by choosing optimal actions in each experiment, the machine was able to ``outsmart'' its opponents over the course of the three experiments by playing higher-level games in the space of policies.
This machine advantage could be mitigated if the human rises to the same level of reasoning, but the machine could then go higher still, theoretically leading to a well-known infinite regress~\cite{Harsanyi1967}. 
We did not observe this regress in practice, possibly due to bounds on the computational resources available to our human subjects as well as our machines~\cite{gershman2015computational}.

\subsection*{Conclusion}
As machine algorithms permeate more aspects of daily life, it is important to understand the influence they can exert on humans to prevent undesirable behavior, ensure accountability, and maximize benefit to individuals and society~\cite{Thomas2019-qt, Cooper2022-wv}.
Although the capabilities of humans and machines alike are constrained by the resources available to them,
there are well-known limits on human rationality~\cite{Tversky1974-rp}
whereas machines benefit from sustained increases in computational resources, training data, and algorithmic innovation~\cite{Hilbert2011-ap, Jordan2015-tb}.
Here we showed that machines can unilaterally change their learning strategy to select from a wide range of theoretically-predicted outcomes in co-adaptation games played with human subjects.
Thus machine learning algorithms may have the power to aid human partners, for instance by supporting decision-making or providing assistance when someone's movement is impaired.
But when machine goals are misaligned with those of people, it may be necessary to impose limitations on algorithms to ensure the 
safety, 
autonomy,
and 
well-being
of people.

\bibliography{refs}

\begin{thebibliography}{47}
\providecommand{\natexlab}[1]{#1}
\providecommand{\url}[1]{\texttt{#1}}
\expandafter\ifx\csname urlstyle\endcsname\relax
  \providecommand{\doi}[1]{doi: #1}\else
  \providecommand{\doi}{doi: \begingroup \urlstyle{rm}\Url}\fi

\bibitem[Abram et~al.(2022)Abram, Poggensee, S{\'a}nchez, Simha, Finley,
  Collins, and Donelan]{Abram2022-rg}
Sabrina~J Abram, Katherine~L Poggensee, Natalia S{\'a}nchez, Surabhi~N Simha,
  James~M Finley, Steven~H Collins, and J~Maxwell Donelan.
\newblock General variability leads to specific adaptation toward optimal
  movement policies.
\newblock \emph{Current biology: CB}, 32\penalty0 (10):\penalty0 2222--2232.e5,
  May 2022.
\newblock \doi{10.1016/j.cub.2022.04.015}.

\bibitem[Ba{\c{s}}ar and Olsder(1998)]{bacsar1998dynamic}
Tamer Ba{\c{s}}ar and Geert~Jan Olsder.
\newblock \emph{Dynamic noncooperative game theory}.
\newblock SIAM, 1998.

\bibitem[Ba{\c{s}}ar and Selbuz(1979)]{basar1979closed}
Tamer Ba{\c{s}}ar and Hasan Selbuz.
\newblock Closed-loop {Stackelberg} strategies with applications in the optimal
  control of multilevel systems.
\newblock \emph{IEEE Transactions on Automatic Control}, 24\penalty0
  (2):\penalty0 166--179, 1979.

\bibitem[Bertsekas(1999)]{Bertsekas1999-an}
Dimitri~P Bertsekas.
\newblock \emph{Nonlinear Programming}.
\newblock Athena Scientific, 2nd edition, 1999.

\bibitem[Bowley(1924)]{bowley1924mathematical}
Arthur~Lyon Bowley.
\newblock \emph{The mathematical groundwork of economics: an introductory
  treatise}.
\newblock Clarendon Press, 1924.

\bibitem[Chasnov et~al.(2020)Chasnov, Ratliff, Mazumdar, and
  Burden]{Chasnov2020-yo}
Benjamin Chasnov, Lillian Ratliff, Eric Mazumdar, and Samuel~A. Burden.
\newblock Convergence analysis of gradient-based learning in continuous games.
\newblock In \emph{Conference on Uncertainty in Artificial Intelligence
  ({UAI})}, volume 115 of \emph{Proceedings of Machine Learning Research},
  pages 935--944, 2020.

\bibitem[Cooper et~al.(2022)Cooper, Moss, Laufer, and
  Nissenbaum]{Cooper2022-wv}
A~Feder Cooper, Emanuel Moss, Benjamin Laufer, and Helen Nissenbaum.
\newblock Accountability in an algorithmic society: Relationality,
  responsibility, and robustness in machine learning.
\newblock In \emph{{ACM} Conference on Fairness, Accountability, and
  Transparency (FAccT)}, pages 864--876, June 2022.
\newblock \doi{10.1145/3531146.3533150}.

\bibitem[De~Santis(2021)]{De_Santis2021-aw}
Dalia De~Santis.
\newblock A framework for optimizing co-adaptation in body-machine interfaces.
\newblock \emph{Frontiers in Neurorobotics}, 15:\penalty0 40, 2021.
\newblock \doi{10.3389/fnbot.2021.662181}.

\bibitem[Debreu(1954)]{Debreu1954-ip}
Gerard Debreu.
\newblock Valuation equilibrium and {Pareto} optimum.
\newblock \emph{Proceedings of the National Academy of Sciences of the United
  States of America}, 40\penalty0 (7):\penalty0 588--592, 1954.
\newblock \doi{10.1073/pnas.40.7.588}.

\bibitem[Felt et~al.(2015)Felt, Selinger, Donelan, and Remy]{Felt2015-qc}
Wyatt Felt, Jessica~C Selinger, J~Maxwell Donelan, and C~David Remy.
\newblock ``{Body-In-The-Loop''}: Optimizing device parameters using measures
  of instantaneous energetic cost.
\newblock \emph{PLoS ONE}, 10\penalty0 (8):\penalty0 e0135342, 2015.
\newblock \doi{10.1371/journal.pone.0135342}.

\bibitem[Fiez et~al.(2020)Fiez, Chasnov, and Ratliff]{fiez2020implicit}
Tanner Fiez, Benjamin Chasnov, and Lillian Ratliff.
\newblock Implicit learning dynamics in {Stackelberg} games: Equilibria
  characterization, convergence analysis, and empirical study.
\newblock In \emph{ACM International Conference on Machine Learning (ICML)},
  pages 3133--3144. PMLR, 2020.

\bibitem[Figui{\`{e}}res et~al.(2004)Figui{\`{e}}res, Jean-Marie, Qu{\'{e}}rou,
  and Tidball]{Figuieres2004-ls}
Charles Figui{\`{e}}res, Alain Jean-Marie, Nicolas Qu{\'{e}}rou, and Mabel
  Tidball.
\newblock \emph{Theory of Conjectural Variations}.
\newblock World Scientific, 2004.
\newblock ISBN 9789812387363.

\bibitem[Gershman et~al.(2015)Gershman, Horvitz, and
  Tenenbaum]{gershman2015computational}
Samuel~J Gershman, Eric~J Horvitz, and Joshua~B Tenenbaum.
\newblock Computational rationality: A converging paradigm for intelligence in
  brains, minds, and machines.
\newblock \emph{Science}, 349\penalty0 (6245):\penalty0 273--278, 2015.

\bibitem[Goodfellow et~al.(2014)Goodfellow, Pouget-Abadie, Mirza, Xu,
  Warde-Farley, Ozair, Courville, and Bengio]{Goodfellow2014-yh}
Ian Goodfellow, Jean Pouget-Abadie, Mehdi Mirza, Bing Xu, David Warde-Farley,
  Sherjil Ozair, Aaron Courville, and Yoshua Bengio.
\newblock Generative adversarial nets.
\newblock In Z~Ghahramani, M~Welling, C~Cortes, N~D Lawrence, and K~Q
  Weinberger, editors, \emph{Advances in Neural Information Processing Systems
  (NeurIPS)}, volume~27, pages 2672--2680. Curran Associates, Inc., 2014.

\bibitem[Groot et~al.(2013)Groot, De~Schutter, and
  Hellendoorn]{groot2013thesis}
Nore~Berta Groot, Bart De~Schutter, and Johannes Hellendoorn.
\newblock \emph{Reverse {Stackelberg} Games: Theory and Applications in Traffic
  Control}.
\newblock PhD thesis, Delft University of Technology, 2013.

\bibitem[Harsanyi(1967)]{Harsanyi1967}
John~C Harsanyi.
\newblock Games with incomplete information played by ``{B}ayesian'' players,
  {I-III}.
\newblock \emph{Management science}, 8:\penalty0 159--182, 320--334, 486--502,
  1967.

\bibitem[Hart and Staveland(1988)]{hart1988development}
Sandra~G Hart and Lowell~E Staveland.
\newblock Development of {NASA}-{TLX} ({Task Load Index}): Results of empirical
  and theoretical research.
\newblock \emph{Advances in psychology}, 52:\penalty0 139--183, 1988.

\bibitem[Heald et~al.(2021)Heald, Lengyel, and Wolpert]{heald2021contextual}
James~B Heald, M{\'a}t{\'e} Lengyel, and Daniel~M Wolpert.
\newblock Contextual inference underlies the learning of sensorimotor
  repertoires.
\newblock \emph{Nature}, 600\penalty0 (7889):\penalty0 489--493, 2021.

\bibitem[Hilbert and L{\'o}pez(2011)]{Hilbert2011-ap}
Martin Hilbert and Priscila L{\'o}pez.
\newblock The world's technological capacity to store, communicate, and compute
  information.
\newblock \emph{Science}, 332\penalty0 (6025):\penalty0 60--65, April 2011.
\newblock \doi{10.1126/science.1200970}.

\bibitem[Ho et~al.(1981)Ho, Luh, and Muralidharan]{Ho1981-wu}
Yu-Chi Ho, P~Luh, and R~Muralidharan.
\newblock Information structure, {S}tackelberg games, and incentive
  controllability.
\newblock \emph{IEEE Transactions on Automatic Control}, 26\penalty0
  (2):\penalty0 454--460, April 1981.
\newblock \doi{10.1109/TAC.1981.1102652}.

\bibitem[Ho et~al.(1982)Ho, Luh, and Olsder]{Ho1982-eu}
Yu-Chi Ho, Peter~B Luh, and Geert~Jan Olsder.
\newblock A control-theoretic view on incentives.
\newblock \emph{Automatica}, 18\penalty0 (2):\penalty0 167--179, March 1982.
\newblock \doi{10.1016/0005-1098(82)90106-6}.

\bibitem[Huang et~al.(2018)Huang, Isidori, Marconi, Mischiati, Sontag, and
  Wonham]{Huang2018-nh}
J~Huang, A~Isidori, L~Marconi, M~Mischiati, E~Sontag, and W~M Wonham.
\newblock Internal models in control, biology and neuroscience.
\newblock In \emph{{IEEE} Conference on Decision and Control ({CDC})}, pages
  5370--5390, December 2018.
\newblock \doi{10.1109/CDC.2018.8619624}.

\bibitem[Ingraham et~al.(2022)Ingraham, Remy, and Rouse]{Ingraham2022-rj}
Kimberly~A Ingraham, C~David Remy, and Elliott~J Rouse.
\newblock The role of user preference in the customized control of robotic
  exoskeletons.
\newblock \emph{Science Robotics}, 7\penalty0 (64):\penalty0 eabj3487, March
  2022.
\newblock \doi{10.1126/scirobotics.abj3487}.

\bibitem[Jordan and Mitchell(2015)]{Jordan2015-tb}
Michael~I Jordan and Tom~M Mitchell.
\newblock Machine learning: Trends, perspectives, and prospects.
\newblock \emph{Science}, 349\penalty0 (6245):\penalty0 255--260, July 2015.
\newblock \doi{10.1126/science.aaa8415}.

\bibitem[Li et~al.(2019)Li, Carboni, Gonzalez, Campolo, and
  Burdet]{li2019differential}
Yanan Li, Gerolamo Carboni, Franck Gonzalez, Domenico Campolo, and Etienne
  Burdet.
\newblock Differential game theory for versatile physical human-robot
  interaction.
\newblock \emph{Nature Machine Intelligence}, 1\penalty0 (1):\penalty0 36--43,
  2019.

\bibitem[Ma et~al.(2019)Ma, Chen, Jin, Flammarion, and Jordan]{Ma2019-bh}
Yi-An Ma, Yuansi Chen, Chi Jin, Nicolas Flammarion, and Michael~I Jordan.
\newblock Sampling can be faster than optimization.
\newblock \emph{Proceedings of the National Academy of Sciences of the United
  States of America}, 116\penalty0 (42):\penalty0 20881--20885, October 2019.
\newblock \doi{10.1073/pnas.1820003116}.

\bibitem[Madduri et~al.(2021)Madduri, Burden, and Orsborn]{madduri2021game}
Maneeshika~M Madduri, Samuel~A Burden, and Amy~L Orsborn.
\newblock A game-theoretic model for co-adaptive brain-machine interfaces.
\newblock In \emph{IEEE/EMBS Conference on Neural Engineering (NER)}, pages
  327--330. IEEE, 2021.

\bibitem[Mehrabi et~al.(2021)Mehrabi, Morstatter, Saxena, Lerman, and
  Galstyan]{mehrabi2021survey}
Ninareh Mehrabi, Fred Morstatter, Nripsuta Saxena, Kristina Lerman, and Aram
  Galstyan.
\newblock A survey on bias and fairness in machine learning.
\newblock \emph{ACM Computing Surveys (CSUR)}, 54\penalty0 (6):\penalty0 1--35,
  2021.

\bibitem[Nash(1950)]{Nash1950-aq}
John~F Nash.
\newblock Equilibrium points in {$N$-Person} games.
\newblock \emph{Proceedings of the National Academy of Sciences of the United
  States of America}, 36\penalty0 (1):\penalty0 48--49, January 1950.
\newblock \doi{10.1073/pnas.36.1.48}.

\bibitem[Ng and Russell(2000)]{Ng2000-aj}
Andrew~Y Ng and Stuart~J Russell.
\newblock Algorithms for inverse reinforcement learning.
\newblock In \emph{{ACM} International Conference on Machine Learning (ICML)},
  pages 663--670, 2000.

\bibitem[Nikolaidis et~al.(2017)Nikolaidis, Nath, Procaccia, and
  Srinivasa]{nikolaidis2017game}
Stefanos Nikolaidis, Swaprava Nath, Ariel~D Procaccia, and Siddhartha
  Srinivasa.
\newblock Game-theoretic modeling of human adaptation in human-robot
  collaboration.
\newblock In \emph{ACM/IEEE Conference on Human-Robot Interaction (HRI)}, pages
  323--331, 2017.

\bibitem[Palan and Schitter(2018)]{Palan_Schitter_2018}
Stefan Palan and Christian Schitter.
\newblock Prolific.ac—a subject pool for online experiments.
\newblock \emph{Journal of Behavioral and Experimental Finance}, 17:\penalty0
  22–27, Mar 2018.
\newblock ISSN 2214-6350.
\newblock \doi{10.1016/j.jbef.2017.12.004}.

\bibitem[Perdikis and d.~R.~Mill{\'{a}}n(2020)]{Perdikis2020-ze}
Serafeim Perdikis and Jos{\'{e}} d.~R.~Mill{\'{a}}n.
\newblock {Brain-Machine} interfaces: A tale of two learners.
\newblock \emph{IEEE Systems, Man, and Cybernetics Magazine}, 6\penalty0
  (3):\penalty0 12--19, July 2020.
\newblock \doi{10.1109/MSMC.2019.2958200}.

\bibitem[Ratliff et~al.(2016)Ratliff, Burden, and
  Sastry]{ratliff2016characterization}
Lillian~J Ratliff, Samuel~A Burden, and S~Shankar Sastry.
\newblock On the characterization of local {Nash} equilibria in continuous
  games.
\newblock \emph{IEEE Transactions on Automatic Control}, 61\penalty0
  (8):\penalty0 2301--2307, 2016.

\bibitem[Recht(2019)]{Recht2019-vf}
Benjamin Recht.
\newblock A tour of reinforcement learning: The view from continuous control.
\newblock \emph{Annual Review of Control, Robotics, and Autonomous Systems
  ({ARCRAS})}, 2\penalty0 (1):\penalty0 253--279, May 2019.
\newblock \doi{10.1146/annurev-control-053018-023825}.

\bibitem[Slade et~al.(2022)Slade, Kochenderfer, Delp, and
  Collins]{slade2022personalizing}
Patrick Slade, Mykel~J. Kochenderfer, Scott~L. Delp, and Steven~H. Collins.
\newblock Personalizing exoskeleton assistance while walking in the real world.
\newblock \emph{Nature}, 610\penalty0 (79317931):\penalty0 277–282, Oct 2022.
\newblock ISSN 1476-4687.
\newblock \doi{10.1038/s41586-022-05191-1}.

\bibitem[Sutton et~al.(2020)Sutton, Pincock, Baumgart, Sadowski, Fedorak, and
  Kroeker]{sutton2020overview}
Reed~T Sutton, David Pincock, Daniel~C Baumgart, Daniel~C Sadowski, Richard~N
  Fedorak, and Karen~I Kroeker.
\newblock An overview of clinical decision support systems: benefits, risks,
  and strategies for success.
\newblock \emph{NPJ digital medicine}, 3\penalty0 (1):\penalty0 1--10, 2020.

\bibitem[Taylor et~al.(2014)Taylor, Krakauer, and Ivry]{taylor2014explicit}
Jordan~A Taylor, John~W Krakauer, and Richard~B Ivry.
\newblock Explicit and implicit contributions to learning in a sensorimotor
  adaptation task.
\newblock \emph{Journal of Neuroscience}, 34\penalty0 (8):\penalty0 3023--3032,
  2014.

\bibitem[Thomas et~al.(2019)Thomas, Castro~da Silva, Barto, Giguere, Brun, and
  Brunskill]{Thomas2019-qt}
Philip~S Thomas, Bruno Castro~da Silva, Andrew~G Barto, Stephen Giguere, Yuriy
  Brun, and Emma Brunskill.
\newblock Preventing undesirable behavior of intelligent machines.
\newblock \emph{Science}, 366\penalty0 (6468):\penalty0 999--1004, November
  2019.
\newblock \doi{10.1126/science.aag3311}.

\bibitem[Tversky and Kahneman(1974)]{Tversky1974-rp}
Amos Tversky and Daniel Kahneman.
\newblock Judgment under uncertainty: Heuristics and biases.
\newblock \emph{Science}, 185\penalty0 (4157):\penalty0 1124--1131, September
  1974.
\newblock \doi{10.1126/science.185.4157.1124}.

\bibitem[Van~Rossum and Drake(2009)]{rossum2009python}
Guido Van~Rossum and Fred~L. Drake.
\newblock \emph{Python 3 Reference Manual}.
\newblock CreateSpace, Scotts Valley, CA, 2009.
\newblock ISBN 1441412697.

\bibitem[Varian(1992)]{varian1992microeconomic}
Hal~R Varian.
\newblock \emph{Microeconomic analysis}.
\newblock Norton \& Company, 1992.

\bibitem[Von~Neumann and Morgenstern(1947)]{von1947theory}
John Von~Neumann and Oskar Morgenstern.
\newblock \emph{Theory of games and economic behavior}.
\newblock Princeton University Press, 1947.

\bibitem[von Stackelberg(1934)]{Von_Stackelberg1934-xc}
Heinrich von Stackelberg.
\newblock \emph{Marktform und Gleichgewicht}.
\newblock Springer, 1934.

\bibitem[Wolpert et~al.(1995)Wolpert, Ghahramani, and Jordan]{Wolpert1995-cr}
Daniel~M Wolpert, Zoubin Ghahramani, and Michael~I Jordan.
\newblock An internal model for sensorimotor integration.
\newblock \emph{Science}, 269\penalty0 (5232):\penalty0 1880--1882, 1995.
\newblock \doi{10.1126/science.7569931}.

\bibitem[Zhang et~al.(2017)Zhang, Fiers, Witte, Jackson, Poggensee, Atkeson,
  and Collins]{Zhang2017-jq}
Juanjuan Zhang, Pieter Fiers, Kirby~A Witte, Rachel~W Jackson, Katherine~L
  Poggensee, Christopher~G Atkeson, and Steven~H Collins.
\newblock Human-in-the-loop optimization of exoskeleton assistance during
  walking.
\newblock \emph{Science}, 356\penalty0 (6344):\penalty0 1280--1284, June 2017.
\newblock \doi{10.1126/science.aal5054}.

\bibitem[Zheng and Ba{\c{s}}ar(1982)]{zheng1982existence}
Ying-Ping Zheng and Tamer Ba{\c{s}}ar.
\newblock Existence and derivation of optimal affine incentive schemes for
  {Stackelberg} games with partial information: A geometric approach.
\newblock \emph{International Journal of Control}, 35\penalty0 (6):\penalty0
  997--1011, 1982.

\end{thebibliography}
\clearpage

\newpage

\section*{Methods}

\paragraph{Experimental protocol.}
Human subjects were recruited using an online crowd-sourcing research platform 
\emph{Prolific}~\cite{Palan_Schitter_2018}.
Experiments were conducted using procedures approved by the University of Washington Institutional Review Board (UW IRB STUDY00013524).
Participant data were collected on a secure web server. 
Each experiment consisted of a sequence of 
trials:  14 
trials in the first experiment, 20 trials in the second and third experiments.
During each trial, participants used a web browser to view a graphical interface and provide manual input from a mouse or touchscreen to 
continually
determine the value of a scalar action $\actH\in\R$.  
This cursor input was scaled to the width of the participant's web browser window such that $h = -1$ corresponded to the left edge and $h = +1$ corresponded to the right edge.
Data were collected at 60 samples per second for a duration of 40 seconds per trial in the first experiment and 20 seconds per trial in the second and third experiments.
Human subjects were selected from the ``standard sample'' study distribution from all countries available on Prolific. 
Each subject participated in only one of the three experiments.
No other screening criteria were applied.

At the beginning of each experiment, an introduction screen was presented to participants with the task description and user instructions.
At the beginning of each trial, participants were instructed to move the cursor to a randomly-determined position.  
This procedure was used to introduce randomness in the experiment initialization and to assess participant attention. 
Throughout each trial, a rectangle's height displayed the current value of the human's cost $c_H(h,m)$ 
and participant was instructed to ``keep this [rectangle] as \emph{small} as possible'' 
by choosing an action $h\in\R$ while the machine updated its action $\actM\in\R$. 
A square root function was applied to cost values to make it easier for participants to perceive small differences in low cost values.
After a fixed duration, one trial ended and the next trial began. 
Participants were offered the opportunity to take a rest break for half a minute between every three trials.
The experiment ended after a fixed number of trials.
Afterward, the participant filled out a {task load survey~\cite{hart1988development}}
and optional feedback form. 
Each experiment lasted approximately 10--14 minutes and the participants received a fixed compensation of \$2 USD (all data was collected in 2020).
A video illustrating the first three trials of Experiment 1 is provided as Movie S1. 
The user interface presented to human subjects was identical in all experiments.  
However, the machine adapted its action and policy throughout each experiment, and the adaptation algorithm differed in each experiment.

\begin{figure}[ht]
\centering
\includegraphics[width=\textwidth]{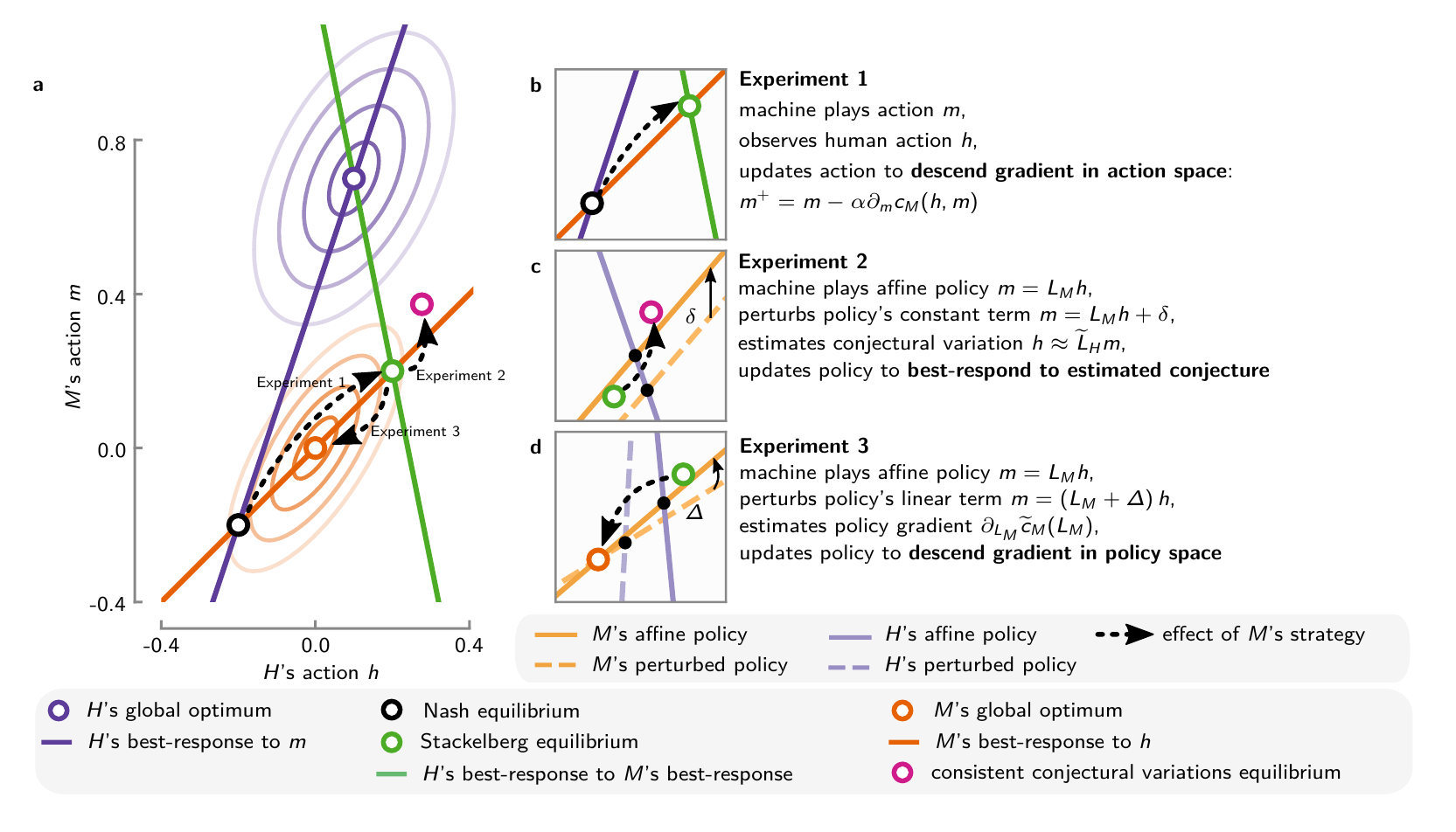}
  \caption[Co-adaptation game between human and machine]{\label{fig:overview}
  \scriptsize
  \textbf{Overview of co-adaptation experiment between human and machine.} 
 Human subject $H$ is instructed to provide manual input $\actH$ to make a black bar on a computer display as small as possible.
The machine $M$ has its own prescribed cost $\costM$ chosen to yield game-theoretic equilibria that are distinct from each other and from each player's global optima.
  (\textbf{a}) Joint action space illustrating game-theoretic equilibria and response functions determined from the costs prescribed to human and machine: \emph{global {\optima}} defined by minimizing with respect to both variables; \emph{best-response} functions defined by fixing one variable and minimizing with respect to the other. 
   Machine plays different strategies in three experiments:
   (\textbf{b}) gradient descent in \emph{Experiment 1};
   (\textbf{c})  conjectural variation in \emph{Experiment 2};
   (\textbf{d})  policy gradient descent in \emph{Experiment 3}.
  }
\end{figure}

\paragraph{Cost functions.}
In Experiments 1, 2, and 3,
participants were prescribed the quadratic cost function
\begin{linenomath}
\eqnn{costH}{
  \costH(\actH,\actM)
  &=\tfrac{1}{2}\actH^2 + \tfrac{7}{30}\actM^2 -\tfrac{1}{3}hm + \tfrac{2}{15}h-\tfrac{22}{75}m  + \tfrac{12}{125};
}
\end{linenomath}
the machine optimized the quadratic cost function
\begin{linenomath}
\eqnn{costM}{
  \costM(\actH,\actM)= \tfrac{1}{2}m^2+h^2-hm. 
}
These costs were designed such that the players' optima
and the constellation of relevant game-theoretic equilibria were distinct positions as listed in the Table~\ref{tab:equilibrium}.
During each trial of an experiment, the time series of actions from the trials were recorded as human actions $h_0,\dots,h_t,\dots,h_T$ and machine actions $m_0,\dots,m_t,\dots,m_T$, for a fixed number of samples $T$. 
At time $t$, the players experienced costs $c_H(h_t,m_t)$ and $c_M(h_t,m_t)$.
See Supplement~\secref{theory-def} for formal definitions of the relevant game-theoretic equilibria and 
 Supplement~\secref{derive-eq} for how the parameters for the costs were chosen.
\end{linenomath}

\paragraph{Experiment 1: gradient descent in action space.}
In the first experiment, the machine adapted its action using gradient descent,
\begin{linenomath}
\eqnn{exp1-gd}{
\actM^+ &= \actM-\alpha\, \partial_\actM \costM(\actH,\actM),
}
\end{linenomath}
with one of seven different choices of {\learningrate} $\alpha \in\set{0, 0.003, 0.01, 0.03, 0.1, 0.3, 1}$.
At the slowest adaptation rate $\alpha = 0$, the  machine implemented the constant policy $\actM = \actMNash$, which is the machine's component of the game's Nash equilibrium.
At the fastest adaptation rate $\alpha = 1$, the gradient descent iterations in~\eqref{eq:exp1-gd} are such that the machine implements the linear policy $\actM = \actH$. 
Each condition was experienced twice by each human subject, once per symmetry (described in the next paragraph), in randomized order.

To help prevent human subjects from memorizing the location of game equilibria, 
at the beginning of each trial 
a variable $s$ was chosen uniformly at random from $\set{-1,+1}$ and 
the map $\actH \mapsto s\, \actH$  was applied to the human subject's manual input for the duration of the trial. When the variable's value was $s = -1$, this had the effect of applying a ``mirror'' symmetry to the input.
The joint action was initialized uniformly at random in the square $[-0.4,+0.4] \times [-0.4,+0.4]\subset\R^2$. 
Each trial lasted 40 seconds.

\paragraph{Experiment 2: conjectural variation in policy space.}
In the second experiment, the machine adapted its policy by estimating a \emph{conjecture} about the human's \emph{policy}.
To collect the data that was used to form its estimate, the machine played an affine policy in two consecutive trials that differed solely in the constant term,
\eqnns{exp2:actM}{
\text{nominal policy}\ 
\actM  
\label{eq:exp2:actM:nom}
& = \ConjM \actH,\\
\text{perturbed policy}\ 
\actM^\prime  
\label{eq:exp2:actM:pert}
& = \ConjM \actH^\prime 
+ \delta.
}
The machine used the median action vectors $(\est{h},\est{m})$, $(\est{h}',\est{m}')$ from the pair of trials to estimate a conjecture about the human's policy using a ratio of differences, 
\begin{linenomath}
\eqnn{exp2:estConjH}{
\est{\Conj}_H &= \frac{\est{\actH}^{\prime}-\est{\actH}}{\est{\actM}^\prime-\est{\actM}},
}
\end{linenomath}
which is shown to be an 
 estimate of
the variation of the human's action in response to machine action
in Proposition~\ref{prop:exp2-cv} of Supplement~\secref{scalar-exp2}.
The machine used this estimate of the human's policy to update its policy as 
\begin{linenomath}
\eqnns{exp2:ConjM}{
L_M^+=\frac{1-2\est{L}_H}{1-\est{L}_H},
}
\end{linenomath}
which is shown to be the machine's best-response given its conjecture about the human's policy in Supplement~\secref{theory-scalar}.
In the next pair of trials, the machine employs $m=L_M^+ h+\ell_M^+$ as its policy.
This conjectural variation process was iterated 10 times starting from the initial conjecture $\est{L}_H = 0$, which yields the initial best-response policy $m = h$.

In this experiment, the machine's policy slopes $L_{M,0},L_{M,1},\dots,L_{M,k},\dots,L_{M,K-1}$ and the machine's conjectures about the human's policy slopes $\est{L}_{H,0},\est{L}_{H,1},\dots,\est{L}_{H,k},\dots,\est{L}_{H,K-1}$ were recorded for each conjectural variation iteration $k\in\set{0,\dots,K-1}$ where $K = 10\,\text{iterations}$. 
In addition, the time series of actions within each trial as in the first experiment, with each trial now lasting only 20 seconds, yielding $T=1200\,\text{samples}$ used to compute the median action vectors used in~\eqref{eq:exp2:estConjH}.

\paragraph{Experiment 3: gradient descent in policy space.}
In the third experiment, the machine adapted its policy using a policy gradient strategy by playing an affine policy in two consecutive trials that differed only in the linear term,%
\begin{linenomath}
\eqnns{exp3:actM}{%
\text{nominal policy}\ 
\actM  
\label{eq:exp3:actM:nom}
& = \ConjM \actH,\\
\text{perturbed policy}\ 
\actM^\prime  
\label{eq:exp3:actM:pert}
& = (\ConjM + \Delta) \actH^\prime.
}
\end{linenomath}
The machine used the median action vectors $(\est{h},\est{m})$, $(\est{h}',\est{m}')$ from the pair of trials to estimate the gradient of the machine's cost with respect to the linear term in its policy, 
and this linear term was adjusted to decrease the cost.
Specifically, an auxiliary cost was defined as
\begin{linenomath}
\eqnn{exp3:costM}{
\widetilde{\costM}(L_M):=\costM\paren{h,\ConjM (h-h_M^\ast)+m_M^\ast},
}
\end{linenomath}
and the pair of trials were used to obtain a finite-difference estimate of the gradient of the machine's cost with respect to the slope of the machine's policy, 
\begin{linenomath}
\eqnn{exp3:ConjH}{
&\partial_{L_M} \est{\costM}(L_M)
\approx \frac{1}{\Delta}\big(\widetilde{\costM}(L_M+\Delta)  - \widetilde{\costM}(L_M)
\big).
}
\end{linenomath}
 The machine used this derivative estimate to update the linear term in its policy by descending its cost gradient, 
\begin{linenomath}
\eqnn{exp3:ConjM}{ 
\ConjM^+ &= \ConjM - \gamma\, \partial_{\ConjM}\est{\costM}(L_M)\\ 
}
\end{linenomath}
where $\gamma$ is the policy gradient {\learningrate} parameter ($\gamma=2$ in this Experiment).
\paragraph{Statistical analyses.}
To determine the statistical significance of our results, we use one- or two-sided $t$-tests with threshold $P \leq 0.05$ applied to distributions of median data from populations of $n=20$ subjects. To estimate the effect size, we calculated Cohen's $d$ by subtracting the equilibrium value from the mean of the distribution then dividing that by the standard deviation of the distribution.

\begin{table}[t]
{\scriptsize 
\renewcommand{\arraystretch}{1.8}
\arrayrulecolor{gray}
\centering
{
\footnotesize
\vspace{5em} Cost functions and game-theoretic equilibria}
\begin{tabular*}{\linewidth}{|l|l@{\extracolsep{\fill}}l|}
\hline
\bf $H$'s cost function & \bf $M$'s cost function & \\
$c_H(h,m)=\tfrac{1}{2}\actH^2 + \tfrac{7}{30}\actM^2 -\tfrac{1}{3}hm + \tfrac{2}{15}h-\tfrac{22}{75}m  + \tfrac{12}{125}\qquad $  & $c_M(h,m)=\tfrac{1}{2}m^2+h^2-hm\qquad$ & \\
\hline
\end{tabular*}

\vspace{1ex}

\begin{tabular*}{\linewidth}{|l@{\extracolsep{\fill}}l@{\extracolsep{\fill}}l|}
\hline
   \bf  game-theoretic equilibria & \bf $H$'s and $M$'s actions & \bf $H$'s and $M$'s policy slopes \\
	$H$'s optimum & $(\Opt{\actH}_H,\Opt{\actM}_H) = (+0.1,+0.7)$ & \\
	$M$'s optimum & $(\Opt{\actH}_M,\Opt{\actM}_M) = (0,0)$ & \\
	Nash equilibrium & $(\Nash{\actH},\Nash{\actM}) = (-0.2,-0.2)$ & \\
	human-led Stackelberg equilibrium & $(\Stack{\actH},\Stack{\actM}) = (+0.2,+0.2)$ & $\Stack{L_H}=-0.2,\qquad\, \Stack{L_M}=1$\\
	consistent conjectural variations equilibrium & $(\Ccve{\actH},\Ccve{\actM}) \approx (0.276,0.373)$ & $\Ccve{L_H}\approx -0.54,\ \Ccve{L_M}\approx+1.35$ \\
	machine-led reverse Stackelberg equilibrium  & $(\RStack{\actH},\RStack{\actM})=(0,0)$ 
 & $\RStack{L_H}=1/7,\qquad\, \RStack{L_M}= 5/11$\\
 \ \ (equal to $M$'s optimum)&&\\
 \hline
\end{tabular*}
\caption[Cost functions and game-equilibria]{%
\textbf{Cost functions and game-theoretic equilibria of the game studied in Experiments 1, 2, and 3.} 
The Supplement details how the costs were chosen: 
Section~\ref{sec:derive-eq} describes 
the general approach, 
and
Section~\ref{subsec:choosing-parameters} specializes to the game studied here.
}
\label{tab:equilibrium}}

\end{table}

\subsection*{Data availability}
All data are publicly available in a Code Ocean capsule, {\tt codeocean.com/capsule/6975866}. 

\subsection*{Code availability}
The data and analysis scripts needed to reproduce all figures and statistical results reported in both the main paper and supplement are publicly available in a Code Ocean capsule, {\tt codeocean.com/ capsule/6975866}. 
The sourcecode used to conduct experiments on the Prolific platform are publicly available on GitHub, {\tt github.com/dynams/web}.


\subsection*{Acknowledgements}

This work was funded by the National Science Foundation (Awards \#1836819, \#2045014).
Benjamin Chasnov was funded in part by
Computational Neuroscience Graduate Training Program (NIH 5T90DA032436-09 MPI).

\subsection*{Author contributions}
B.J.C., 
L.J.R., 
and
S.A.B.\  were responsible for methodology design and manuscript preparation;
B.J.C.\ collected and analyzed experimental data and prepared figures.

\subsection*{Competing interests}
The authors declare no competing interests.

\subsection*{Additional information}
Supplementary Information is available for this paper.
Correspondence and requests for materials should be addressed to S.A.B. (E-mail:  {\tt sburden@uw.edu}).
\newpage

\clearpage

\renewcommand{\familydefault}{\sfdefault}

\clearpage
\newpage

\renewcommand{\thesection}{S\arabic{section}}
\renewcommand{\thesubsection}{S\arabic{section}.\arabic{subsection}}
\renewcommand{\thefigure}{S\arabic{figure}}
\renewcommand{\thetable}{S\arabic{table}}
\setcounter{figure}{0}
\setcounter{table}{0}

\title{Supplementary Information for \\
\emph{Human adaptation to adaptive machines \\
converges to game-theoretic equilibria}}

\maketitle

\vspace{-3em}

\noindent  
{\bf List of supplementary materials:}

Section S1 -- S7

Figure S1 -- S6

Table S1 -- S4

Protocol S1 -- S3

Sourcecode S0 -- S3

Movie S1

\section*{Summary of supplementary materials }
This Supplementary Information supports the claims in the main paper.

The formal mathematical 
definitions of the game-theoretic equilibrium solutions are in
\secref{theory-def}.
The parameters of a pair of quadratic costs are determined by the equilibrium solutions in
\secref{derive-eq}.
The analysis of the game from the main paper is provided in \secref{theory-scalar}. 
Experiments 1 on gradient descent in action space is analyzed in 
\secref{scalar-exp1}. 
Experiment 2 on conjectural variations in policy space is analyzed in
\secref{scalar-exp2}.
Experiment 3 on gradient descent in policy space is analyzed in 
\secref{scalar-exp3}.

Interpretations of the conjectural iteration are provided in
\secref{interpretation}. 
The related economic idea of comparative statics is described in
\secref{interpretation-comparative-statics} 
and Taylor approximation is used to characterize consistent conjectures in 
\secref{interpretation-basar}.

\newpage

\renewcommand{\contentsname}{Sections}
\renewcommand{\listfigurename}{Figures}
\renewcommand{\listtablename}{Tables}

\tableofcontents
\newpage
\listoffigures
\listoftables

\section*{Protocols}

\hspace{\parindent}S1\qquad  Algorithm description of Experiment 1 \dotfill\ \pageref{protocol:exp1}

S2\qquad Algorithm description of Experiment 2 \dotfill\ \pageref{protocol:exp2}

S3\qquad Algorithm description of Experiment 3 \dotfill\ \pageref{protocol:exp2}

\section*{Sourcecodes}

\hspace{\parindent}S0\qquad  Definitions of parameters, cost functions and gradients of two players \dotfill \pageref{sourcecode:exp0}

S1\qquad  Numerical simulation of Experiment 1 \dotfill\ \pageref{sourcecode:exp1}

S2\qquad Numerical simulation of Experiment 2 \dotfill\ \pageref{sourcecode:exp2} 

S3\qquad Numerical simulation of Experiment 3 \dotfill\ \pageref{sourcecode:exp3}

\newpage

\section{Game theory definitions}\label{sec:theory-def}

We model co-adaptation between humans and machines using game theory~\cite{von1947theory,bacsar1998dynamic}.
In this model, the human $H$ chooses action $\actH\in\ActH $ while the machine $M$ chooses action $\actM\in\ActM$ to minimize their respective \emph{cost functions} $\costH, \costM : \ActH\times\ActM \into\R$,
\eqnns{game}{
\min_{\actH}\ &\costH(\actH,\actM),\\ 
\min_{\actM}\ &\costM(\actH,\actM).
}
It is important to note that the optimization problems in~\eqref{eq:game} are coupled. 
Since both problems must be considered simultaneously, there is no obvious candidate for a ``solution'' concept (in contrast to the case of pure optimization problems, where (local) minimizers of the single cost function are the obvious goals).
Thus, we designed experiments to study a variety of candidate solution concepts that arise naturally in different contexts.
We demonstrate that Nash, Stackelberg, consistent conjectural variations equilibria, and players' global optima are possible outcomes of the experiments.

\subsection{Nash and Stackelberg equilibria}
In games with simultaneous play where players do not form conjectures about the others' policy, a natural candidate solution concept is the \emph{Nash equilibrium}~\citedef{4.1}{bacsar1998dynamic}.

\paragraph*{Definition:} The joint action 
$(\Nash{\actH},\Nash{\actM})\in\ActH\times\ActM$ 
constitutes a \emph{Nash equilibrium} (\NE) if 
\eqnns{NE}{
\Nash{\actH} &= \arg\min_{\actH} \costH(\actH,\Nash{\actM}),\\
\Nash{\actM} &= \arg\min_{\actM} \costM(\Nash{\actH},\actM). 
}

In games with ordered play where the \emph{leader} (e.g.\ human) has knowledge of how the \emph{follower} (e.g.\ machine) responds to choosing its own action,  a natural candidate solution concept is the \emph{(human-led) Stackelberg equilibrium}~\citedef{4.6}{bacsar1998dynamic}.
\paragraph*{Definition:} The joint action 
$(\Stack{\actH},\Stack{\actM})\in\ActH\times\ActM$ 
constitutes a \emph{(human-led) Stackelberg equilibrium} (\SE) if 
\eqnns{SE}{
\Stack{\actH} &= \arg\min_{\actH} \set{\costH\paren{\actH,\actM} \st \actM = \arg\min_{\actM'} \costM(\actH,\actM')},\\
\Stack{\actM} &= \arg\min_{\actM}\  \costM(\Stack{\actH},\actM).
}

\noindent
The Stackelberg equilibrium is a solution concept  
that arises when  one player (the leader) anticipates or models another player's (the follower's) best response.

\subsection{Consistent conjectural variations equilibria}

In repeated games where each player gets to observe the other's actions and policies, players may develop internal models or {conjectures} for how they expect the other to play.
A natural candidate solution concept in this case is the \emph{consistent conjectural variations equilibrium}~\citedef{4.9}{bacsar1998dynamic}.

For a given pair%
\footnote{We use the shorthand $\set{A \into B}$  to denote the set of functions from $A$ to $B$.}
$(\Ccve{\Hconj},\Ccve{\Mconj})\in\set{\ActM\to\ActH}\times\set{\ActH\to\ActM}$, 
denote the unique fixed points $(\Ccve{h},\Ccve{m})\in\ActH\times\ActM$ satisfying
\eqnns{ccve-fp-1}{
\Ccve{h} &= \Ccve{\Hconj}\circ\Ccve{\Mconj}(\Ccve{h}),\\
\Ccve{m} &= \Ccve{\Mconj}\circ\Ccve{\Hconj}(\Ccve{m}).
}
Let 
\eqnns{ccve-def-1}{
\Delta \Ccve{v}_H(m)&=\Ccve{\Hconj}(m)-\Ccve{\Hconj}(\Ccve{m}),\\
\Delta \Ccve{v}_M(h)&=\Ccve{\Mconj}(h)-\Ccve{\Mconj}(\Ccve{h}),
}
be the differential reactions of each player under their policies $(\Ccve{\Hconj},\Ccve{\Mconj})$ to a deviation from the joint action $(\Ccve{h},\Ccve{m})$ to $(m,h)$.

\paragraph*{Definition:} The joint action 
$(\Ccve{\actH},\Ccve{\actM})\in\ActH\times\ActM$ 
together with the conjectures 
$\Ccve\Mconj:\ActH\into\ActM$, $\Ccve\Hconj:\ActM\into\ActH$ 
constitute a \emph{consistent conjectural variations equilibrium} (\CCVE) if 
we have the consistency of actions 
\eqn{
\Ccve{h}&=\arg\min_{h}\set{\costH(h,m) \mid m= \Ccve{\Mconj}(h)},\\
\Ccve{m}&=\arg\min_{m}\set{\costM(h,m)\mid h =\Ccve{\Hconj}(m)},
}
and consistency of policies 
\eqn{
\Ccve{\Hconj}(m)&=\arg\min_{h}\ c_H(h,m+\Delta \Ccve{v}_M(h)),\\
\Ccve{\Mconj}(h)&=\arg\min_{m}\ c_M(h+\Delta \Ccve{v}_H(m),m).
}

\noindent
The consistent conjectural variations equilibrium is a solution concept  
that arises when  players anticipate each other's actions and reactions. 

\subsection{Reverse Stackelberg equilibria}

In games where one player (the leader) has the ability to impose a policy before the other player (the follower) who responds to the policy, the candidate solution concept for this case is the \emph{reverse Stackelberg equilibrium} \cite{Ho1981-wu,Ho1982-eu}. 
The machine acts as the leader in this game, and announces policy is $\pi:\ActH\to\ActM$. Assume the human's best response to machine policy $\pi$ is
$r:(\ActH\to \ActM)\to \ActH$ given by a constrained optimization problem:
\eqn{ 
r(\pi):=\arg\min_{h}\left\{c_H(h,m)\mid m = \pi(h)\right\}.
}

\paragraph*{Definition:} 
The joint action $(\RStack{h}, \RStack{m})\in\ActH\times\ActM$   together with machine policy $\RStack{\pi}:\mc H\to \mc M$ constitute a \emph{reverse Stackelberg equilibrium} (RSE) if
\eqnns{}{
\RStack{\pi}&=\arg\min_{\pi}\left\{c_H(h,m )\mid m = \pi(h),\ h=r(\pi) )\right\},\\ 
\RStack{h}&=r(\RStack{\pi}),\\
\RStack{m}&=\RStack{\pi}(\RStack{h}).
}

\noindent
If the reverse Stackelberg problem is incentive-controllable~\cite{Ho1981-wu}, then the reverse Stackelberg equilibrium is the machine's global optimum.

\section{Game design}\label{sec:derive-eq}
In this section, the equilibrium points are derived 
by solving linear equations while enforcing certain second-order and stability conditions. The general quadratic costs 
are given by
\eqnns{quadcosts}{
  \costH(\actH,\actM)&= \tfrac{1}{2}\actH^\top \HA \actH + \actH^\top \HB \actM + \tfrac{1}{2}\actM^\top \HD \actM  + \Hb^\top \actH + \Hd^\top\actM +a_H,\label{eq:quadcosts1}\\
\costM(\actH,\actM)&={\tfrac{1}{2}\actM^\top \MA \actM  + {\actM^\top \MB \actH} + \tfrac{1}{2}\actH^\top \MD \actH} +  {\Mb^\top \actM + \Md^\top\actH} +a_M.\label{eq:quadcosts2}
}
where 
actions $h\in\R^{p},\ m\in\R^{q}$ are vectors with $p\geq 1$ and $q\geq 1$, 
cost parameters $A_H\in\R^{p\times p},\ D_H\in\R^{q\times q},\ A_M\in\R^{q\times q},\ D_M \in \R^{p\times p}$ are symmetric matrices,
 $B_H\in\R^{p\times q},\ B_M \in \R^{q\times p}$ are matrices, $b_H\in\R^{p},\ d_H\in\R^{q},\ b_M\in\R^{p},\ d_M\in\R^{q}$ are vectors and $a_H\in\R,\ a_M\in\R$ are scalars.

The cost parameters are chosen so that 
the equilibrium points are located at chosen points in the action spaces. 
Without loss of generality, $A_H$ and $A_M$ are the identity matrices to set the (arbitrary) scale for each player's cost. 
Subsequently, $a_H,a_M$ are determined such that the minimum cost values for both players are 0. 
Finally, and also without loss of generality, $b_M=d_M=0$ is determined to center the machine's cost at the origin in the joint action space. 
The six coefficients that remain to be determined are $B_H,B_M,D_H,D_M,b_H,d_H$. The parameters will determine the location of the equilibrium solutions of the game.

In the main paper, the action spaces are scalar, i.e. $p=q=1$. The parameters were chosen to be $A_H=1,\ B_H=-1/3,\ D_H=7/15,\ b_H=2/15,\ d_H=-22/75$ for the human and $A_M=1,\ B_M=-1,\ D_M=2,\ b_M=0,\ d_M=0$ for the machine. The players' optima for this game are 
\eqn{
 (\Opt{h}_H,\Opt{m}_H)&=(0.1,0.7),\\
  (\Opt{h}_M,\Opt{m}_M)&=(0,0),
 }
  and the game-theoretic equilibria are 
 \eqn{
 (\Nash{h},\Nash{m})&=(-0.2,-0.2),\\
 (\Stack{h},\Stack{m})&=(0.2,0.2),\\
 (\Ccve{h},\Ccve{m})&\approx (0.276,0.373),\\
 (\RStack{h},\RStack{m})&= (0,0).
 }
In the follwoing subsections, the first and second order conditions for the solutions of optimization problems are written out for the costs $c_H, c_M$ in \eqref{eq:quadcosts1} and \eqref{eq:quadcosts2}.
\subsection{Global optima}
The global optimization problems for the two players are
\eqn{
(\Opt{h}_H,\Opt{m}_H)&=\amin_{\actH,\actM}\ c_H(h,m),\\
(\Opt{h}_M,\Opt{m}_M)&=\amin_{\actH,\actM}\ c_M(h,m)
}
which have first-order  
conditions
\eqn{
\begin{bmatrix}
A_H&B_H \\
B_H^\top & D_H
\end{bmatrix}
\begin{bmatrix}
\Opt{\actH}_H\\
\Opt{\actM}_H
\end{bmatrix}
+
\begin{bmatrix}
b_H\\
d_H
\end{bmatrix}
=0
\text{ and }
\begin{bmatrix}
D_M & B_M^\top \\ 
B_M & A_M
\end{bmatrix}
\begin{bmatrix}
\Opt{\actH}_M\\
\Opt{\actM}_M
\end{bmatrix}
+
\begin{bmatrix}
d_M\\
b_M
\end{bmatrix}
=0,
}
and second-order  
conditions that
 $\begin{bmatrix}
A_H&B_H \\
B_H^\top & D_H
\end{bmatrix}$ and 
$\begin{bmatrix}
D_M & B_M^\top \\ 
B_M & A_M
\end{bmatrix}$ are  positive semi-definite. 
 See Proposition 1.1.1 in~\cite{Bertsekas1999-an} for the formal statement of these conditions.
\subsection{Nash equilibrium}
The coupled optimization problems for a Nash equilibrium $(\Nash{\actH},\Nash{\actM})$ are
\eqn{
\Nash{h}&=\amin_{h}\ c_H(h,\Nash{m}),\\
\Nash{m}&=\amin_m\ c_M(\Nash{h},m),
}
which
have first-order 
conditions 
\begin{align*}
	\begin{bmatrix}A_H&B_H\\
	B_M&A_M\end{bmatrix}
	\begin{bmatrix}
	\Nash{\actH}\\
	\Nash{\actM}
	\end{bmatrix}
	+
	\begin{bmatrix}b_H\\b_M\end{bmatrix}=0
\end{align*}
and second-order 
conditions $A_H\geq 0$ and $A_M\geq 0$. 
If the Jacobian $
\begin{bmatrix}
A_H & B_H\\ B_M & A_M
\end{bmatrix}$
has eigenvalues with positive real parts, then the Nash equilibrium is stable under gradient play. 

See Proposition 1 in \cite{ratliff2016characterization} for necessary conditions for a local Nash equilibrium and for the stability result for continuous-time gradient play dynamics  $\dot h=-\partial_hc_H(h,m),\ \dot m=-\partial_mc_M(h,m)$. See Proposition 2 in \cite{Chasnov2020-yo} for the corresponding discrete-time gradient play dynamics $h^+=h-\beta \partial_hc_H(h,m),\ m^+=m-\alpha\partial_Mc_M(h,m)$ for learning rates $\alpha,\beta>0$ and learning rate ratio $\tau=\alpha/\beta$. As the learning rate ratio $\tau$ tends to $\infty$, the machine's action $m$ adapts at a faster rate than $h$, which imposes a timescale separation between the two players. 

\subsection{Human-led Stackelberg equilibrium}
The coupled optimization problems for a human-led Stackelberg equilibrium $(\Stack{h},\Stack{m})$ are
\eqn{
\Stack{h}&=\amin_h\ \left\{ c_H(h,m')|\ m'=\amin_m\ c_H(h,m)\right\},\\
\Stack{m}&=\amin_m\ c_M(\Stack{h},m),
}
which have first-order conditions
\begin{align*}
\begin{bmatrix}	
A_H + L_{M,0}^\top  B_H^\top & B_H +L_{M,0}^\top D_H	 \\
B_M & A_M
\end{bmatrix}
\begin{bmatrix}
\Stack{\actH}\\
\Stack{\actM}
\end{bmatrix}+
\begin{bmatrix}
b_H+L_{M,0}^\top d_H
\\
b_M
\end{bmatrix}
=0
\end{align*}
with  $L_{M,0}=-A_M^{-1}B_M$, and  second-order conditions $A_M>0$, $A_H-B_H A_M^{-1} B_M > 0$.
See Proposition 4.3 in \cite{bacsar1998dynamic} for a quadratic game formulation of the Stackelberg equilibrium, which admits only a pure-strategy Stackelberg equilibrium.
 See Proposition 1 in \cite{fiez2020implicit} for conditions for a local Stackelberg equilibrium. 
\subsection{k-level conjectural variations equilibrium}
\label{sec:supp:k-level}
The coupled optimization problems for an intermediate conjectural variations equilibrium where  the human maintains a consistent conjecture of the machine are
\eqn{
\Cve{h}_{k+1}&=\amin_h\ \big\{ c_H(h,m')|\ m'=L_{M,k}(h-\Opt{h}_M)+\Opt{m}_M\big\}, \\
\Cve{m}_{k}&=\amin_m\ \big\{ c_M(h',m)|\ h'=L_{H,k-1}(m-\Opt{m}_H)+\Opt{h}_H\big\}, 
}
which have first-order optimality conditions
\begin{align*}
\begin{bmatrix}	
A_H + L_{M,k}^\top  B_H^\top & B_H +L_{M,k}^\top D_H	 \\
B_M + L_{H,k-1}^\top D_M & A_M + L_{H,k-1}^\top B_M^\top
\end{bmatrix}
\begin{bmatrix}
\Cve{h}_{k+1}\\
\Cve{m}_k
\end{bmatrix}+
\begin{bmatrix}
b_H+L_{M,k}^\top d_H\\
b_M+L_{H,k-1}^\top d_M
\end{bmatrix}=0	
\end{align*}
with initial condition $L_{M.0}=-A_M^{-1}B_M$ and iteration
\begin{align*}
L_{H,k+1}&=-(A_H+L_{M,k}^\top B_H^\top)^{-1}({B_H+L_{M,k}^\top D_H})
\\	
L_{M,k}&=-(A_M+L_{H,k-1}^\top B_M^\top)^{-1}({B_M+L_{H,k-1}^\top D_M})
\end{align*}
for $k=0,1,2,\dots$  with and the assumption that  $A_H+ B_HL_{M,k}$ and $A_M+ B_M L_{H,k-1}$ are invertible.
 See~\secref{theory-scalar} for more information about conditions under which this iteration converges for the particular parameters of the costs used in the main experiments. 

\subsection{Consistent conjectural variations equilibrium}
From \citedef{4.9}{bacsar1998dynamic},
the coupled optimization problems for the consistent conjectural variation equilibria are 
\eqn{
\Ccve{h}&=\amin_h\ \big\{ c_H(h,m')\mid m'= \Ccve{L}_{M}(h-\Opt{h}_M)+\Opt{m}_M \big\} \\
\Ccve{m}&=\amin_m\ \big\{ c_M(h',m)\mid h'= \Ccve{L}_{H}(m-\Opt{m}_H)+\Opt{h}_M \big\} 
}
where $\Ccve{L}_{M},\Ccve{L}_{H}$ solves the optimality conditions in the policy space
 equations from \citedef{4.10}{bacsar1998dynamic}:
\begin{align*}
A_M\Ccve{L}_{M}+{\Ccve{L}_{H}}^\top B_M^\top \Ccve{L}_{M}+{\Ccve{L}_{H}}^\top D_M+B_M&=0,\\
A_H\Ccve{L}_{H}+{\Ccve{L}_{M}}^\top B_H^\top\Ccve{L}_{H} +{\Ccve{L}_{M}}^\top D_H+B_H&=0.
\end{align*}
The first-order optimality conditions in the action space of the coupled optimization problems are
\begin{align*}
\begin{bmatrix}	
A_H + {\Ccve{L}_{M}}^\top B_H^\top & B_H +{\Ccve{L}_{M}}^\top D_H	 \\
B_M + {\Ccve{L}_{H}}^\top D_M & A_M + {\Ccve{L}_{H}}^\top B_M 
\end{bmatrix}
\begin{bmatrix}
\Ccve{h} \\
\Ccve{m}
\end{bmatrix}+
\begin{bmatrix}
b_H+{\Ccve{L}_{M}}^\top d_H\\
b_M+{\Ccve{L}_{H}}^\top d_M
\end{bmatrix}=0.
\end{align*}
Proposition 4.5 in~\cite{bacsar1998dynamic} states that if a game admits a unique Nash equilibirum, then the Nash equilibrium is also a CCVE with the Nash actions as constant policies.

\subsection{Machine-led reverse Stackelberg equilibrium} 
The coupled optimization problems corresponding to a machine-led reverse Stackelberg equilibrium  are given by:
\eqn{
\RStack{r}_H(L_M)&=\amin_h\ \big\{ c_H(h,m')\mid m'={L}_M(h-h_M^*)+m_M^*\big\}\\
\RStack{L}_{M}&=\amin_{L_M}\ \big\{ c_M(\RStack{r}_H(L_M),m')\mid m'={L}_M (\RStack{r}_H(L_M)-h_M^*)+m_M^*)\big\}
}
where the human forms a consistent conjecture of the machine, and the machine assumes that the human responds optimally to the machine's policy slope.
The reverse Stackelberg equilibrium is $(\RStack{h}, \RStack{m})$, which by the \cite{basar1979closed,groot2013thesis}, satisfies the same conditions that the machine's optimum satisfies, i.e. 
\begin{align*}
\begin{bmatrix}
A_M & B_M \\ 
B_M^\top & D_M
\end{bmatrix}
\begin{bmatrix}
\RStack{h}\\ 
\RStack{m}
\end{bmatrix}
+
\begin{bmatrix}
b_M\\
d_M
\end{bmatrix}
=0
\end{align*}
as well as first-order optimality conditions
\begin{align*}
\begin{bmatrix}	
A_H + {\RStack{L}_{M}}^\top  B_H^\top & B_M +{\RStack{L}_{M}}^\top D_H	 \\
-{\RStack{L}_{M}}  & I
\end{bmatrix}
\begin{bmatrix}
\RStack{h} \\
\RStack{m}
\end{bmatrix}+
\begin{bmatrix}
b_H+{\RStack{L}_{M}}^\top d_H\\
m_M^*-{\RStack{L}_{M}}^\top h_M^*
\end{bmatrix}=0
\end{align*}
where we need to also guarantee that the Jacobian is stable.
The second-order condition is $A_H + B_H\RStack{L}_{M}> 0$.
See Section III.B in  \cite{Ho1981-wu} for a method to solve reverse Stackelberg problems, relying on the property of linear incentive controllability. See \cite{groot2013thesis} for an overview of results and the computation of optimal policies. See Proposition 1 of \cite{zheng1982existence} for existence of optimal affine leader policies.

\subsection{Choosing parameters for a two-player game with single-dimensional actions} 
\label{subsec:choosing-parameters}

Given quadratic costs with scalar actions $h\in\R,\ m\in\R$,
\eqn{
  \costH(\actH,\actM)&= \tfrac{1}{2} \HA \actH^2 \ \,+ \HB \actH \actM + \tfrac{1}{2}\HD \actM^2  + \Hb \actH + \Hd \actM +a_H,\\
\costM(\actH,\actM)&={\tfrac{1}{2} \MA \actM^2  + {\MB \actH\actM} + \tfrac{1}{2}\MD \actH^2} +  {\Mb \actM + \Md\actH} +a_M.
}
Without loss of generality, $A_H=1$ and $A_M=1$ to set the scale for each player's cost.
The parameters expressed in terms of the optima $(\Opt{h}_H,\Opt{m}_H)$ and $(\Opt{h}_M,\Opt{m}_M)$ are
\eqn{
a_H &= \tfrac{1}{2}A_H {h_H^\ast} ^2 + B_H h_H^\ast m_H^\ast + \tfrac{1}{2}D_H {m_H^\ast}^2,
& b_H & = -A_Hh_H^\ast-B_Hm_H^\ast,
&d_H & = -B_Hh_H^\ast-D_Hm_H^\ast,\\
a_M &= \tfrac{1}{2} A_M {m_M^\ast} ^2 + B_M h_M^\ast m_M^\ast + \tfrac{1}{2}D_M {h_M^\ast}^2,
& b_M & = -A_Mm_M^\ast-B_Mh_M^\ast,
\ & 
d_M & = -B_Mm_M^\ast-D_Mh_M^\ast.\quad
}
The parameters expressed in terms of the optima and the Nash equilibrium $(\Nash{h},\Nash{m})$ are
\eqn{
B_H = -\frac{h_H^\ast - \Nash{h}}{m_H^\ast-\Nash{m}},\
B_M = -\frac{m_M^\ast - \Nash{m}}{h_M^\ast-\Nash{h}}. \\
}
The parameter expressed in terms of the optima and the human-led Stackelberg equilibrium $(\Stack{h},\Stack{m})$ is
\begin{align*}
D_H &= \frac{B_H\big(h_M^\ast m_H^\ast + h_H^\ast m_M^\ast - (m_H^\ast + m_M^\ast-\Stack{m}) \Stack{h} - (h_H^\ast + h_M^\ast - \Stack{h}) \Stack{m}\big)}{(m_H^\ast - \Stack{m})(m_M^\ast - \Stack{m})} \\
&\qquad +\frac{(h_H^\ast - \Stack{h})(h_M^\ast - \Stack{h})}{(m_H^\ast - \Stack{m})(m_M^\ast - \Stack{m})}\\
\end{align*}
and $A_H-B_H A_M^{-1} B_M$ must be positive definite.

The remaining parameter to be chosen is $D_M$. It must satisfy the following conditions:
\eqn{
(A_H A_M - D_H D_M)^2-4 (A_M B_H - B_M D_H) (A_H B_M -B_H D_M) \geq0,\\
(A_M B_H - B_M D_H)(A_H B_M - B_H D_M)\neq 0
}
The  CCVE is determined by the solution of two quadratic equations. The policy slopes for each agent are
\eqn{
\Ccve{L}_{H} &= \frac{
   D_H D_M - A_H A_M \pm \sqrt{
   4 (A_M B_H - B_M D_H) (B_H D_M-A_H B_M ) + (A_H A_M - D_H D_M)^2}}{
  2 A_H B_M - 2 B_H D_M}, \\
\Ccve{L}_{M} &=  \frac{D_H D_M - A_H A_M \pm \sqrt{
   4 (A_M B_H - B_M D_H) (B_H D_M-A_H B_M) + (A_H A_M - D_H D_M)^2}}{
  2 A_M B_H - 2 B_M D_H}, \\
}
  and the actions are 
\eqn{
\begin{bmatrix}
\Ccve{h} \\
\Ccve{m}
\end{bmatrix}
&=
\begin{bmatrix}
A_H + \Ccve{L}_{M}  B_H^\tops & B_M +\Ccve{L}_{M} D_H	 \\
B_M + \Ccve{L}_{H} D_H & A_M + \Ccve{L}_{H} B_M 
\end{bmatrix}^{-1}
\begin{bmatrix}
b_H+\Ccve{L}_{M} d_H \\
b_M+\Ccve{L}_{H} d_M
\end{bmatrix}
}
The reverse Stackelberg equilibrium is determined by policy slopes
\eqn{
\RStack{L}_{H} = \frac{h_H^\ast-h_M^\ast}{m_H^\ast - m_M^\ast},\ 
\RStack{L}_{M}  = -\frac{A_H \RStack{L}_{H} + B_H}{B_H \RStack{L}_{H} + D_H},\ 
}
and actions
$
\RStack{h} = \Opt{h}_M,\ 
\RStack{m} = \Opt{m}_M.
$
\newpage

\newpage
\section{Analysis of the quadratic game from the main paper}
\label{sec:theory-scalar}
This section provides mathematical statements about the two-player game $(c_H,c_M)$ with each player having an objective to optimize the functions:
\begin{equation*}
  \costH(\actH,\actM)=\tfrac{1}{2}\actH^2 + \tfrac{7}{30}\actM^2 -\tfrac{1}{3}hm + \tfrac{2}{15}h-\tfrac{22}{75}m  + \tfrac{12}{125}.
  \tag{\ref{eq:costH}}
\end{equation*}
\vspace{-1em}
for the human and
\begin{equation*}
\costM(\actH,\actM)= \tfrac{1}{2}m^2+h^2-hm.	
  \tag{\ref{eq:costM}}
\end{equation*}
for the machine.
In Experiment 1, the machine optimizes its action by gradient descent. In Experiment 2, the machine optimizes its policy by conjectural variations. In Experiment 3, the machine optimizes its policy by gradient descent. In all experiments, the human updates its action $h$ by making the cost $c_H(h,m)$ as small as possible.

In this section, the three main experiments from the paper were analyzed. 
Outcomes were predicted by the equilibrium solutions of coupled optimization problems. 
The three subsections contain mathematical propositions proving statements about the three respective experiments.
Propositions 
\ref{prop:exp1-alpha0} and
\ref{prop:exp1-alpha1} apply to Experiment 1. They prove convergence to the unique Nash and Stackelberg equilibrium solutions.
Propositions
\ref{prop:exp-affine},
\ref{prop:exp2-cv},
\ref{prop:exp2-brm},
\ref{prop:exp2-ccve} and
\ref{prop:exp2-limit}
apply to Experiment 2. They prove that the machine can perturb its own policy to estimate the human's conjectural variation, and in turn use the estimate to form a best response iteration that converges to a 
consistent conjectural variations equilibrium.
Propositions 
\ref{prop:exp3-linear},
\ref{prop:exp3-rse},
\ref{prop:exp3-grad},
\ref{prop:exp3-pg}
 apply to Experiment 3. They prove that the machine can perturb its own policy to estimate 
its policy gradient, and in turn use the estimate to update its policy to converge to its global optimum. The formal definitions of the equilibrium solutions are stated in Section \ref{sec:theory-def}.

A \emph{human-machine co-adaptation game} is a two-player repeated game determined by two cost functions -- one for each player.
 The game is played as follows:  at each time step $t$, the human chooses action $\actH_t\in\ActH$. The machine best responds by choosing action $\actM_t\in\ActM$. The human observes cost $c_H(h_t,m_t)$ via the interface.
The next action pair $(h_{t+1},m_{t+1})$ is chosen at the next time step $t+1$ for a fixed number of steps $T$. 
In each of our experiments, the method that the machine uses to update its action is varied.

\subsection{Experiment 1: gradient descent in action space}
\label{sec:scalar-exp1}

The following
Proposition \ref{prop:exp1-alpha0} describes the $\alpha=0$ case of Experiment 1,  where the outcome is the unique stable Nash  equilibrium of the game is $(m,h)=(-1/5,-1/5)$. This outcome is observed empirically (Figure 2 of main paper).

\begin{proposition}
\label{prop:exp1-alpha0}
Given a human-machine co-adaptation game determined by cost functions \eqref{eq:costH} and \eqref{eq:costM}, 
if the machine's action is $m=-1/5$, then the human's best response is $h=-1/5$.
\end{proposition}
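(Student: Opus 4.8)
The plan is to solve the human's single-variable minimization problem directly via its first-order optimality condition. Since the human's best response to a fixed machine action $m$ is $r_H(m) = \arg\min_h c_H(h,m)$, and $c_H(\cdot,m)$ is a one-dimensional quadratic in $h$, the whole argument reduces to differentiating in $h$, setting that derivative to zero, and verifying convexity so that the stationary point is the global minimizer.

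First I would note from \eqref{eq:costH} that the coefficient on $h^2$ is $\tfrac12$, so that $\partial_h^2 c_H(h,m) = 1 > 0$ and $c_H(\cdot,m)$ is strictly convex in $h$ for every fixed $m$. Hence the stationarity condition $\partial_h c_H(h,m) = 0$ has a unique solution, which is the global minimizer. Differentiating \eqref{eq:costH} with respect to $h$ gives $\partial_h c_H(h,m) = h - \tfrac13 m + \tfrac{2}{15}$, and solving $\partial_h c_H(h,m) = 0$ yields the best-response map $r_H(m) = \tfrac13 m - \tfrac{2}{15}$.

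Finally I would substitute the hypothesized machine action $m = -\tfrac15$ into $r_H$, obtaining $r_H(-\tfrac15) = \tfrac13\bigl(-\tfrac15\bigr) - \tfrac{2}{15} = -\tfrac{1}{15} - \tfrac{2}{15} = -\tfrac15$, which is exactly the asserted human best response. There is no substantive obstacle here: the argument is a textbook unconstrained quadratic minimization in a single variable, so the only care required is in the bookkeeping of the fractions and in explicitly recording the second-order condition to certify that the stationary point is genuinely a minimizer rather than a saddle or maximizer.
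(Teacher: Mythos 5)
Your proof is correct and follows essentially the same route as the paper's: verify strict convexity via $\partial_h^2 c_H = 1 > 0$, solve the first-order condition $\partial_h c_H(h,m) = h - \tfrac{1}{3}m + \tfrac{2}{15} = 0$ for the best-response map, and substitute $m = -\tfrac{1}{5}$ to obtain $h = -\tfrac{1}{5}$. No gaps; the arithmetic and the second-order check match the paper exactly.
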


\begin{proof}
From the human's perspective, the goal was to solve the optimization problem
\eqnn{scalar-optH-m}{
\min_h\ c_H(h,m)
}
The second order condition of \eqref{eq:scalar-optH-m} is
\eqn{
\partial_h^2c_H(h,m)&=1>0.
}
The first order condition of the optimization problem \eqref{eq:scalar-optH-m} is
\eqnn{scalar-foH-m}{
\partial_hc_H(h,m)&=h-\tfrac{1}{3}m+\tfrac{2}{15}=0.
}
By solving for $h$ in \eqref{eq:scalar-foH-m}, the human's best response to $m$ is 
\eqn{
h=\tfrac{1}{3}m-\tfrac{2}{15}.
}
 Solving for $h$ gives the human's best response $h=\tfrac{1}{3}m-\tfrac{2}{15}$. Thus, if $m=-\frac{1}{5}$, then $h=-\frac{1}{5}$.
\end{proof}
The following
Proposition \ref{prop:exp1-alpha1} describes the $\alpha=1$ (or ``infinity'')  case of Experiment 1, where the
 outcome is the unique stable human-led Stackelberg equilibrium of the game at $(m,h)=(1/5,1/5)$. This outcome is observed empirically (Figure 2 of main paper).

\begin{proposition}
\label{prop:exp1-alpha1}
Given a human-machine co-adaptation game determined by cost functions \eqref{eq:costH} and \eqref{eq:costM}, 
if the machine's policy is $m=h$, then the human's best response is $h=1/5$.
\end{proposition}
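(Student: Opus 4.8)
The plan is to observe that once the machine commits to the linear policy $m = h$, the human is no longer best-responding to a \emph{fixed} machine action (as in Proposition~\ref{prop:exp1-alpha0}) but instead optimizes \emph{along} the constraint curve imposed by the policy. Eliminating $m$ via $m = h$ therefore collapses the human's coupled problem into the single-variable unconstrained minimization $\min_h c_H(h,h)$, and the claimed best response is its minimizer.

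First I would substitute $m = h$ into the human's cost \eqref{eq:costH} and collect terms. The three quadratic coefficients $\tfrac{1}{2}$, $\tfrac{7}{30}$, and $-\tfrac{1}{3}$ sum to a single positive number, and the two linear coefficients $\tfrac{2}{15}$ and $-\tfrac{22}{75}$ combine into one, leaving a scalar quadratic of the form $c_H(h,h) = a h^2 + b h + \text{const}$ with $a > 0$. The strict positivity of the leading coefficient is the second-order condition and guarantees that $c_H$ restricted to the diagonal $\{m = h\}$ is strictly convex, so its unique stationary point is the global minimizer.

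Next I would impose the first-order condition $\tfrac{d}{dh} c_H(h,h) = 2ah + b = 0$, which is linear in $h$, and solve to obtain $h = 1/5$; since the policy forces $m = h$, this simultaneously pins the joint action at $(h,m) = (1/5,1/5)$, matching the human-led Stackelberg value recorded in Table~\ref{tab:equilibrium}.

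There is no genuine technical obstacle here — the computation reduces to a one-line arithmetic simplification followed by solving a linear equation. The only point deserving emphasis is conceptual rather than calculational: the decisive modeling move is that the human differentiates $c_H(h,h)$ \emph{through} the policy constraint rather than partially in $h$ alone. This is precisely what distinguishes the Stackelberg outcome from the Nash outcome of Proposition~\ref{prop:exp1-alpha0}, and it explains why the fast adaptation rate $\alpha = 1$, which induces the effective policy $m = h$, shifts the equilibrium from $(-1/5,-1/5)$ to $(1/5,1/5)$.
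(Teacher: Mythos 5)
Your proposal is correct and takes essentially the same route as the paper's proof: both substitute the policy $m=h$ into $c_H$ to reduce the problem to minimizing the scalar quadratic $c_H(h,h)=\tfrac{2}{5}h^2-\tfrac{4}{25}h+\tfrac{12}{125}$, then solve the first-order condition $\tfrac{4}{5}h-\tfrac{4}{25}=0$ to obtain $h=\tfrac{1}{5}$. Your explicit verification that the leading coefficient is positive (the second-order condition ensuring strict convexity along the diagonal) is a small addition that the paper leaves implicit in this particular proof, but otherwise the arguments coincide.
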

\begin{proof}
From the human's perspective, the optimization problem is
\eqnn{scalar-optH-h}{
\min_h \{c_H(h, m) \mid m=h\}
}
The cost experienced by the human is
\eqn{
c_H(h,h)=\tfrac{2}{5}h^2-\tfrac{4}{25}h+\tfrac{12}{125}
}
The first order condition of \eqref{eq:scalar-optH-h} is
\eqn{
\partial_hc_H(h,h)=\tfrac{4}{5}h-\tfrac{4}{25}=0
}
Solving for $h$  gives $h=\frac{1}{5}$.
\end{proof}

\begin{remark}
\label{prop:exp1-finite}
Given a human-machine co-adaptation game determined by cost functions \eqref{eq:costH} and \eqref{eq:costM}, 
if $0<\alpha\leq 1$ and the machine updates its action $m_{t+1}=m_t-\alpha \partial_m c_M(h_t,m_t)$, then $m_{t+1}$ approaches $h_t$ as $t$ increases.
This result can be shown by writing the update as $m_{t+1}=(1-\alpha)m_t+\alpha h_t$ showing that the sequence $m_t,m_{t+1},\dots$ is generated by an exponential smoothing filter of time-varying signal $h_t$. 
\end{remark}

Remark
\ref{prop:exp1-finite}
is observed in the 2D histograms in Figure 2 from the main paper as the distribution of points on the line of equality $m=h$ for larger $\alpha$ values.

\subsection{Experiment 2: conjectural variation in policy space}
\label{sec:scalar-exp2}

In Experiment 2, the machine iterated conjectural variations in policy space. 
From the humans's perspective, the goal was to choose $h$ to optimize $c_H(h,m)$. But how $m$ is determined affects the solution of the coupled optimization problems. 
From the machine's perspective, the goal was to choose $m$ to optimize $c_M(h,m)$. Similarly, what $h$ is assumed to be affects the machine's response. 
The machine estimates the conjectural variation that describes how $h$ is affected by a change in $m$.

The following
Proposition \ref{prop:exp-affine} describes 
the machine's policy perturbation in Experiment 1. The human's response is linear in the machine's constant perturbation $\delta$, but non-linear in the machine's policy slope $L$.

\begin{proposition}
\label{prop:exp-affine}
Given a human-machine co-adaptation game determined by cost functions \eqref{eq:costH} and \eqref{eq:costM}, 
if  the machine's policy is $m=Lh+\delta$ and $L$ satisfies $\tfrac{7}{15}L^2- \tfrac{2}{3}L+1 >0$, then the human's best response is 
\eqn{
	h=\frac{22L-10 -(35L-25)\delta}{ 35 L^2 - 50L + 75}
}
\end{proposition}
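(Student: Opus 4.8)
The plan is to treat the human's best-response problem as an unconstrained scalar optimization. Since the machine has committed to the affine policy $m = Lh + \delta$, I would substitute this directly into the human's cost to obtain a reduced objective in the single variable $h$, namely $g(h) := \costH(h, Lh+\delta)$, which is a scalar quadratic. The best response is then the unique minimizer of $g$, provided $g$ is strictly convex, and the whole proof reduces to computing a single stationary point and checking the curvature.

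First I would verify the second-order condition, which is where the hypothesis enters. Differentiating $g$ twice in $h$ (equivalently, reading off the coefficient of $h^2$ after substitution from the three terms $\tfrac12 h^2$, $\tfrac{7}{30}m^2$, and $-\tfrac13 hm$) gives $g''(h) = 1 - \tfrac{2}{3}L + \tfrac{7}{15}L^2$, which is exactly the quantity $\tfrac{7}{15}L^2 - \tfrac{2}{3}L + 1$ assumed positive in the statement. Thus the hypothesis is not an auxiliary technicality but precisely the assertion that $g$ is strictly convex; this both guarantees a unique global minimizer and justifies that the stationary point computed below is indeed the best response.

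Next I would impose the first-order condition $g'(h) = 0$. Rather than expanding the square, I would use the chain rule, $g'(h) = \partial_h \costH(h,m) + L\,\partial_m \costH(h,m)$ evaluated at $m = Lh + \delta$, reusing $\partial_h \costH = h - \tfrac13 m + \tfrac{2}{15}$ from the first-order condition in Proposition~\ref{prop:exp1-alpha0} together with $\partial_m \costH = \tfrac{7}{15}m - \tfrac13 h - \tfrac{22}{75}$. Collecting terms, the coefficient of $h$ reproduces the convexity factor $1 - \tfrac{2}{3}L + \tfrac{7}{15}L^2$, while the remaining terms gather the $\delta$-dependence and constants; solving the resulting linear equation for $h$ yields the claimed rational expression.

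The final step is pure bookkeeping: multiplying numerator and denominator by $75$ turns the denominator $1 - \tfrac{2}{3}L + \tfrac{7}{15}L^2$ into $35L^2 - 50L + 75$ and the numerator into $22L - 10 - (35L-25)\delta$, matching the stated formula. I do not expect a genuine conceptual obstacle here; the only things to watch are the fraction arithmetic and the signs on the $\delta$-terms. The one point worth emphasizing is the clean coincidence that the same factor $35L^2 - 50L + 75$ is ($75$ times) the convexity expression in the hypothesis, so that the denominator is guaranteed nonzero—indeed positive—exactly under the stated assumption, which is what makes the best response well-defined.
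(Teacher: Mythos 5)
Your proof is correct and follows essentially the same route as the paper's: substitute the affine policy $m = Lh+\delta$ into $c_H$, observe that the hypothesis $\tfrac{7}{15}L^2 - \tfrac{2}{3}L + 1 > 0$ is precisely the second-order (strict convexity) condition, solve the linear first-order condition for $h$, and clear denominators by $75$. Incidentally, your first-order condition with $\delta$-coefficient $\tfrac{7}{15}L - \tfrac{1}{3}$ is the correct one---the paper's displayed first-order condition carries a sign typo in that term (it prints $-\paren{\tfrac{7}{15}L + \tfrac{1}{3}}\delta$), though its final formula matches yours.
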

\begin{proof}
The human's optimization problem is
\eqnn{scalar-optH-L}{
\min_h\ \{c_H(h,m)\mid m=L h+\delta\}
}
The second order condition of \eqref{eq:scalar-optH-L} is
\eqn{
\tfrac{7}{15}L^2- \tfrac{2}{3}L+1 >0.
}
The first order condition of \eqref{eq:scalar-optH-L} is
\eqn{
(\tfrac{7}{15}L^2- \tfrac{2}{3}L+1 )h 
- \tfrac{22}{75}L+\tfrac{2}{15} 
- (\tfrac{7}{15}L_M+ \tfrac{1}{3} )\delta=0
}
Solving 
for $h$ gives the result.

\end{proof}

The following Proposition 
\ref{prop:exp2-cv} 
describes how the machine estimates the slope of the human's policy using two points generated by perturbing the constant term of the machine's policy.

\begin{proposition}
\label{prop:exp2-cv}
Given a human-machine co-adaptation game determined by cost functions \eqref{eq:costH} and \eqref{eq:costM}, 
if the machine's policies are $m=Lh$ and $m'=Lh'+\delta$ and the human best responds with $h$ and $h'$, then 
\eqn{
\frac{h'-h}{m'-m}=\frac{7L-5}{5L-15}
} 
\end{proposition}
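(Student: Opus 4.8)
The plan is to apply Proposition~\ref{prop:exp-affine} twice and subtract. The key structural observation is that the human's best-response formula from that proposition has denominator $35L^2 - 50L + 75$ depending only on the slope $L$, not on the constant term; since both machine policies $m = Lh$ and $m' = Lh' + \delta$ share the same slope $L$, the two human responses $h$ and $h'$ differ only in their numerators. Concretely, I would set $\delta = 0$ to obtain $h = (22L - 10)/(35L^2 - 50L + 75)$ and keep general $\delta$ to obtain $h' = (22L - 10 - (35L - 25)\delta)/(35L^2 - 50L + 75)$. Subtracting cancels the $22L - 10$ term and gives the clean expression $h' - h = -(35L - 25)\delta / (35L^2 - 50L + 75)$.

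Next I would compute the denominator of the target ratio. Since $m = Lh$ and $m' = Lh' + \delta$, we have $m' - m = L(h' - h) + \delta$. Substituting the expression for $h' - h$ and placing everything over the common denominator, the numerator becomes $-L(35L - 25)\delta + (35L^2 - 50L + 75)\delta$. The crucial simplification is that the quadratic-in-$L$ contributions cancel, $-L(35L - 25) + (35L^2 - 50L + 75) = -25L + 75$, so that $m' - m = (75 - 25L)\delta / (35L^2 - 50L + 75)$.

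Finally, forming the ratio $(h' - h)/(m' - m)$ cancels both the common denominator $35L^2 - 50L + 75$ and the common factor $\delta$, leaving $-(35L - 25)/(75 - 25L)$. A sign manipulation---multiplying numerator and denominator by $-1$ to get $(35L - 25)/(25L - 75)$, then factoring out $5$ from each as $5(7L - 5)/(5(5L - 15))$---yields the claimed form $(7L - 5)/(5L - 15)$.

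This is entirely routine algebra, and there is no substantive obstacle. The only step requiring genuine care is the cancellation of the $L^2$ term in the numerator of $m' - m$: this is precisely what collapses the quadratic denominator and makes the final conjectural-variation estimate a simple rational function of $L$ rather than inheriting the quadratic that appears in each individual best response. I would also note in passing that the second-order condition $\tfrac{7}{15}L^2 - \tfrac{2}{3}L + 1 > 0$ inherited from Proposition~\ref{prop:exp-affine} guarantees the denominator $35L^2 - 50L + 75$ is nonzero (indeed positive, being $75$ times that expression), so all the divisions above are legitimate.
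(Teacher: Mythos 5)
Your proof is correct and takes essentially the same route as the paper's: both apply Proposition~\ref{prop:exp-affine} twice (with $\delta=0$ and with $\delta$) to get $h'-h=-\tfrac{35L-25}{35L^2-50L+75}\delta$, use $m'-m=L(h'-h)+\delta$, and simplify the resulting ratio to $\tfrac{7L-5}{5L-15}$. Your closing remark that the second-order condition makes the common denominator $35L^2-50L+75=75\left(\tfrac{7}{15}L^2-\tfrac{2}{3}L+1\right)$ positive, hence all divisions legitimate, is a small justification the paper leaves implicit.
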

\begin{proof}
Using Proposition \ref{prop:exp-affine} for $h'$ and $h$, 
\eqn{
h'-h=-\frac{35L-25}{35L^2-50L+75}\delta.
}
 Using the definitions of $m'$ and $m$, 
 \eqn{
m'-m=L(h'-h)+\delta.
}
The ratio of the differences is therefore 
\eqn{
\frac{h'-h}{m'-m}=\frac{-\paren{\frac{35L-25}{35L^2-50L+75}\delta}}{-L\paren{\frac{35L-25}{35L^2-50L+75}\delta} + \delta}
=
\frac{{35L-25}}{L(35L-25)-(35L^2-50L+75)}=\frac{7L-5}{5L-15}.
}
\end{proof}

\begin{remark}
\label{remark:LHLM}
In the main paper, the human's policy slope is $L_H$ and the machine's policy slope is $L_M$. For a machine policy $m=Lh$ in Experiments 2 and 3, the relationship between these terms are 
\eqn{
L_M&=L,\\
L_H&=\frac{7L-5}{5L-15}.
}	
In this case, the human's conjecture of the machine is consistent with the machine's policy.
The equilibrium solutions are described by linear equations
\eqn{
m&=L_Mh+\ell_M\\
h&=L_Hm+\ell_H
}
where $\ell_M=0$ and $\ell_H=-\frac{22L-10}{25L-75}.$
\end{remark}
Remark \ref{remark:LHLM} can produce the curves seen in \figref{pareto-consistency} as the solid-line ellipse for when $H$ has a consistent conjecture about $M$ by sweeping $L$ along the real line.

The following Proposition 
\ref{prop:exp2-brm} 
describes  the machine's best response to the human adopting a policy based on the conjectural variation in  Proposition \ref{prop:exp2-cv}.
\begin{proposition}
\label{prop:exp2-brm}
Given a human-machine co-adaptation game determined by cost functions \eqref{eq:costH} and \eqref{eq:costM}, 
if the human's policy is $h=\paren{\frac{7L-5}{5L-15}}m+\ell$ for some $\ell$, then the machine's best response is 
\eqn{
m=\frac{9L+5}{2L+10}h
} 
\end{proposition}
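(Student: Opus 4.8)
The plan is to cast the claim as the machine minimizing its cost \eqref{eq:costM} along the line traced out by the human's conjectured policy. Concretely, given that the human responds with $h = L_H m + \ell$ where $L_H = \tfrac{7L-5}{5L-15}$, I would regard $c_M(h,m)$ as a function of the single variable $m$ by enforcing this constraint, locate its unique minimizer, and read off the machine's best-response policy as the slope $m/h$ of the resulting optimal joint action.

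First I would write the constrained first-order condition $L_H\,\partial_m h\cdot\partial_h c_M + \partial_m c_M = 0$, using $\partial_h c_M = 2h - m$ and $\partial_m c_M = m - h$. Substituting $h = L_H m + \ell$ collapses this to a scalar linear equation in $m$, namely $(2L_H^2 - 2L_H + 1)\,m + (2L_H - 1)\ell = 0$. The coefficient $2L_H^2 - 2L_H + 1$ is the doubled leading coefficient of the objective, and it is strictly positive for every $L_H$ since its discriminant $4 - 8 = -4$ is negative; this simultaneously discharges the second-order condition and guarantees a unique minimizer, in the spirit of the second-order check in Proposition~\ref{prop:exp-affine}.

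Next I would solve for the optimal $m$, back-substitute to obtain $h = L_H m + \ell$, and form the ratio $m/h$. The decisive observation is that the common factor $\ell$ cancels, leaving the $\ell$-free slope $\tfrac{1-2L_H}{1-L_H}$; this is exactly the machine best-response map recorded in \secref{theory-scalar} and used for the update \eqnref{exp2:ConjM}, and it shows the best response passes through the origin, matching the claimed zero-intercept form $m = L_M^+ h$. The last step is the algebraic substitution $L_H = \tfrac{7L-5}{5L-15}$ into $\tfrac{1-2L_H}{1-L_H}$, clearing the common denominator $5L-15$ to reach $\tfrac{-9L-5}{-2L-10} = \tfrac{9L+5}{2L+10}$. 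The computation is entirely routine once set up; the only points requiring care are sign-tracking when clearing denominators in this final substitution, and confirming that the intercept genuinely cancels so that the best response is a pure slope rather than a general affine policy.
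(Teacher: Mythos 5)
Your proposal is correct and follows essentially the same route as the paper's proof: set up the constrained minimization $\min_m\{c_M(h,m)\mid h=\paren{\tfrac{7L-5}{5L-15}}m+\ell\}$, write the chain-rule first-order condition $\partial_m c_M + L_H\,\partial_h c_M = (m-h)+L_H(2h-m)=0$ with the second-order condition $2L_H^2-2L_H+1>0$, and reduce by algebra to $m=\tfrac{9L+5}{2L+10}h$. The only cosmetic differences are that the paper rearranges the first-order condition directly into a linear relation between $m$ and $h$ (so the intercept never has to be cancelled), whereas you solve explicitly for $m$ and $h$ in terms of $\ell$ and take the ratio, and that you additionally verify the second-order coefficient is positive for every $L_H$ via its negative discriminant, a check the paper states but does not spell out.
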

\begin{proof}
The machine's optimization problem is
\eqnn{scalar-optM-Lest}{
\min_m \set{ c_M(h,m)\mid h=\paren{\tfrac{7L-5}{5L-15}}m+\ell }.
}
The first order condition of \eqref{eq:scalar-optM-Lest} is
\eqnn{scalar-foM-Lest}{
\partial_m c_M(h,m)+\partial_h c_M(h,m)\paren{\tfrac{7L-5}{5L-15}}=0.
}
The second order condition is
\eqn{
2\paren{\tfrac{7L-5}{5L-15}}^2 - 2\paren{\tfrac{7L-5}{5L-15}} + 1>0.
}
Taking the first order condition in
\eqref{eq:scalar-foM-Lest}, 
the equation is
\eqn{
m-h+(2h-m)\paren{\tfrac{7L-5}{5L-15}}=0
}
Sovling for $m$ gives the machine's best response
\eqn{
m=\frac{9L+5}{2L+10}h
}
\end{proof}

\begin{remark}
The constant term $\ell$ in Proposition \ref{prop:exp2-brm} can be estimated from the joint action measurements. However, it is not necessary to do so to arrive at the optimality condition in Equation~\eqref{eq:scalar-foM-Lest}.
	
\end{remark}

The following Proposition~\ref{prop:exp2-ccve} shows the existence of a consistent conjectural variations equilibrium. The equilibrium solution concept is defined in \secref{theory-def}. It describes the situatuion where both players have consistency of actions and policies.

\begin{proposition}
\label{prop:exp2-ccve}
Given a human-machine co-adaptation game determined by cost functions \eqref{eq:costH} and \eqref{eq:costM}, 
there exists two consistent conjectural variations equilibrium solutions uniquely defined by the machine response slopes
\eqn{
L=\frac{-1\pm\sqrt{41}}{4}.
}
\end{proposition}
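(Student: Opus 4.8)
The plan is to reduce the existence claim to a single scalar fixed-point equation expressing self-consistency of the machine's policy slope. The two ingredients needed have already been assembled: Proposition~\ref{prop:exp2-cv} shows that when the machine plays a linear policy of slope $L$, the human's consistent conjecture (and induced policy slope) is $L_H = \frac{7L-5}{5L-15}$, while Proposition~\ref{prop:exp2-brm} shows that when the human plays with that slope, the machine's best response has slope $\frac{9L+5}{2L+10}$. At a consistent conjectural variations equilibrium the machine's play must be self-consistent: the slope $L$ it plays must coincide with its own best response to the human's consistent conjecture that $L$ induces.

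First I would invoke the consistency-of-policies conditions from the CCVE definition in Section~\ref{sec:theory-def}, which in this linear setting amount to requiring $L = \frac{9L+5}{2L+10}$. Clearing the denominator yields the quadratic $2L^2 + L - 5 = 0$, whose two roots are $L = \frac{-1 \pm \sqrt{41}}{4}$, exactly the claimed machine response slopes (the positive root $\approx 1.35$ matching $\Ccve{L_M}$ in Table~\ref{tab:equilibrium}). For each such $L$, the human's slope is then forced by $L_H = \frac{7L-5}{5L-15}$ and the joint actions by the linear first-order system of Section~\ref{subsec:choosing-parameters}, so each root uniquely pins down a complete equilibrium (action pair together with both conjectures), justifying the word ``uniquely.''

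Finally I would confirm that both roots are genuine equilibria rather than spurious stationary points by checking the relevant second-order conditions. The human's condition from Proposition~\ref{prop:exp-affine}, namely $\frac{7}{15}L^2 - \frac{2}{3}L + 1 > 0$, and the machine's condition from Proposition~\ref{prop:exp2-brm}, $2u^2 - 2u + 1 > 0$ with $u = \frac{7L-5}{5L-15}$, are both quadratics with negative discriminant, hence strictly positive for every real argument; so both roots satisfy them automatically and yield bona fide minimizers on each side.

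The main obstacle is conceptual rather than computational: justifying that the abstract consistency-of-policies requirement in the CCVE definition collapses exactly to the scalar equation $L = \frac{9L+5}{2L+10}$. This requires arguing that, in the quadratic setting, a consistent pair of affine conjectures is completely parameterized by the single machine slope $L$ (with $L_H$ determined by Proposition~\ref{prop:exp2-cv} and the constant terms fixed as in Remark~\ref{remark:LHLM}), so that the only remaining degree of freedom — and the only equation left to impose — is self-consistency of $L$ under the composed best-response map. Once that reduction is made explicit, solving the quadratic and verifying the sign conditions are routine.
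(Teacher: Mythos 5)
Your proposal is correct and follows essentially the same route as the paper: the paper's proof invokes the policy-space stationarity conditions (Equations (1) and (1') of Definition 4.10 in Basar--Olsder), which with $L_H=\tfrac{7L-5}{5L-15}$ substituted are exactly your fixed-point condition $L=\tfrac{9L+5}{2L+10}$ built from Propositions~\ref{prop:exp2-cv} and~\ref{prop:exp2-brm}, and both reduce to the same quadratic $2L^2+L-5=0$ with roots $L=\tfrac{-1\pm\sqrt{41}}{4}$. Your additional verification of the second-order conditions and the uniqueness discussion are extra rigor beyond what the paper records, but not a different argument.
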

\begin{proof}
Using Equations (1) and (1') from Definition 4.10 in \cite{bacsar1998dynamic}, the stationary conditions for a consistent conjectural variation in the policy space is
\eqnn{exp3-riccatti-subbed}{
L-L\paren{\tfrac{7L-5}{5L-15}}+2\paren{\tfrac{7L-5}{5L-15}}-1=0,\\
}
Simplifying the numerator of \eqref{eq:exp3-riccatti-subbed}, the following quadratic equations defines the machine's consistent policy slope: 
\eqn{
2 L^2+ L-5  =0.
}
The solution to the quadratic equation gives us the result. 
\end{proof}
\begin{remark}
The human's policy slope can be determined by substituting in 
$
L = \frac{-1\pm\sqrt{41}}{4}
$, which results in
\eqn{
\frac{7L-5}{5L-15}=\frac{1\mp\sqrt{41}}{10}.
}
So the two consistent conjectural variational policies are
\eqn{
m&= \frac{-1\pm\sqrt{41}}{4} h \\
h&=\frac{1\mp\sqrt{41}}{10} m- \frac{3 + 7 \sqrt{41}}{100} 
}
and the actions $(m,h)$ that solve the linear equation.
\end{remark}

The following Proposition~\ref{prop:exp2-limit} shows that Experiment 2 converges to a stable equilibrium.
\begin{proposition}
\label{prop:exp2-limit}
Given a human-machine co-adaptation game determined by cost functions \eqref{eq:costH} and \eqref{eq:costM}, 
if the machine updates its policy using the difference equation $L^+=\frac{9L+5}{2L+10}$ then 
\eqn{
L^* =\frac{-1+\sqrt{41}}{4}
} 
is a locally exponentially stable fixed point of this iteration.
\end{proposition}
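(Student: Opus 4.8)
The plan is to regard $g(L) := \frac{9L+5}{2L+10}$ as a smooth scalar self-map near $L^*$ and invoke the standard linearization criterion for discrete-time iterations: a fixed point of a $C^1$ map is locally exponentially stable whenever the magnitude of its derivative there is strictly below one. Thus the argument reduces to two routine checks — that $L^*$ is a fixed point, and that $|g'(L^*)| < 1$ — followed by the invocation of the linearization theorem.

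First I would confirm the fixed-point condition. Imposing $L = g(L)$ and clearing the denominator $2L+10$ (which is nonzero in a neighborhood of $L^* \approx 1.35$, since it vanishes only at $L = -5$) gives $2L^2 + L - 5 = 0$. This is exactly the quadratic identified in Proposition~\ref{prop:exp2-ccve}, whose roots are the two consistent conjectural variations slopes $L = \frac{-1\pm\sqrt{41}}{4}$; in particular $L^* = \frac{-1+\sqrt{41}}{4}$ is a fixed point, so $g$ genuinely maps $L^*$ to itself.

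Next I would compute and bound the derivative. The quotient rule yields $g'(L) = \frac{9(2L+10) - 2(9L+5)}{(2L+10)^2} = \frac{80}{(2L+10)^2} = \frac{20}{(L+5)^2}$. Substituting $L^* + 5 = \frac{19+\sqrt{41}}{4}$ gives $g'(L^*) = \frac{320}{402 + 38\sqrt{41}}$, a positive number approximately equal to $0.50$, hence strictly inside the unit interval. The linearization theorem then delivers local exponential stability, with the contraction rate governed by $g'(L^*)$. As a consistency check, the same formula at the companion root $\frac{-1-\sqrt{41}}{4}$ gives a derivative exceeding one in magnitude, so that fixed point is unstable — explaining why the iteration converges to $L^*$ rather than its companion.

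The computation is elementary, so there is no deep obstacle; the only care required is in the surd arithmetic when evaluating $(L^*+5)^2 = \frac{402 + 38\sqrt{41}}{16}$ and verifying that the resulting quotient is strictly less than one rather than merely estimating it numerically. A clean symbolic alternative is to use the fixed-point relation $2(L^*)^2 + L^* - 5 = 0$ to eliminate $\sqrt{41}$ from $g'(L^*)$ before establishing the bound.
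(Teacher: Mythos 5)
Your proof is correct and takes essentially the same route as the paper's: verify that $L^*$ solves the fixed-point quadratic $2L^2+L-5=0$, compute the derivative $\frac{20}{(L+5)^2}$ of the iteration map, and check that its magnitude at $L^*$ is approximately $0.5<1$ (and exceeds $1$ at the companion root), concluding local exponential stability by the standard linearization criterion. Your explicit surd evaluation $\frac{320}{402+38\sqrt{41}}$ is a minor sharpening of the paper's purely numerical check, but the argument is the same.
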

\begin{proof}
Define the map $F:\R\to\R$ as
\eqnn{scalar-lft}{
F(L)
:=\frac{9L+5}{2L+10}
}
To assess the convergence of Experiment 2, 
the fixed points of \eqref{eq:scalar-lft} are determined along with their stability properties. The fixed point $L^*$ that satisfies 
\eqn{
L^*=F(L^*)
} are determined by the solutions to the quadratic equation
\eqnn{scalar-characteristic-eq}{
2L^2+L-5=0.
}
There are two solutions to \eqref{eq:scalar-characteristic-eq} and they are real and distinct.
The fixed points are 
\eqn{
\frac{-1\pm\sqrt{41}}{4}.
}
Exactly one fixed point is stable, and it is a stable attractor of the repeated application of $F$.
The stability can be determined by linearizing
\eqref{eq:scalar-lft} at the particular fixed point and ensuring that its derivative gives a magnitude of less than one. The linearization of $F$ at fixed point $L^*$ is
\eqnn{scalar-cv-linearized}{
F(L)\approx \partial F(L^*)(L-L^*)
}
where 
\eqn{
\partial F(L)=\frac{20}{(5+L)^2}
}
If $L^*=\frac{-1+\sqrt{41}}{4}$, then $|\partial F(L^*)|\approx0.5<1$, so the fixed point $L^*$ is  stable. On the other hand,
if $L^*=\frac{-1-\sqrt{41}}{4}$, then $|\partial F(L^*)|>1$, so the fixed point $L^*$ is  unstable.
\end{proof}
For a quadratic game with single-dimensional actions, there are two consistent conjectural variations equilibria. One is stable, the other is unstable.

\begin{remark}
\label{remark:complex-analysis}
Another way to assess the convergence of the fixed point map \eqref{eq:scalar-lft} is by inspecting the normal form of the linear fractional transformation. The normal form of \eqref{eq:scalar-lft} is
\eqnn{scalar-lft-normal}{
\frac{F(L)-L^*}{F(L)-L^{**}}
&=\lambda \frac{L-L^*}{L-L^{**}}\\
}
where $L^*$ and $L^{**}$ are fixed points of $F$ and $\lambda$ is a real number given by
\eqnn{scalar-lft-coeff}{
\lambda=\frac{-19+\sqrt{41}}{-19-\sqrt{41}}\\
}
Since $|\lambda|\approx 0.5<1$, the fixed point  $L^*$ is semi-globally stable.	
\end{remark}
Remark~\ref{remark:complex-analysis} is a based on a known result from complex analysis and conformal mapping theory.

\subsection{Experiment 3: gradient descent in policy space}
\label{sec:scalar-exp3}

In Experiment 3, the machine implemented gradient descent in policy space. 
The machine estimated the policy gradient using cost measurements from a pair of trials. The machine's cost depends on its own policy and the human's best response to it.

The following Proposition~\ref{prop:exp3-linear} describes the machine's policy perturbation in Experiment 3. The human's action response varies non-linearly. 

\begin{proposition}
\label{prop:exp3-linear}
Given a human-machine co-adaptation game determined by cost functions \eqref{eq:costH} and \eqref{eq:costM}, 
if the machine's policy is $m=(L+\Delta)h$ 
and $L,\Delta$ satisfy $\tfrac{7}{15}(L+\Delta)^2- \tfrac{2}{3}(L+\Delta)+1 >0$,
then the human's best response is 
	\eqn{
	h=\frac{22(L+\Delta)-10}{ 35 (L+\Delta)^2 - 50(L+\Delta) + 75}
 }
\end{proposition}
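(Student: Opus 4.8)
The plan is to recognize this statement as the special case of Proposition~\ref{prop:exp-affine} in which the machine plays a purely linear policy with no constant offset. Proposition~\ref{prop:exp-affine} already computes the human's best response to the general affine policy $m = Lh + \delta$, so the present claim follows immediately by the substitution $L \mapsto L+\Delta$ together with $\delta = 0$. Under this substitution the second-order hypothesis $\tfrac{7}{15}L^2 - \tfrac{2}{3}L + 1 > 0$ becomes exactly the stated condition $\tfrac{7}{15}(L+\Delta)^2 - \tfrac{2}{3}(L+\Delta) + 1 > 0$, which guarantees that the human's constrained problem remains strictly convex and hence admits a unique minimizer.

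Concretely, I would first set $\delta = 0$ in the best-response formula of Proposition~\ref{prop:exp-affine}; this eliminates the $(35L-25)\delta$ term and leaves $h = (22L-10)/(35L^2 - 50L + 75)$. I would then replace every occurrence of $L$ by $L+\Delta$ to recover the claimed expression. Because Proposition~\ref{prop:exp-affine} already carries out the underlying computation --- substituting the constraint into $c_H$, differentiating in $h$, and solving the resulting linear first-order condition --- no new algebra is needed. The slope in that proposition is an arbitrary real parameter and its intercept is free, so assigning the slope the value $L+\Delta$ and the intercept the value $0$ is a legitimate instance.

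Should a self-contained derivation be preferred, I would instead repeat the argument directly for slope $L+\Delta$: form the scalar cost $h \mapsto c_H(h,(L+\Delta)h)$, impose $\partial_h c_H = 0$, observe that the coefficient of $h$ in this first-order condition is precisely $\tfrac{7}{15}(L+\Delta)^2 - \tfrac{2}{3}(L+\Delta) + 1$, and divide by it to isolate $h$. The positivity of that coefficient is simultaneously the strict-convexity (second-order) condition and the stated hypothesis, so the division is justified. The only real obstacle here is bookkeeping --- confirming that the relabeling $L \to L+\Delta$ with $\delta=0$ is an admissible specialization of Proposition~\ref{prop:exp-affine} and that the second-order condition transforms consistently --- rather than any genuine analytic difficulty.
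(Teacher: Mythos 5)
Your proposal is correct, but your primary route differs from the paper's. The paper proves Proposition~\ref{prop:exp3-linear} from scratch: it forms the constrained problem $\min_h\{c_H(h,m)\mid m=(L+\Delta)h\}$, writes the first-order condition $\bigl(\tfrac{7}{15}(L+\Delta)^2-\tfrac{2}{3}(L+\Delta)+1\bigr)h-\tfrac{22}{75}(L+\Delta)+\tfrac{2}{15}=0$, checks the second-order condition, and solves --- which is exactly the ``self-contained derivation'' you offer as a fallback in your last paragraph. Your lead argument instead obtains the result as an immediate specialization of Proposition~\ref{prop:exp-affine}, setting the intercept $\delta=0$ and instantiating the slope parameter at the value $L+\Delta$; this is legitimate, since the slope in that proposition is an arbitrary real constrained only by the second-order condition, and that condition transforms into precisely the stated hypothesis. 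The specialization buys brevity and avoids duplicating algebra, and the paper itself tacitly endorses the equivalence: just before Proposition~\ref{prop:exp3-grad} it defines the response $r(L)$ ``as determined by Proposition~\ref{prop:exp-affine} or Proposition~\ref{prop:exp3-linear} with the perturbations set to zero.'' What the paper's redundant direct computation buys in exchange is a proposition that can be read (and checked) independently of Experiment~2's analysis, which is presumably why the authors repeated it.
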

\begin{proof}
The human's optimization problem is
\eqnn{scalar-optH-D}{
\min_{h}\ \{ c_H(h,m)\mid m=(L+\Delta)h\}.
}
The second order condition of \eqref{eq:scalar-optH-D} is
\eqn{
\tfrac{7}{15}(L+\Delta)^2- \tfrac{2}{3}(L+\Delta)+1 >0.
}
The first order condition of \eqref{eq:scalar-optH-D} is
\eqn{
(\tfrac{7}{15}(L+\Delta)^2- \tfrac{2}{3}(L+\Delta)+1 )h 
- \tfrac{22}{75}(L+\Delta)+\tfrac{2}{15}  =0
}
Solving for $h$ gives human's response 
\eqnn{scalar-brH-D}{
h=\frac{22(L+\Delta)-10}{ 35 (L+\Delta)^2 - 50(L+\Delta) + 75}.
}
\end{proof}

The following Proposition~\ref{prop:exp3-grad} describes how to estimate the policy gradient using two trials as done in Experiment 3. 
Suppose the machine plays policy $m=Lh$, then the human's response is given by
\eqn{
r(L):=\frac{22L-10}{ 35 L^2 - 50L + 75}
}
as determined by Proposition~\ref{prop:exp-affine} or Proposition~\ref{prop:exp3-linear} with the perturbations set to zero.
\begin{proposition}
\label{prop:exp3-grad}
Given a human-machine co-adaptation game determined by cost functions \eqref{eq:costH} and \eqref{eq:costM}, 
if the machine's policies are $m=Lh$ and $m'=(L+\Delta)h'$ and the human's best responses are $h=r(L)$ and $h'=r(L+\Delta)$,
then 
\eqn{
\lim_{\Delta\to 0}\frac{c_M(h',m')-c_M(h,m)}{\Delta}= D_Lc_M(r(L),Lr(L))
}

\end{proposition}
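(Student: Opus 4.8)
The plan is to recognize that both cost evaluations appearing in the difference quotient are values of a single scalar function of $L$, so that the quotient is literally a Newton quotient whose limit is that function's derivative. Define the composite (auxiliary) cost
\[
g(L) := c_M\bigl(r(L),\, L\,r(L)\bigr),
\]
which coincides with $\widetilde{c}_M$ of \eqref{eq:exp3:costM} once the machine's optimum $(h_M^\ast,m_M^\ast)=(0,0)$ is substituted. Note that the right-hand side $D_L c_M(r(L), L r(L))$ in the statement is, by definition, $g'(L)$, so the proposition is the assertion that the finite difference converges to $g'(L)$.

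First I would substitute the policies and best responses into the two cost evaluations. Since $m=Lh$ with $h=r(L)$, the nominal joint action is $(h,m)=(r(L),\,L\,r(L))$, whence $c_M(h,m)=g(L)$. Likewise, since $m'=(L+\Delta)h'$ with $h'=r(L+\Delta)$, the perturbed joint action is $(h',m')=(r(L+\Delta),\,(L+\Delta)\,r(L+\Delta))$, whence $c_M(h',m')=g(L+\Delta)$. Therefore the difference quotient in the statement is exactly
\[
\frac{c_M(h',m')-c_M(h,m)}{\Delta}=\frac{g(L+\Delta)-g(L)}{\Delta},
\]
the Newton quotient of $g$ at $L$.

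Next I would argue that $g$ is differentiable, so that this Newton quotient converges to $g'(L)$. By Proposition~\ref{prop:exp3-linear} the best response is the rational function $r(L)=(22L-10)/(35L^2-50L+75)$; its denominator has negative discriminant ($50^2-4\cdot 35\cdot 75=-8000<0$) and positive leading coefficient, hence never vanishes, so $r$ is smooth on all of $\R$. The parametrized curve $L\mapsto(r(L),\,L\,r(L))$ is then smooth, and composing with the polynomial $c_M$ shows $g$ is smooth. Consequently
\[
\lim_{\Delta\to 0}\frac{g(L+\Delta)-g(L)}{\Delta}=g'(L)=D_L c_M\bigl(r(L),\,L\,r(L)\bigr),
\]
which is the claim; if an explicit expression were wanted, the chain rule would give $g'(L)=\partial_h c_M\cdot r'(L)+\partial_m c_M\cdot\bigl(r(L)+L\,r'(L)\bigr)$, but this is not needed to match the total-derivative notation.

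There is essentially no hard computational step: the content is the observation that perturbing only the slope and re-evaluating the human's best response traces out the one-parameter curve $L\mapsto(r(L),\,L\,r(L))$, so the finite difference of $c_M$ along it is a Newton quotient. The only point requiring care is the differentiability of $g$ — equivalently, confirming the denominator of $r$ has no real roots (and that the second-order condition of Proposition~\ref{prop:exp3-linear} holds for every $L$, since its quadratic $\tfrac{7}{15}L^2-\tfrac{2}{3}L+1$ likewise has negative discriminant) — which guarantees the best response is well defined and smooth and hence that the limit exists.
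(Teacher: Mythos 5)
Your proof is correct, but it takes a genuinely different route from the paper. The paper proceeds by brute-force computation: it substitutes $r(L)=(22L-10)/(35L^2-50L+75)$ into $c_M$ to obtain the explicit rational function $c_M(r(L),Lr(L))=\frac{(L^2-2L+2)(22L-10)^2}{2(35L^2-50L+75)^2}$, expands the difference $c_M(h',m')-c_M(h,m)$ in powers of $\Delta$, discards the $O(\Delta^2)$ terms, and checks that the coefficient of $\Delta$ equals the directly computed derivative $\partial_L c_M(r(L),Lr(L))=\frac{4(11L-5)(2L^3+181L^2-380L+305)}{25(7L^2-10L+15)^3}$. You instead make the structural observation that both cost evaluations are values of the single composite $g(L)=c_M(r(L),Lr(L))$, so the quotient is literally a Newton quotient, and the only thing to verify is smoothness of $g$ --- which you do correctly via the negative discriminant of the denominator of $r$ (and of the second-order condition, ensuring $r$ really is the global best response for every $L$). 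Your argument is shorter, less error-prone (the paper's displayed difference actually contains a slip, writing $Lr(L+\Delta)$ where $(L+\Delta)r(L+\Delta)$ is meant, though the subsequent expansion uses the correct quantity), and generalizes immediately to any smooth cost and best-response map. What the paper's computation buys, however, is the explicit closed-form gradient: the factored numerator $(11L-5)(2L^3+181L^2-380L+305)$ is exactly what Proposition~\ref{prop:exp3-pg} needs to locate the fixed points $L^*=5/11$ and $L^{**}\approx-92.6$ of the policy-gradient iteration and to assess their stability, so if you used your proof in the paper you would still need to carry out that computation downstream.
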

\begin{proof}
From Proposition~\ref{prop:exp-affine}, if machine's policy is $m=Lh$ and the human's best response is
\eqn{
h=\frac{22L-10}{ 35 L^2 - 50L + 75}.
} 
The machine's cost written as a function of $L$ is
\eqn{
c_M(h,m)=c_M(r(L),Lr(L))
&= \tfrac{1}{2} L^2r(L)^2 + r(L)^2 - L r(L)^2 \\
&= \tfrac{1}{2}(L^2-2L+2)r(L)^2\\
&= \frac{(L^2-2L+2)(22L-10)^2}{2( 35 L^2 - 50L + 75)^2}
}
The difference term is
\eqn{
c_M(h',m')-c_M(h,m)=c_M(r(L+\Delta),Lr(L+\Delta))-c_M(r(L),Lr(L))
}
Expanding out the terms, ignoring the terms of order $\Delta^2$ or higher, we have
\eqn{
c_M(h',m')-c_M(h,m)&=\frac{((L+\Delta)^2-2(L+\Delta)+2)(22(L+\Delta)-10)^2}{2( 35 (L+\Delta)^2 - 50(L+\Delta) + 75)^2}
-\frac{(L^2-2L+2)(22L-10)^2}{2( 35 L^2 - 50L + 75)^2}\\
& 
=\frac{4( 11 L-5) (2 L^3 + 181 L^2 - 380 L + 305  )}{25 (7 L^2 - 10 L + 15 )^3}\Delta  + \paren{\cdots}\Delta^2 + \cdots}
Dividing by $\Delta$ and taking $\Delta $ to zero gives us the same expression as
directly computing the derivative of the  cost:
\eqn{
\partial_L c_M(r(L),Lr(L)) = \frac{4( 11 L-5) (2 L^3 + 181 L^2 - 380 L + 305  )}{25 (7 L^2 - 10 L + 15 )^3}.
}
Hence, we get the desired result.
\end{proof}

The following Proposition~\ref{prop:exp3-rse} shows that there is a unique machine-led reverse Stackelberg equilibrium of the game. The equilibrium solution concept is defined in~\secref{theory-def}. It describes the scenario where the leader announces a policy and the follower responds to the policy. In contrast, the Stackelberg equilibrium in Proposition~\ref{prop:exp1-alpha1} describes the scenario where the leader announces an action and the follower response to the action.

\begin{proposition}
\label{prop:exp3-rse}
Given a human-machine co-adaptation game determined by cost functions \eqref{eq:costH} and \eqref{eq:costM}, 
there exists a reverse Stackelberg equilibrium. 
\end{proposition}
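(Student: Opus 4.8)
The plan is to exhibit a reverse Stackelberg equilibrium explicitly by constructing an affine machine policy that drives the human's best response onto the machine's own global optimum. Because the machine is the leader, the machine cost at any induced outcome can never fall below its unconstrained minimum $\min_{h,m}\costM(h,m)$, which is attained at $(\Opt{h}_M,\Opt{m}_M)=(0,0)$ with $\costM(0,0)=0$. It therefore suffices to produce a single policy $\pi$ whose induced outcome $(r(\pi),\pi(r(\pi)))$ equals $(0,0)$: such a $\pi$ attains the lower bound and is automatically a minimizer over the entire policy space.

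First I would restrict attention to affine policies $m=Lh+\delta$, for which Proposition \ref{prop:exp-affine} supplies the human's unique best response in closed form. To place the induced joint action at the origin the policy must pass through it, which forces $\delta=0$ and hence $\pi(0)=0$; with $\delta=0$ the best response from Proposition \ref{prop:exp-affine} reduces to $h=\tfrac{22L-10}{35L^2-50L+75}$. Setting the numerator to zero gives the unique slope $L=\tfrac{5}{11}$, and I would confirm that the second-order condition $\tfrac{7}{15}L^2-\tfrac{2}{3}L+1>0$ holds at this slope so that the best response is a genuine minimizer. For $L=\tfrac{5}{11}$ the human plays $h=0$ and the machine plays $m=\tfrac{5}{11}\cdot 0=0$, so the induced joint action is exactly $(0,0)$.

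I would then assemble these pieces into the definition of the reverse Stackelberg equilibrium from \secref{theory-def}: take $\RStack{\pi}(h)=\tfrac{5}{11}h$, $\RStack{h}=r(\RStack{\pi})=0$, and $\RStack{m}=\RStack{\pi}(0)=0$. Since $\costM(0,0)=0$ equals the global minimum of $\costM$, no competing policy can produce a strictly smaller machine cost at its induced best-response point, so $\RStack{\pi}$ is an $\arg\min$ over all policies — which is precisely the statement that the problem is incentive-controllable in the sense of \cite{Ho1981-wu}, and this certifies existence.

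The step I expect to be the main obstacle is the optimality claim over the full, infinite-dimensional space of policies rather than merely the affine ones. The lower-bound argument is what sidesteps a direct optimization over arbitrary $\pi$: every policy yields some outcome in the plane, no outcome can beat the machine's unconstrained optimum, and the constructed policy hits that optimum exactly, so it must be optimal. The remaining care lies in confirming the well-definedness and uniqueness of the human's best response through the second-order condition, so that $r(\RStack{\pi})$ is unambiguous.
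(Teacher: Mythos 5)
Your proposal is correct and takes essentially the same route as the paper's proof: restrict to linear policies $m=Lh$, use the closed-form best response $r(L)=\tfrac{22L-10}{35L^2-50L+75}$, choose $L=\tfrac{5}{11}$ so that $r(L)=0$, verify the second-order condition, and conclude that the induced outcome $(0,0)$ is the machine's global optimum. Your explicit lower-bound argument for optimality over the full policy space is a sound way to close the final step, which the paper leaves implicit by appealing to incentive controllability as noted in \secref{theory-def}.
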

\begin{proof}
The machine's global optimum solves
\eqn{
\min_{h,m} c_M(h,m).
}
The machine's global optimum is $(h,m)=(0,0)$.

Suppose the machine's policy is $m=Lh$, then the human's optimization problem is
\eqn{
\min_h \{ c_H(h,m)\mid m=Lh \}
}
and the best response is
\eqn{
	h=r(L)=\frac{22L-10}{ 35 L^2 - 50L + 75}
}
The machine wants to drive the human to play $0=r(L)$. Hence
the machine chooses $L=5/11$. 

The second order condition is
\eqn{
\tfrac{7}{15}L^2- \tfrac{2}{3}L+1 >0.
}
which is satisfied by $L=5/11$. Hence $(0,0)$ is a machine-led reverse Stackelberg equilibrium.
\end{proof}

The following Proposition~\ref{prop:exp3-pg} shows that Experiment 3 converges to a stable equilibrium.

\begin{proposition}
\label{prop:exp3-pg}
Given a human-machine co-adaptation game determined by cost functions \eqref{eq:costH} and \eqref{eq:costM}, 
if the machine plays policy $m=Lh$ and the human responds with 
$h=r(L)$ and
machine's updates its policy by gradient descent,
\eqn{
L_{k+1}=L_k-\alpha \partial_L c_M(r(L_k),L_kr(L_k))
}
then $L^* = 5/11$ is a locally exponentially stable fixed point of this iteration for all $\alpha > 0$ sufficiently small.

\end{proposition}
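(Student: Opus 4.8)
The plan is to recognize the iteration as a one-dimensional discrete dynamical system and apply the standard linearization criterion for local exponential stability. Define $g(L) := \partial_L c_M(r(L), L\,r(L))$, so that the update map is $F_\alpha(L) = L - \alpha\, g(L)$. By Proposition~\ref{prop:exp3-grad}, $g$ has the explicit rational form
\[
g(L) = \frac{4(11L-5)\,(2L^3 + 181 L^2 - 380 L + 305)}{25\,(7L^2 - 10L + 15)^3}.
\]
First I would verify that $L^* = 5/11$ is a fixed point of $F_\alpha$. Since the numerator of $g$ carries the factor $(11L-5)$, we have $g(5/11) = 0$ and hence $F_\alpha(5/11) = 5/11$ for \emph{every} $\alpha$. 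This is consistent with Proposition~\ref{prop:exp3-rse}, where $L = 5/11$ is precisely the slope that drives the human's best response $r(L)$ to the machine's global optimum $h = 0$; the policy gradient vanishes exactly at the reverse Stackelberg slope.

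Next, for local exponential stability I would linearize $F_\alpha$ at $L^*$. Since $F_\alpha'(L) = 1 - \alpha\, g'(L)$, the fixed point is locally exponentially stable precisely when $|F_\alpha'(L^*)| = |1 - \alpha\, g'(L^*)| < 1$, i.e.\ when $0 < \alpha\, g'(L^*) < 2$. The crucial step is therefore to show $g'(L^*) > 0$. Because $g$ has a \emph{simple} zero at $L^*$, I would factor $g(L) = (11L-5)\,\phi(L)$ with
\[
\phi(L) = \frac{4\,(2L^3 + 181 L^2 - 380 L + 305)}{25\,(7L^2 - 10L + 15)^3},
\]
so that the product rule gives $g'(5/11) = 11\,\phi(5/11)$, the second term dropping out because $(11L-5)$ vanishes at $L^*$. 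Evaluating at $L^* = 5/11$, the quadratic $7L^2 - 10L + 15$ equals $1440/121 > 0$, so the denominator is positive, and the cubic $2L^3 + 181L^2 - 380L + 305$ evaluates to $226080/1331 > 0$; hence $\phi(5/11) > 0$ and $g'(L^*) > 0$.

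Finally I would conclude that for every $\alpha \in \bigl(0,\, 2/g'(L^*)\bigr)$ — in particular for all sufficiently small $\alpha > 0$ — we have $|F_\alpha'(L^*)| < 1$. The standard contraction argument for $C^1$ maps on the line (the mean value theorem applied on a neighborhood where $|F_\alpha'| < 1$) then yields local exponential convergence of the iterates $L_k$ to $L^* = 5/11$. I expect the main obstacle to be the positivity check on $g'(L^*)$: although it reduces to evaluating one quadratic and one cubic at $L^* = 5/11$, one must confirm both the sign of $g'(L^*)$ and that the resulting stability window $\bigl(0,\, 2/g'(L^*)\bigr)$ is a nonempty interval of admissible adaptation rates, which is exactly what the qualifier ``for all $\alpha > 0$ sufficiently small'' encodes.
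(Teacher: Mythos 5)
Your proposal is correct and follows essentially the same route as the paper: identify $L^*=5/11$ as a root of the quartic numerator $(11L-5)(2L^3+181L^2-380L+305)$ of the policy gradient, then linearize and conclude stability from positivity of the derivative of that gradient (the paper states this second derivative numerically as $\approx 0.18$, which matches your exact value $11\,\phi(5/11)$). Your exact rational evaluation via the simple-zero factoring and the explicit stability window $\alpha \in \bigl(0,\,2/g'(L^*)\bigr)$ is a slightly more careful execution of the same linearization argument, not a different approach.
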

\begin{proof}
 The roots 
 of $\partial_L c_M(r(L_k),L_kr(L_k))=0$
 are determined by the solutions to a quartic equation
\eqnn{scalar-pg-eq}{
( 11 L-5) (2 L^3 + 181 L^2 - 380 L + 305  )=0.
}
There are two real solutions to \eqref{eq:scalar-pg-eq}, the first one $L^*=\tfrac{5}{11}$  can be seen by inspection, 
and the second one is, approximately, 
$L^{**}\approx-92.6.$

The stability is determined by linearizing at the particular fixed point and ensuring that the second derivative is positive. The linearization the derivative at root $L_M^*$ is
\eqnn{scalar-pg-linearized}{
\partial_{L}c_M(r(L),Lr(L))\approx \partial_{L}^{\ 2} c_M(r(L^*),L^*r(L^*))(L-L^*)}
The second derivative $\partial_{L_M}^{\ 2}c_M\approx 0.18$ evaluated at $L^*$ is positive, so the fixed point $L_M^*$ is stable. 
The second derivative evaluated at $L^{**}$ is negative, so the fixed point is unstable.
\end{proof}

\section{Interpretations of consistent conjectural variations}
\label{sec:interpretation}
In this section, interpretations of the consistency conditions with regards to conjectural variations are provided. They relate to partial differential equations that arise in economics and non-cooperative dynamic games. 
\subsection{Comparative statics}
\label{sec:interpretation-comparative-statics}

A quintessential microeconomics tool, \emph{comparative statics} (or \emph{sensitivity analysis} more generally) is a technique for comparing economic outcomes given a change in an exogenous parameter or \emph{intervention} \cite{varian1992microeconomic}.
If the expression $f(x,y)=0$ defines the equilibrium conditions for an economy where $x$ is an endogenous parameter (e.g., price of a product) and $y$ is an exogenous parameter (e.g., demand for a product), then up to first order the change in $x$ caused by a (small) change in $y$ must satisfy $\partial_xf\cdot \mathrm{d}x+\partial_y f\cdot \mathrm{d}y=0$, and under sufficient regularity, we may write $\mathrm{d}x/\mathrm{d}y=-(\partial_xf)^{-1}\cdot\partial_y f$. Comparative statics can also be applied to equilibrium conditions for an optimization problem.

This is precisely how it is used here: comparative statics analysis is applied to the first-order optimality conditions for 
\eqnn{statics-opt-h}{
\arg\min_m\{c_H(h,m)\mid m=\pi_M(h)\}
}
wherein the machine's action is treated as the intervention. Specifically, given an affine policy $\polM(\actH)=\ConjM\actH+\conjM$ and \eqref{eq:statics-opt-h}, we use this microeconomics analysis tool to understand how changes in $\actM$ induce changes in $\actH$ that are consistent with the optimality conditions of \eqref{eq:statics-opt-h}. 
This leads to a  process by which we derive an expression for the human's (best-)response in terms of the policy parameters $(\ConjM,\conjM)$ and the machine's corresponding action $\actM$.
First-order optimality conditions for \eqref{eq:statics-opt-h} are given by
\eqnns{firstorder}{
    0&=\partial_\actH\costH(\actH,\polM(\actH))|_{\polM(h)=\actM} + \partial_\actM\costH(\actH,\polM(\actH))|_{\polM(\actH)=\actM}\cdot \partial_\actH \polM(\actH),\\
    &=\partial_\actH\costH(\actH,\polM(\actH))|_{\polM(h)=\actM} + \partial_\actM\costH(\actH,\polM(\actH))|_{\polM(\actH)=\actM}\cdot \ConjM.
}
Using comparative statics as described above, 
we have that
\eqnn{cshuman}{
0=\partial_\actH^2\costH(\actH,\actM) \mathrm{d}\actH +\partial_{\actH\actM}^2\costH(\actH,\actM)\mathrm{d}\actM+( \partial_{\actH\actM}^2\costH(\actH,\actM)\mathrm{d}\actH+\partial_\actM^2\costH(\actH,\actM)\mathrm{d}\actM)\ConjM.
}
Hence, we deduce that
\eqnns{humanconj}{
\ConjH:=\frac{\mathrm{d}\actH}{\mathrm{d}\actM}&=-(\partial^2_\actH \costH+\partial_{\actH\actM}\costH\cdot\ConjM)^{-1}(\partial_{\actH\actM}\costH+\ConjM^\top \cdot \partial^2_\actM\costH),\\
&=-(A_H+\ConjM^\top B_H )^{-1}(B_H+\ConjM^\top D_H).
}
In Experiment 2, we will see a procedure for estimating the human's response $\widehat{\actH}$ as a function of $\actM$ by affinely perturbing $\polM(\actH)=\ConjM\actH+\conjM$. 
The machine then uses the estimate for the human's response as its conjecture in
\eqnn{machineresponse}{
\arg\min_{\actM}\{\costM(\actH,\actM)|\ \actH={\ConjH}\actM+{\conjH}\}
}
and obtain the policy it should implement at the next level.\newline

\subsection{Order of consistency via Taylor series approximation}
\label{sec:interpretation-basar}
\renewcommand{\Ccve}[1]{{#1}^{\tt c}}

Basar and Olsder \cite{bacsar1998dynamic} derives different orders of consistent conjectural variations equilibrium by taking the Taylor expansion of a conjecture to the cubic order. Let $(\Ccve{h},\Ccve{m})$ be the consistent conjectural variations equilibrium, $(\Ccve{L}_H,\Ccve{L}_M)$ be the consistent conjecture policy slopes. Let $\Ccve{\ell}_H=\Ccve{h}-\Ccve{L}_H\Ccve{m}$ and $\Ccve{\ell}_M=\Ccve{m}-\Ccve{L}_M\Ccve{h}$. The first order representation of a conjecture, that is an affine conjecture
\eqn{
\Ccve{h}&\approx \Ccve{L}_H m + \Ccve{\ell}_H+ \mc O(m^2),\\
\Ccve{m}&\approx \Ccve{L}_M h + \Ccve{\ell}_M + \mc O(h^2)} 
The 
partial differential equations that describe stationarity are
\eqn{
\frac{\partial c_H(h,m)}{\partial h} + \frac{\partial c_H(h,m)}{\partial m} \cdot \frac{\partial (\Ccve{L}_Mh+\Ccve{\ell}_M)}{\partial h}&=0,\text{ for }h=\Ccve{L}_Hm+\Ccve{\ell}_H,\\
\frac{\partial c_M(h,m)}{\partial m} + \frac{\partial c_M(h,m)}{\partial h} \cdot \frac{\partial (\Ccve{L}_Hm+\Ccve{\ell}_H)}{\partial m}&=0,\text{ for }m=\Ccve{L}_Mh+\Ccve{\ell}_M,\\
}
Writing what basar calls the ``first-order'' CCVE has stationarity conditions
\eqn{
\textstyle
\frac{\partial^2 c_H}{\partial h^2}
\cdot
\frac{\partial (\Ccve{L}_Hm+\Ccve{\ell}_H)}{\partial m}
+
\frac{\partial^2 c_H}{\partial h \partial m}
\paren{1+\frac{\partial (\Ccve{L}_Hm+\Ccve{\ell}_H)}{\partial m}\cdot \frac{\partial (\Ccve{L}_Mh+\Ccve{\ell}_M)}{\partial h}}
+\frac{\partial^2 c_H}{\partial m^2}
\cdot
\frac{\partial (\Ccve{L}_Mh+\Ccve{\ell}_M)}{\partial h}
=0,\\
\textstyle
\frac{\partial^2 c_M}{\partial m^2}
\cdot
\frac{\partial (\Ccve{L}_Mh+\Ccve{\ell}_M)}{\partial h}
+
\frac{\partial^2 c_M}{\partial m \partial h}
\paren{1+ \frac{\partial (\Ccve{L}_Mh+\Ccve{\ell}_M)}{\partial h}\cdot \frac{\partial (\Ccve{L}_Hm+\Ccve{\ell}_H)}{\partial m}}
+\frac{\partial^2 c_M}{\partial h^2}
\cdot
\frac{\partial (\Ccve{L}_Hm+\Ccve{\ell}_H)}{\partial m}
=0,
}
with arguments at $(h,m)=(\Ccve{h},\Ccve{m})$.
Hence
\eqn{
A_H\Ccve{L}_H+B_H(1+\Ccve{L}_H\Ccve{L}_M)+D_H\Ccve{L}_M=0,\\
A_M\Ccve{L}_M+B_M(1+\Ccve{L}_M\Ccve{L}_H)+D_M\Ccve{L}_H=0,\\
}
Solving for $\Ccve{L}_H,\Ccve{L}_M$ from the above equations gives
\eqn{
\Ccve{L}_H=-\frac{B_H+\Ccve{L}_M D_H}{A_H+\Ccve{L}_MB_M},\\
\Ccve{L}_M=-\frac{B_M+\Ccve{L}_H D_M}{A_M+\Ccve{L}_HB_H}
}
which shows that $\Ccve{L}_H,\Ccve{L}_M$ are fixed points of the conjectural iteration.


\newpage 
\section*{Extended data sections}
The 
additional methods are in
\secref{supp-mat}. The details on Experiments 1, 2 and 3 are in
\secref{supp-exp1},
\secref{supp-exp2},
and \secref{supp-exp3}.
Numerical simulations of the adaptive algorithms used in Experiments 1, 2 and 3
are in \secref{supp-sim}.
The experiments are shown to be generalizable through additional experiments  in
\secref{additional-experiments}, where experiment parameters and cost structures are varied.
The user study task load survey and feedback forms are provided in 
\secref{feedback}.
\appendix 
\renewcommand{\thesubsection}{\Alph{section}.\arabic{subsection}}
\setcounter{section}{0}

\section{Additional Methods}

\label{sec:supp-mat}
Additional experiments, whose results are reported in this Supplement but not the main paper, were conducted with different quadratic and non-quadratic costs to demonstrate the generality of the experiment and theory.
{First (\secref{exp3-other-init}), Experiment 3 was repeated with a different initialization of the machine's policy: instead of initializing the machine's policy to $m=h$, it was initialized to $m=0$. 
Next (\secref{exp3-other-optima}), Experiment 3 was repeated 9 times with different global {\optima} for the machine:
the machine's quadratic cost re-parameterized as 
\eqn{
c_M(h,m)=\frac{1}{2}(m-m_M^\ast)^2 - (m-m_M^\ast)(h-h_M^\ast) 
+(h-h_H^\ast)^2
}
with
$h_M^\ast \in \{-0.1,0,+0.1\}$ and $m_M^\ast \in \{-0.1,0,+0.1\}$ to test
whether the machine can drive the behavior to any one of a finite set of 
points in the joint action space, and to test whether the reverse-Stackelberg equilibrium $(\RStack{h},\RStack{m})=(h_M^\ast, m_M^\ast)$ is a stable equilibrium of policy gradient.

Subsequently (\secref{exp-cobb}), Experiments 1, 2, and 3 were repeated with non-quadratic cost functions in the \emph{Cobb-Douglas} form (modified from the example in Section~C.2 of~\cite{Figuieres2004-ls}):
%
\eqnn{Cobb-Douglas:H}{
\costH(\actH,\actM) = 1-2(1-\actH)^{0.175}(\actH + 1.1 \actM)^{0.5}\ 
}
was used in replicates of Experiments 1, 2, and 3;
\eqnn{Cobb-Douglas:M}{
\costM(\actH,\actM) = 1-2(1 - \actM)^{0.2}(\actM + 1.1 \actH)^{0.5}
}
was used in replicates of Experiments 1 and 2, and
\eqnn{Cobb-Douglas:M:quad}{
c_M(h,m)=(m-m_M^\ast)^2+(h-h_M^\ast)^2\ \text{with}\ (m_M^\ast,h_M^\ast)=(0.5, 0.5)
}
was used in replicates of Experiment 3.
Pairing $\costH$ from~\eqref{eq:Cobb-Douglas:H} with $\costM$ from~\eqref{eq:Cobb-Douglas:M} yields the following game-theoretic equilibria in the replicates of Experiments 1 and 2:  
\eqn{
(\Nash{\actH},\Nash{\actM}) & \approx (0.590, 0.529),\\
(\Stack{\actH},\Stack{\actM}) & \approx (0.429, 0.579),\\
(\Ccve{\actH},\Ccve{\actM}) & \approx (0.392,0.336).
}
Pairing $\costH$ from~\eqref{eq:Cobb-Douglas:H} with $\costM$ from~\eqref{eq:Cobb-Douglas:M:quad} yields the following  equilibrium in the replicates of Experiment 3:  
\eqn{
(\RStack{\actH},\RStack{\actM})&=(0.5,0.5).
}
The human's actions were constrained to $[0.2,0.8]$ in these replicates of the experiments and the manual input was accordingly normalized to this range. 
The machine's actions were constrained to $[0,1]$. 
Experiment-specific changes to protocol designs are described in subsequent subsections.
}

\subsection{Experiment 1: gradient descent in action space} 
\label{sec:supp-exp1}

Protocol~S\ref{code:exp1} summarizes the procedure for Experiment 1.

{The preceding methods were modified as follows for the experiments with non-quadratic costs in~\secref{exp-cobb}: 
the policy implemented for the case $\alpha = \infty$ was $m=-\frac{77}{270}h+\frac{20}{27}$;
the joint action was initialized uniformly at random in the square $[0.3,0.7]\times[0.3,0.7]\subset\R^2$.
}

\subsection{Experiment 2: conjectural variation in policy space}  
\label{sec:supp-exp2}
Protocol~S\ref{code:exp2} summarizes the procedure for Experiment 2.

The preceding methods were modified as follows for the experiments with non-quadratic costs in \secref{exp-cobb}: 
given non-quadratic cost in Cobb-Douglas form 
\eqnn{cobbdouglas-cost}{c_M(h,m)=1-2(1-m)^{a_M} (m+d_Mh)^{b_M}} 
where $a_M,b_M>0$ and $d_M\geq 1$,
the machine's conjectural variation iteration is 
\eqnns{cobbdouglas-cv}{
L_{M,k+1} & =-\frac{a_Md_M}{a_M+b_M+b_M d_M L_H},\\
\ell_{M,k+1} & = \frac{b_M+b_Md_ML_H}{a_M+b_M+b_M d_M L_H}.
}

\subsection{Experiment 3: gradient descent in policy space} 
\label{sec:supp-exp3}
Protocol~S\ref{code:exp3} summarizes the procedure for Experiment 3. 

See Propositions  \ref{prop:exp3-grad} and \ref{prop:exp3-pg} in \secref{scalar-exp3} for the theoretical results on the policy gradient estimate and convergence.

\section{Additional experimental results}
\label{sec:additional-experiments}

Additional experiments were conducted with different quadratic and non-quadratic costs to demonstrate the generality of the experimental and theoretical results.

\subsection{Machine initialization (Experiment 3)}\label{sec:exp3-other-init}
To demonstrate that the outcome of the machine's policy gradient adaptation algorithm does not depend on the initialization of the machine's policy, we repeated Experiment 3 with initial policy slope to $L_M=0$. Iterating policy gradient shifted the distribution of median action vectors for a population of human subjects to the machine's global optimum (\figref{exp3-other-init}).

\subsection{Machine {\optimum} (Experiment 3)}\label{sec:exp3-other-optima}
To demonstrate that the machine can drive the human action to any point in the action space so long as the joint action profile is stable, the three experiments were conducted with differing machine minima. A grid of machine minima were tested $h_M^\ast\in\{-0.1,0,+0.1\}$ and $m_M^\ast\in\{-0.1,0,+0.1\}$. Iterating policy gradient descent shifted the distribution of median action vectors for a population of human subjects to the machine's global optimum (\figref{exp3-other-optima}).

\subsection{Non-quadratic costs (Modified Experiments 1, 2, and 3)}\label{sec:exp-cobb}
To demonstrate the generality of the experiments and theory, we conducted modified Experiments 1, 2 and 3 using non-quadratic costs. In Experiment 1, the distributions of median action vectors for a population of human subjects shifted from the Nash equilibrium at the slowest rate to the human-led Stackelberg equilibrium at the fastest adaptation rate (\subfigref{exp-cobb}{A}). In Experiment 2, iterating the process of estimating conjectural variations shifted the distribution of median action vectors for a population of human subjects from the human-led Stackelberg equilibrium to a consistent conjectural variations equilibrium (\subfigref{exp-cobb}{B}). In Experiment 3, iterating policy gradient descent shifted the distribution of median action vectors for a population of human subjects to the machine's global minimum (\subfigref{exp-cobb}{C}).

\subsection{Numerical simulations}
The three experiments were numerically simulated. The results from the simulation are overlaid on top of the violin data plots from the main paper (\figref{sim-quad}). In Experiment 1, the simulation captures the transition from the Nash equilibrium at the slowest rate to the human-led Stackelberg equilibrium at the fastest rate (\subfigref{sim-quad}{A}). In Experiment 2, the simulation captures the transition from the human-led Stackelberg equilibrium to the consistent conjectural variations equilibrium (\subfigref{sim-quad}{B}). In Experiment 3, the simulation captures the transition from the human-led Stackelberg equilibrium to the machine's global optimum (\subfigref{sim-quad}{C}).

\subsection{Consistency vs.\ Pareto-optimality}
To demonstrate that the equilibrium points reached in the experiments are not Pareto-optimal, 
except for the machine's global minimum, the sets are compared with the consistent conjecture conditions  (\figref{pareto-consistency}).
The Pareto-optimal set of actions solve 
\eqnn{Pareto}{\min_{h,m}\ \gamma c_H(h,m) + (1-\gamma) c_M(h,m)} 
for $\gamma$ between 0 and 1. 
See \cite{Debreu1954-ip} for the definition of Pareto optimality.
The consistency conditions are satisfied when one player's conjecture is equal to the other player's policy (see Definition 4.9 of \cite{bacsar1998dynamic}).
 The data from Experiments 2 and 3 from the main paper, and Experiment 3 with different initialization from~\secref{exp3-other-init} are plotted in \figref{pareto-consistency}. The data overlap the curve where the human's conjecture is consistent with the machine's policy.

\newpage

\begin{table}[t]
\footnotesize\centering
Results from statistical tests for Experiments 1, 2 and 3 with $P$-values, $t$-statistics, and Cohen's $d$.

\begin{tabular}{|lllll|}
\hline
Experiment 1 &&&& \\
\bf $H_0$: mean of initial Human action distribution is equal to $h^{\NE}$        & 
$P=0.20$	& $t=+1.3$	& $d=+0.2$ & \\
\bf $H_0$: mean of initial Machine action distribution is equal to $m^{\NE}$      & 
$P=1.00$	& $t=+0.0$	& $d=-1.0$ & \\
$H_0$: mean of initial Human action distribution is equal to $h^{\SE}$        & 
$P=0.00$	& $t=-26.9$	& $d=-4.2$ &  $\star$\\
$H_0$: mean of initial Machine action distribution is equal to $m^{\SE}$      & 
$P=0.00$	& $t=-\infty$	& $d=-\infty$ &  $\star$\\
$H_0$: mean of final Human action distribution is equal to $h^{\NE}$          & 
$P=0.00$	& $t=+21.2$	& $d=+3.4$ &  $\star$\\
$H_0$: mean of final Machine action distribution is equal to $m^{\NE}$        & 
$P=0.00$	& $t=+21.2$	& $d=+3.4$ &  $\star$\\
\bf $H_0$: mean of final Human action distribution is equal to $h^{\SE}$          & 
$P=0.49$	& $t=-0.7$	& $d=-0.1$ & \\
\bf $H_0$: mean of final Machine action distribution is equal to $m^{\SE}$        & 
$P=0.49$	& $t=-0.7$	& $d=-0.1$ & \\
\hline
Experiment 2 &&&& \\
\bf $H_0$: mean of initial Human action distribution is equal to $h^{\SE}$        & 
$P=0.24$	& $t=-1.2$	& $d=-0.3$ & \\
\bf $H_0$: mean of initial Machine action distribution is equal to $m^{\SE}$      & 
$P=0.24$	& $t=-1.2$	& $d=-0.3$ & \\
\bf $H_0$: mean of initial Human policy distribution is equal to $L_H^{\SE}$      & 
$P=0.10$	& $t=+1.7$	& $d=+0.4$ & \\
\bf $H_0$: mean of initial Machine policy distribution is equal to $L_M^{\SE}$    & 
$P=1.00$	& $t=+0.0$	& $d=\text{NaN}$ & \\
$H_0$: mean of initial Human action distribution is equal to $h^{\CCVE}$      & 
$P=0.00$	& $t=-10.0$	& $d=-2.3$ &  $\star$\\
$H_0$: mean of initial Machine action distribution is equal to $m^{\CCVE}$    & 
$P=0.00$	& $t=-21.3$	& $d=-4.9$ &  $\star$\\
$H_0$: mean of initial Human policy distribution is equal to $L_H^{\CCVE}$    & 
$P=0.00$	& $t=+12.1$	& $d=+2.8$ &  $\star$\\
$H_0$: mean of initial Machine policy distribution is equal to $L_M^{\CCVE}$  & 
$P=0.00$	& $t=-\infty$	& $d=\text{NaN}$ &  $\star$\\
$H_0$: mean of final Human action distribution is equal to $h^{\SE}$          & 
$P=0.00$	& $t=+4.9$	& $d=+1.1$ &  $\star$\\
$H_0$: mean of final Machine action distribution is equal to $m^{\SE}$        & 
$P=0.00$	& $t=+7.6$	& $d=+1.7$ &  $\star$\\
$H_0$: mean of final Human policy distribution is equal to $L_H^{\SE}$        & 
$P=0.00$	& $t=-6.4$	& $d=-1.5$ &  $\star$\\
$H_0$: mean of final Machine policy distribution is equal to $L_M^{\SE}$      & 
$P=0.00$	& $t=+13.0$	& $d=+3.0$ &  $\star$\\
\bf $H_0$: mean of final Human action distribution is equal to $h^{\CCVE}$        & 
$P=0.02$	& $t=-2.6$	& $d=-0.6$ &  $\star$\\
\bf $H_0$: mean of final Machine action distribution is equal to $m^{\CCVE}$      & 
$P=0.02$	& $t=-2.5$	& $d=-0.6$ &  $\star$\\
\bf $H_0$: mean of final Human policy distribution is equal to $L_H^{\CCVE}$      & 
$P=0.31$	& $t=+1.0$	& $d=+0.2$ & \\
\bf $H_0$: mean of final Machine policy distribution is equal to $L_M^{\CCVE}$    & 
$P=0.13$	& $t=-1.6$	& $d=-0.4$ & \\
\hline
Experiment 3 &&&& \\
\bf $H_0$: mean of initial Human action distribution is equal to $h^{\SE}$        & 
$P=0.27$	& $t=-1.2$	& $d=-0.4$ & \\
\bf $H_0$: mean of initial Machine action distribution is equal to $m^{\SE}$      & 
$P=0.33$	& $t=-1.0$	& $d=-0.3$ & \\
\bf $H_0$: mean of initial Human policy distribution is equal to $L_H^{\SE}$      & 
$P=1.00$	& $t=+0.0$	& $d=+1.0$ & \\
\bf $H_0$: mean of initial Machine policy distribution is equal to $L_M^{\SE}$    & 
$P=1.00$	& $t=+0.0$	& $d=\text{NaN}$ & \\
\bf $H_0$: mean of initial Machine cost distribution is equal to $c_M^{\SE}$      & 
$P=0.74$	& $t=-0.3$	& $d=-0.1$ & \\
$H_0$: mean of initial Human action distribution is equal to $h^{\RSE}$       & 
$P=0.00$	& $t=+7.9$	& $d=+2.6$ &  $\star$\\
$H_0$: mean of initial Machine action distribution is equal to $m^{\RSE}$     & 
$P=0.00$	& $t=+8.4$	& $d=+2.8$ &  $\star$\\
$H_0$: mean of initial Human policy distribution is equal to $L_H^{\RSE}$     & 
$P=0.00$	& $t=-\infty$	& $d=-\infty$ &  $\star$\\
$H_0$: mean of initial Machine policy distribution is equal to $L_M^{\RSE}$   & 
$P=0.00$	& $t=+\infty$	& $d=\text{NaN}$ &  $\star$\\
$H_0$: mean of initial Machine cost distribution is equal to $c_M^{\RSE}$     & 
$P=0.00$	& $t=+7.7$	& $d=+2.6$ &  $\star$\\
$H_0$: mean of final Human action distribution is equal to $h^{\SE}$          & 
$P=0.00$	& $t=-7.5$	& $d=-2.5$ &  $\star$\\
$H_0$: mean of final Machine action distribution is equal to $m^{\SE}$        & 
$P=0.00$	& $t=-11.9$	& $d=-4.0$ &  $\star$\\
$H_0$: mean of final Human policy distribution is equal to $L_H^{\SE}$        & 
$P=0.00$	& $t=+22.9$	& $d=+7.6$ &  $\star$\\
$H_0$: mean of final Machine policy distribution is equal to $L_M^{\SE}$      & 
$P=0.00$	& $t=-19.4$	& $d=-6.5$ &  $\star$\\
$H_0$: mean of final Machine cost distribution is equal to $c_M^{\SE}$        & 
$P=0.00$	& $t=-6.3$	& $d=-2.1$ &  $\star$\\
\bf $H_0$: mean of final Human action distribution is equal to $h^{\RSE}$         & 
$P=0.07$	& $t=+2.1$	& $d=+0.7$ & \\
\bf $H_0$: mean of final Machine action distribution is equal to $m^{\RSE}$       & 
$P=0.06$	& $t=+2.1$	& $d=+0.7$ & \\
\bf $H_0$: mean of final Human policy distribution is equal to $L_H^{\RSE}$       & 
$P=0.01$	& $t=-3.1$	& $d=-1.0$ &  $\star$\\
\bf $H_0$: mean of final Machine policy distribution is equal to $L_M^{\RSE}$     & 
$P=0.01$	& $t=+3.3$	& $d=+1.1$ &  $\star$\\
\bf $H_0$: mean of final Machine cost distribution is equal to $c_M^{\RSE}$       & 
$P=0.07$	& $t=+1.7$	& $d=+0.6$ & \\
\hline
\end{tabular}
\caption[Exact values from statistical tests]{%
Null hypotheses and exact values of statistics for $t$-tests used in Experiments 1, 2 and 3 ($P$-values, $t$ statistic, and Cohen's $d$ effect size).
All tests have degrees of freedom equal to 19.
Statistical significance ($\ast$) determined by comparing $P$-value with confidence threshold $0.05$.
Tests on actions and policies are $2$-sided, tests on costs are $1$-sided. The bold rows are outcomes predicted by the game theory analysis.
}
\label{tab:pvalues}
\end{table}
\newpage
\clearpage
\begin{center}
{\footnotesize
\begin{minipage}[t]{.65\textwidth}
\begin{mdframed}
\begin{lstlisting}
repeat:
  pick adaptation rate $\alpha$ and sign $s$ randomly
  initialize actions $h_0,m_0$ randomly
  for $t$ in $\{1,\dots,T\}$:
    $h_{t} = s$*get_manual_input($t$)
    display_cost($c_H(h_t,m_t)$)
    if $\alpha=0$:
      $m_{t+1} = \Nash{m}$  
    else if $0<\alpha<\infty$:                  
      $m_{t+1}=m_t - \alpha \partial_m c_M(h_t,m_t)$
    else if $\alpha=\infty$:
      $m_{t+1}=L_{M,0}h_t+\ell_{M,0}$
\end{lstlisting}
\end{mdframed}
\end{minipage}
}

\rcode{code:exp1}
\vspace{1ex}
\footnotesize Protocol S\ref{code:exp1}: { Algorithm description of Experiment 1.}
\label{protocol:exp1} 
\end{center}

\begin{center}
{
\footnotesize
\begin{minipage}[t]{.44\textwidth}
\begin{mdframed}
\begin{lstlisting}
function $\bf{run\_trial}$($L_M,\ell_M$):
  initialize $h_{0}$ randomly
  for $t$ in $\{1,\dots,T\}$:
    $h_{t}=$get_manual_input($t$) 
    $m_{t} = L_{M} h_{t} + \ell_{M}$
    display_cost($c_H(h_{t},m_{t})$)
  return median of $h_t$ and $m_t$
\end{lstlisting}
\vspace{6.1ex}
\end{mdframed}
\end{minipage}
\begin{minipage}[t]{.55\textwidth}
\begin{mdframed}

\begin{lstlisting}
initialize $L_{M,0}$ and $\ell_{M,0}$
for $k$ in $\{0,\dots,K-1\}$:
  $(\est{h},\est{m})\leftarrow\bf{run\_trial}$($L_{M,k},\ell_{M,k}$):
  $(\est{h}',\est{m}')\leftarrow\bf{run\_trial}$($L_{M,k},\ell_{M,k}+\delta$):
  $\est{L}_{H,k+1}=({\widetilde h'-\widetilde h})/({\widetilde m'-\widetilde m})$  
  $L_{M,k+1}=-(B_M+\est{L}_{H,k+1}^\tops D_M)/(A_M+\est{L}_{H,k+1}^\tops B_M^\tops)$
  $\ell_{M,k+1}\,\,=-(b_M+\est{L}_{H,k+1}^\tops d_M)/(A_M+\est{L}_{H,k+1}^\tops B_M^\tops)$
end experiment
\end{lstlisting}
\end{mdframed}
\end{minipage}
}

\rcode{code:exp2}
\footnotesize Protocol S\ref{code:exp2}: Algorithm description of Experiment 2.
\label{protocol:exp2} 
\end{center}

\begin{center}
{\footnotesize
\begin{minipage}[t]{.44\textwidth}
\begin{mdframed}
\begin{lstlisting}
function $\bf{run\_trial}$($L_M, h_M^*,m_M^*$):
  initialize $h_0$ randomly  
  for $t$ in $\{1,\dots,T\}$:
    $h_{t}=$get_manual_input($t$)
    $m_{t} = L_{M} (h_t-h_M^\ast) + m_M^\ast$
    display_cost($c_H(h_t,m_t)$)
  return median of $c_M(h_t,m_t)$ 
  \end{lstlisting}
  \vspace{1.15ex}
 \end{mdframed}
\end{minipage}
\begin{minipage}[t]{.55\textwidth}
\begin{mdframed}
\begin{lstlisting}
initialize $L_{M,0}$ and $(m_M^\ast, h_M^\ast)$
for $k$ in $\{0,\dots,K-1\}$:
  $\est{c_M}\leftarrow $ $\bf{run\_trial}$($L_{M,k},h_M^*,m_M^*$)
  $\est{c_M}'\leftarrow $ $\bf{run\_trial}$($L_{M,k}+\Delta,h_M^*,m_M^*$)
  grad_M$\,=({ \est{c_M}^{'} - \widetilde{c_M}})/{\Delta}$
  $L_{M,k+1} = L_{M,k} - \gamma*$grad_M
end experiment
\end{lstlisting}
\end{mdframed}	
\end{minipage}
}

\rcode{code:exp3}
\footnotesize Protocol S\ref{code:exp3}: Algorithm description of Experiment 3.
\label{protocol:exp3} 
\end{center}

\begin{table}[ht]
\small
\centering
\begin{tabular}{|c|c|l|}
\hline
human $H$ & machine $M$ & \\
\hline
$\ActH=[-1,1]\subset\R$
& $\ActM=\R$
& player action spaces \\
$\actH\in\ActH$ & $\actM\in\ActM$ & player actions \\
$\costH:\ActH\times\ActM\into\R$ & $\costM:\ActH\times\ActM\into\R$ & player costs \\
\hline
\end{tabular}
\caption[Symbols and terminology for the co-adaptation game]{\label{tab:symbols}%
Symbols and terminology for the co-adaptation game between human and machine.%
}
\vspace{2em}
\centering
\begin{tabular}{|l|l|}
\hline
Symbol & Description   \\
\hline 
$T>0$ & time horizon  \\
$t\in\{0,1,\dots,T\}$ & time (discrete steps)\\
$h_t\in \ActH = [-1,1]$ & $H$'s action at time $t$\\
$m_t\in\ActM=\R$ & $M$'s action at time $t$  \\
$c_H(h_t,m_t)\in\R$ & $H$'s cost at time $t$  \\
$c_M(h_t,m_t)\in\R$ & $M$'s cost at time $t$ \\
\hline 
\hline
Experiment 1: &\\
$\alpha\in[0,\infty]$ & $M$'s adaptation rate  \\
$\partial_m c_M(h,m)\in\R$ & derivative of $M$'s cost with respect to $m$\\
 $L_{M,0}(\cdot)+\ell_{M,0}\in\R\to\R$ & $M$'s Nash policy\\
$(\Nash{h},\Nash{m})\in \ActH\times\ActM$ & Nash equilibrium \\
 $(\Stack{h},\Stack{m})\in \ActH\times\ActM$  & 
 human-led Stackelberg equilibrium \\
\hline\hline
Experiment 2: &\\
$k\in\{0,\dots,K\} $ & conjectural variation iteration   \\
$\delta\in\R$ & perturbation to constant term of $M$'s policy \\
$\est{L}_{H,k}\in\R$ & $M$'s estimate of $H$'s policy slope at iteration $k$ \\
 $L_{M,k}(\cdot)+\ell_{M,k}\in\R\to\R$ & $M$'s policy at iteration $k$ \\
 $({h}^\CCVE,{m}^\CCVE)\in \ActH\times\ActM$  &  consistent conjectural variations equilibrium \\
\hline\hline 
Experiment 3: &\\
$k\in\{0,\dots,K\}$ & policy gradient iteration  \\
$\Delta\in\R$ & perturbation to slope term of $M$'s policy  \\
$\partial_{L_M}\est{c}_{M}(L_M)\in\R$ &
$M$'s policy gradient estimate \\
 $L_{M,k}(\cdot)+\ell_{M,k}\in\R\to\R$ & $M$'s policy at iteration $k$ \\
 $(\RStack{h},\RStack{m})\in \ActH\times\ActM$  &  machine-led reverse Stackelberg equilibrium \\
 $(\Opt{h}_M,\Opt{m}_M)\in \ActH\times\ActM$  &  $M$'s global minimum \\
\hline
\end{tabular}
\caption[Symbols and terminology for the three experiments]{Symbols and terminology for the game used in the three experiments.}
\end{table}

\clearpage
\subsection{Numerical simulations}
\label{sec:supp-sim}
To provide simple descriptive models for the outcomes observed in each of the three Experiments, numerical simulations were implemented using 
Python 3.8~\cite{rossum2009python}.
The shared parameter, cost and gradient  definitions are included in Sourcecode S0.

\paragraph{Experiment 1}
To predict what happens in the range of adaptation rates between the two limiting cases (i.e.\ for $0 < \alpha < \infty$), a simulation of the human's behavior was implemented based on approximate gradient descent. 
The model of the human simply uses finite differences to estimate the derivative of its cost ($\costH$) with respect to its action ($h$) and then adapts its action to descend this cost gradient. 
Importantly, it is assumed that the human performs these derivative estimation and gradient descent procedures 
slower than the machine, i.e.\ the human takes one gradient step for every {$K$} 
machine steps.
Since the machine's steps occur at a rate of $60$ samples per second, this timescale difference corresponds to the human taking steps at a rate of {$60/K$} 
samples per second. 
The Python code for simulating Experiment 1 is included in Sourcecode S1.

\noindent
\paragraph{Experiment 2}
 
To predict what happens when the machine perturbs the constant term of its policy and uses the outcome to estimate of the human's policy slope, 
a simulation of their behavior was implemented based on the
conjectural variations iteration.
The machine best responds to the human's policy.
The model of the human uses
the derivative of its cost ($\costH$) assuming that the machine's action ($m$) is related to its own action ($h$) by conjectural variation ($L_{M,k}$) and then adapts its action to descent this cost gradient.
It is assumed that the 
machine observes the human and machine's actions to compute the estimate of the human's policy slope ($\est{L}_{H,k}$).
The Python code for simulating Experiment 2 is included in Sourcecode S2.

\paragraph{Experiment 3}

To predict what happens when the machine perturbs the linear term of its policy, a simulation was implemented based on policy gradient.
The model of the human is the same as the previous simulation of Experiment 2. 
The machine uses the gradient estimate of the observed cost, and does not require observe the human's action or policy as was required in the previous experiment.
The Python code for simulating Experiment 3 is included in Sourcecode S3.

\newpage

\begin{center}
{ \scriptsize
\begin{lstlisting}[language=Python,frame=single,xleftmargin=.15\textwidth,xrightmargin=.15\textwidth]
T = 10000   # time samples

# human's cost parameters
AH, BH, DH, hH, mH = 1, -1/3, 7/15, 1/10, 7/10   
  
# machine's cost parameters
AM, BM, DM, hM, mM = 1, -1, 2, 0, 0 

def cost_H(h, m):     # H's cost
    return AH*(h-hH)**2/2 + (h-hH)*BH*(m-mH) + DH*(m-mH)**2/2

def cost_M(h, m):     # M's cost
    return AM*(m-mM)**2/2 + (h-hM)*BM*(m-mM) + DM*(h-hM)**2/2
    
def grad_H(h, m, LM): # H's gradient
    return AH*(h-hH) + BH*(m-mH) + LM*(BH*(h-hH) + DH*(m-mH))

def grad_M(h, m, LH): # M's gradient
    return AM*(m-mM) + BM*(h-hM) + LH*(BH*(h-hH) + DH*(m-mH))

def ceil(x):
    return int(x) if int(x)==x else int(x+1)
\end{lstlisting}
}
Sourcecode S0: Definitions of parameters, cost functions and gradients of the two players.
\label{sourcecode:exp0} 
\end{center}

\begin{center}
{ \scriptsize
\begin{lstlisting}[language=Python,frame=single,xleftmargin=.1\textwidth,xrightmargin=.1\textwidth]
# machine's adaptation rates
alphas = [3*10**(i/10) for i in range(-29,-9)]
beta = 0.003 # human's adaptation rate (assumed)
delta = 1e-5 # perturbation size of constant term of H's policy

results = []
for alpha in alphas:
    K = ceil(alpha/beta)          # ratio of M iterations to H iterations
    N = ceil(T/K)*K+1             # number of total iterations 
    h,m = [0]*N, [0]*N            # initialize actions
    
    for t in range(0, T, K):      # gradient descent loop
        c_H = []                  # H's observed cost
        
        for d in [delta, 0]:    
        
            for k in range(t, t+K):
                # perturb H's action
                h[k]   = h[t] + d  
                # update M's action
                m[k+1] = m[k] - alpha*grad_M(h[k],m[k],0) 
            c_H.append(cost_H(h[k],m[k]))
            
        gradH = (c_H[0]-c_H[1])/2/delta # estimate H's gradient
        
        h[t+K] = h[t] - K*beta*gradH    # update H's action
        m[t+K] = m[k+1]
    results.append([h[-1],m[-1]])
\end{lstlisting}
}
Sourcecode S1: Numerical simulation of Experiment 1. 
\label{sourcecode:exp1} 
\end{center}
\newpage

\begin{center}
{ \scriptsize
\begin{lstlisting}[language=Python,frame=single,xleftmargin=.15\textwidth,xrightmargin=.15\textwidth]
K = 10       # total conjectural variations iterations
delta = 1e-1 # perturbation size (of constant term of M's policy)

h,m = [0]*(K*T+1), [0]*(K*T+1) # initialize actions 
LH,LM = [0]*(K+1), [0]*(K+1)   # initialize policy slopes
LM[0] = -BM/AM                 # initialize M's policy

# conjectural variations iteration loop
for k in range(K):
    h_, m_ = [], []       # steady state actions
    
    for d in [delta,0]:   # run a pair of trials
    
        for t in range(k*T, k*T + T):
            # update H's action
            h[t+1] = h[t] - beta*grad_H(h[t], m[t], LM[k])
            # update M's action
            m[t+1] = LM[k]*(h[t]-hM) + mM + d
            
        h_.append(h[t+1])
        m_.append(m[t+1])
        
    # estimate H's policy slope 
    LH[k+1] = (h_[1] - h_[0])/(m_[1] - m_[0])
    
    # update M's policy slope
    LM[k+1] = -(BM + LH[k+1]*DM)/(AM + LH[k+1]*BM)	
\end{lstlisting}
}
Sourcecode S2: Numerical simulation of Experiment 2.
\label{sourcecode:exp2} 
\end{center}

\begin{center}
{ \scriptsize
\begin{lstlisting}[language=Python,frame=single,xleftmargin=.15\textwidth,xrightmargin=.15\textwidth]
K = 10       # total policy gradient iterations
Delta = 1e-1 # perturbation size (of slope term of M's policy)
beta = 3e-3  # human's learning rate
gamma = 2    # policy gradient step size

# initialize actions and policies
h,m = [0]*(K*T+1), [0]*(K*T+1) # initialize actions 
LH,LM = [0]*(K+1), [0]*(K+1)   # initialize policy slopes
LM[0] = -BM/AM

# policy gradient loop
for k in range(K):                  
    c_M = []                    # M's steady state cost
    
    for D in [Delta, 0]:    # run pair of trials
    
        for t in range(k*T, k*T+T):
            # update H's action
            h[t+1] = h[t] - beta*grad_H(h[t], m[t], LM[k] + D)
            # update M's action
            m[t+1] = (LM[k] + D)*(h[t] - hM) + mM
            
        c_M.append(cost_H(h[t],m[t]))
        
    # estimate M's policy gradient
    gradM = (c_M[0] - c_M[1])/Delta/2
    
    # update M's policy slope
    LM[k+1] = LM[k] - gamma*gradM
\end{lstlisting}
}
Sourcecode S3: Numerical simulation of Experiment 3.
\label{sourcecode:exp3} 
\end{center}

\clearpage 
\section{Task load survey and feedback forms}\label{sec:feedback}
Each participant filled out a task load survey and optional feedback form upon finishing an experiment.

\subsection{Task load survey}
The NASA Task Load Index\cite{hart1988development}
 was used to assess participant's mental, physical, and temporal demand while performing the task.
The questions asked are:

\noindent
 {\bf 1. Mental Demand: }
          How mentally demanding was the task?
          
          Very Low (-10) -- 
          Very High (10)
          
 \noindent
  {\bf 2. Physical Demand:}
          How physically demanding was the task?

          Very Low (-10) -- 
          Very High (10)
          
 \noindent
{\bf 3. Temporal Demand:}
          How hurried or rushed was the pace of the task?
         
          Very Low (-10) -- 
          Very High (10)
          
\noindent
 {\bf 4. Performance:}
          How successful were you in accomplishing what you were asked to do?
          
          Perfect (-10) --
          Failure (10)
          
\noindent
 {\bf 5. Effort:}
          How hard did you have to work to accomplish your level of performance?
          
          Very Low (-10) -- 
          Very High (10)

\noindent
 {\bf 6. Frustration:}
          How insecure, discouraged, irritated, stressed, and annoyed were you?

          Very Low (-10) -- 
          Very High (10)
          
 \noindent
Table \ref{table:survey} provides the data from the survey for all participants.

\begin{table}[ht]
\centering
\begin{tabular}{|lrrr|}
\hline
 &  25\% quartile &  median &  75\% quartile \\
 \hline
 Mental Demand      &  -8 &  -5 &   0 \\
 Physical  Demand  &  -9 &  -6 &  -2 \\
 Temporal  Demand   &  -8 &  -5 &  -1 \\
 Performance &  -9 &  -6 &  -2 \\
 Effort      &  -6 &  -2 &   3 \\
 Frustration &  -9 &  -4 &   2 \\
\hline
 \end{tabular}
 \caption[Task load survey results]{Results from the task load survey for three experiments under two game costs with 20 participants per experiment, totalling 120 participants. }
 \label{table:survey}
 \end{table}
\newpage
\clearpage
\subsection{Optional Feedback}

Additional feedback was optionally provided by participants.

\noindent {\bf  Any feedback? Let us know here: } 
[Text box]

\noindent
Table \ref{table:feedback} provides the feedback submitted by participants.

\begin{table}[ht]
{
\footnotesize
\begin{tabular}{lp{30em}}
\hline
                         Experiment &                                                                                                                                                                Feedback \\
\hline
    Experiment 1 (quadratic) &                                                                                                         None, keep up the good work and thank you for the oportunity :) \\
    Experiment 1 (quadratic) &                                                                                                                                                               cool test \\
    Experiment 1 (quadratic) &                                                           I think that the study was very different from other studies I have taken in Prolific. More challenging too.  \\
    Experiment 1 (quadratic) &                                                                                                                                                   Everything was fine!! \\
    Experiment 1 (quadratic) &                                                                                                  The "keep this small task" was abusable if you kept your cursor still. \\
    Experiment 1 (quadratic) &                                                                                                                    Everything worked perfectly, thanks for inviting me! \\
    Experiment 2 (quadratic) &                                                                                                                                                                     No  \\
    Experiment 2 (quadratic) &                              The experiment was interesting, it was a bit frustrating when the option to fill the block moved too fast before i could do it accordingly \\
    Experiment 2 (quadratic) &                                                                                                                                                                     N/A \\
    Experiment 2 (quadratic) &                                                                                                                                     In my opinion the task was easy \\
    Experiment 2 (quadratic) &                                                                                                                                   It was an interesting task! thank you \\
    Experiment 3 (quadratic) &                                                                                                       It was an interesting study that I would love to partake in again \\
    Experiment 3 (quadratic) &                                                                                                                                                                      NA \\
    Experiment 3 (quadratic) &                                                                                                                                                       I liked  the task \\
    Experiment 3 (quadratic) &                                                                                                                            The survey was easy, it just required focus. \\
    Experiment 3 (quadratic) &                                                                                                                                       too much time needed for the task \\
Experiment 1 (non-quadratic) &                                                                                                                                            I think that human's eye is  \\
Experiment 1 (non-quadratic) &                                                                                                                                     The study was okay, but a bit slow. \\
Experiment 1 (non-quadratic) & It Would been better, if it was more detail in explaining and to be able to lick when you have the box at the smallest size possible, thanks once again for the study   \\
Experiment 1 (non-quadratic) &                                                                                                                                    this gave me anxity but it was good  \\
Experiment 2 (non-quadratic) &                                                                                                                                  Maybe some instructions would be nice  \\
Experiment 2 (non-quadratic) &                                                                                                 I didn’t understand the aim of the study, but it’s always nice to play  \\
Experiment 2 (non-quadratic) &                                                                                                                                                            No feedback  \\
Experiment 2 (non-quadratic) &                                                                                                                                                 Everything was perfect. \\
Experiment 2 (non-quadratic) &                                                                                                                                                                      NA \\
Experiment 2 (non-quadratic) &                                                                               not sure why the waiting time for the next task during the 20 exercises  but it was good  \\
Experiment 2 (non-quadratic) &                                                                                        At first i didn't notice that the breaks were timed, made me fail  couple tasks. \\
Experiment 3 (non-quadratic) &                                                     Either instructions were unclear or the time between tasks was WAY too long. Unless that was part of the study.. :O \\
\hline
\end{tabular}
}
\caption[Feedback results]{Written feedback from participants. Optionally provided.}
\label{table:feedback}
\end{table}
\clearpage

\clearpage
\newgeometry{top=0.5in,bottom=.75in}

\begin{figure*}[!ht]
\centering
\includegraphics[width=\textwidth]{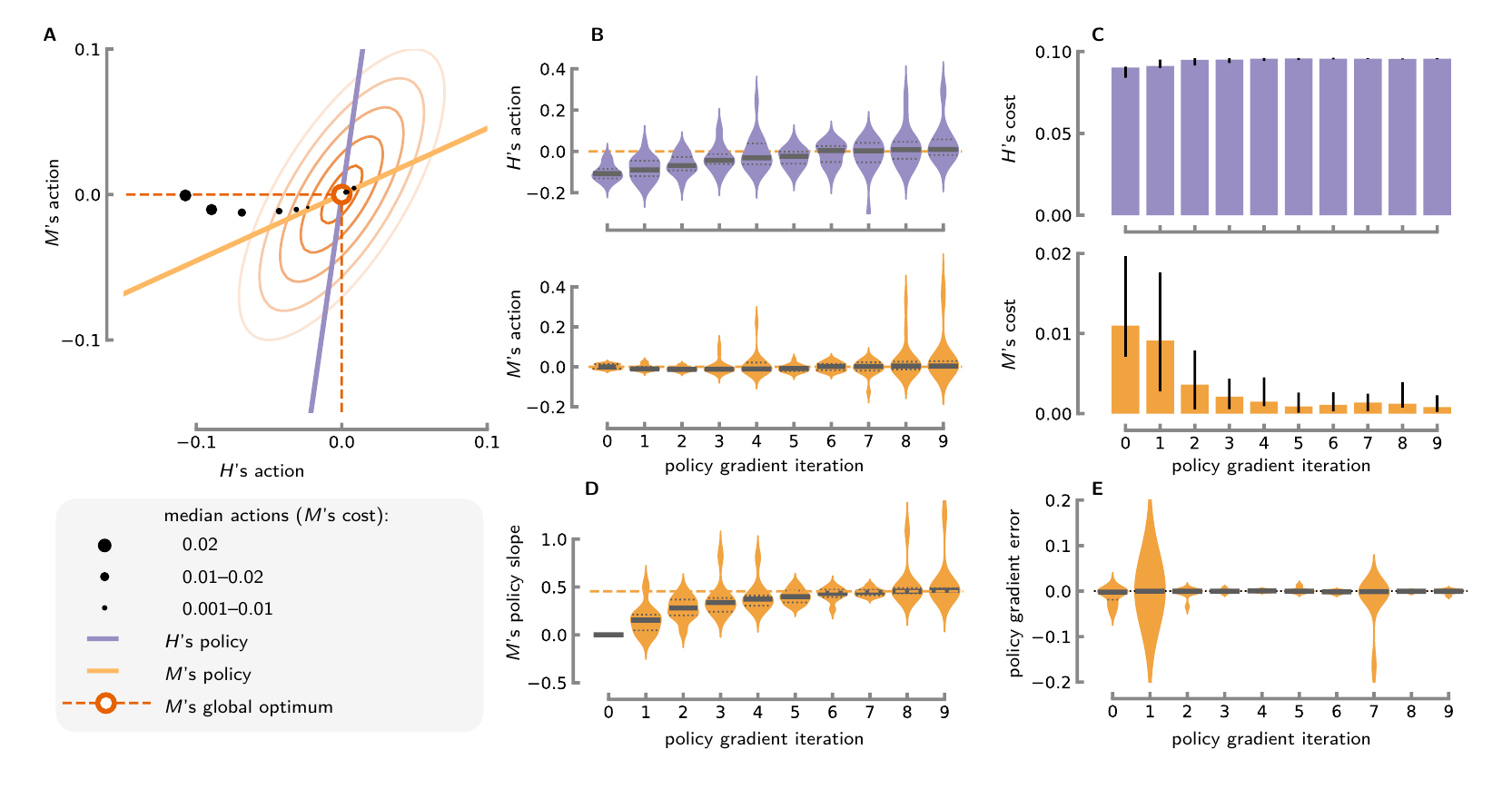}
  \caption[Experiment 3 with different initial policy]{\label{fig:exp3-other-init}
  \footnotesize
  \textbf{Experiment 3 with different initial policy} ($n=20$): gradient descent in policy space for a different initial machine policy.
  (\textbf{A}) Game-theoretic equilibria and best-response functions.
  (\textbf{B}) Decision vector distributions.
  (\textbf{C}) Cost distributions.
  (\textbf{D}) Machine policy slopes.
  (\textbf{E}) Estimation error of machine policy gradients.
  Action IQR in (B) contains the machine's minimum at each iteration 4 to 9.
  Machine's policy slope distribution IQR in (D) reaches the theoretically-predicted slope that would yield the machine's minimum as the game outcome.
  The machine's policy gradient IQR in (E) contains the theoretical gradient at every policy gradient iteration.
  }
\end{figure*}

\begin{figure*}[!ht]
\centering
\includegraphics[width=\textwidth]{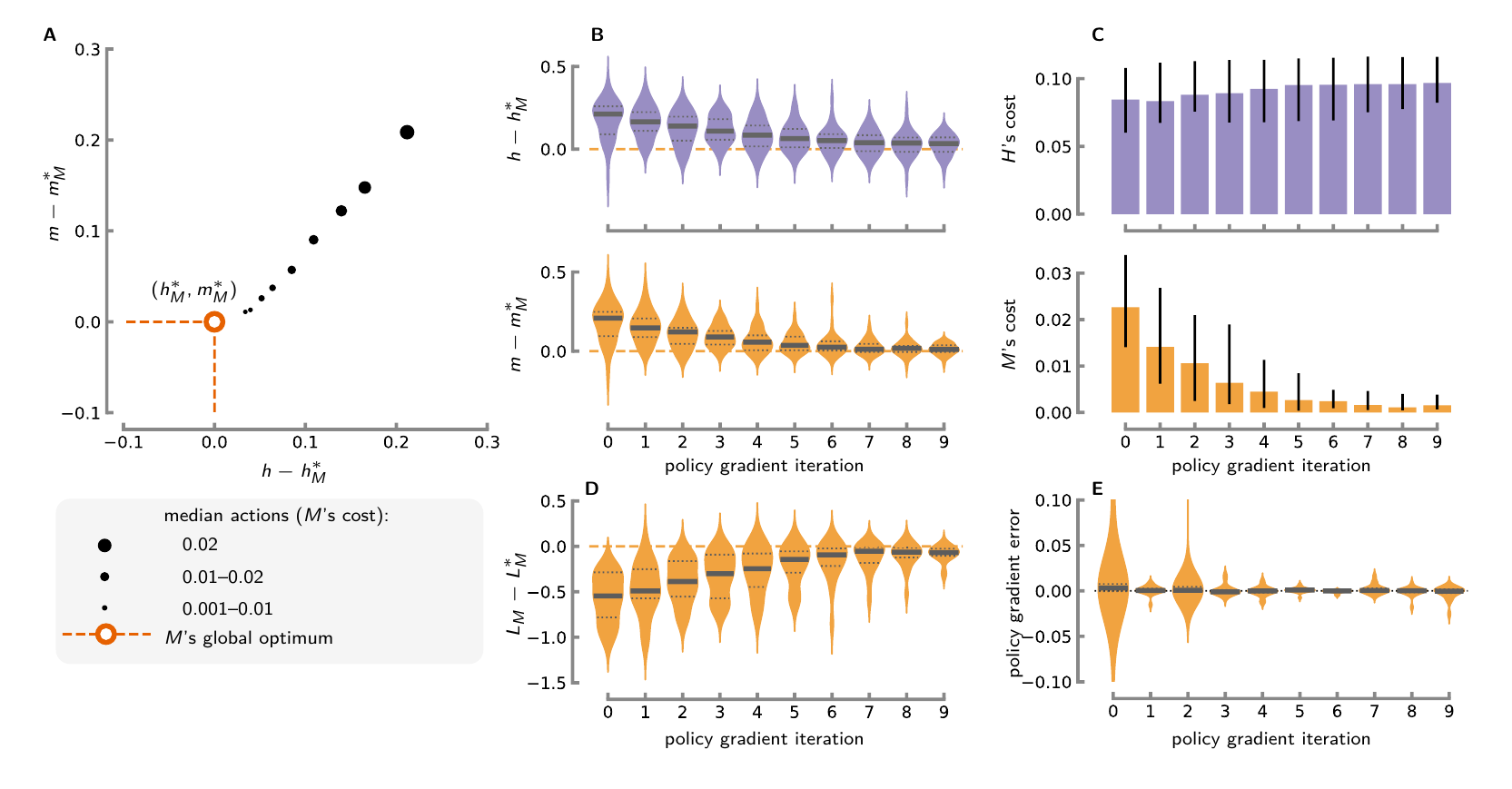}
  \caption[Experiment 3 with different machine optima]{\label{fig:exp3-other-optima}
  \footnotesize
  \textbf{Experiment 3 with different machine optima} ($n=18$): gradient descent in policy space for differing machine optima.
  (\textbf{A}) Game-theoretic equilibria and best-response functions.
  (\textbf{B}) Decision vector distributions.
  (\textbf{C}) Cost distributions.
  (\textbf{D}) Machine policy slopes.
  (\textbf{E}) Estimation error of machine policy gradients.
  Action IQR in (B) contains the machine's minimum at each iteration 7 to 9.
  Machine's policy slope distribution IQR in (D) approaches the theoretically-predicted slope that would yield the machine's minimum as the game outcome.
      }
\end{figure*}

\begin{figure*}[!ht]
\centering
\includegraphics[width=\textwidth]{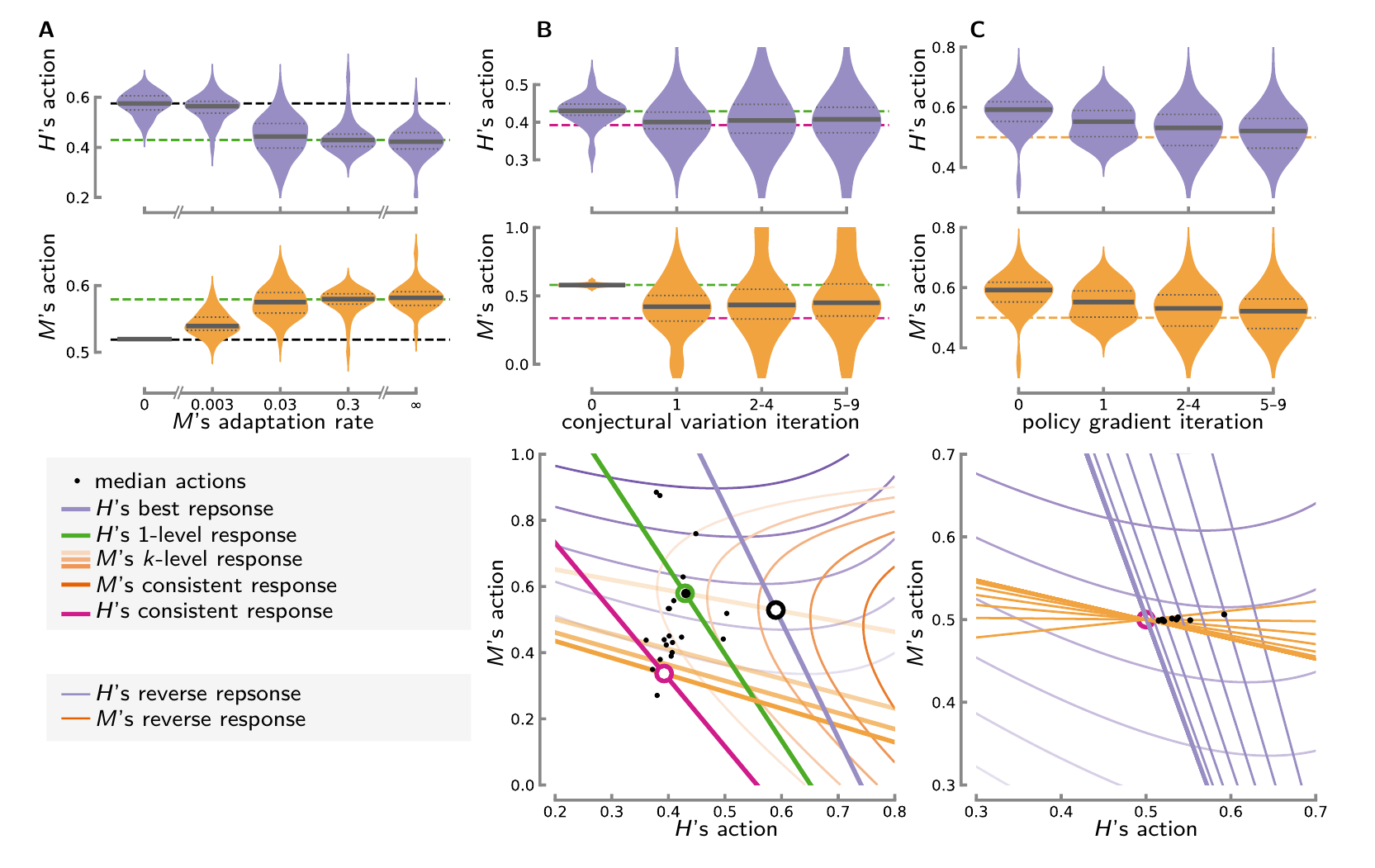}
\caption[Modified Experiments 1, 2 and 3 with non-quadratic costs]{\label{fig:exp-cobb}
\footnotesize
\textbf{Modified Experiments 1, 2 and 3 with non-quadratic costs} ($n=20\times 3$): Non-quadratic costs.
    (\textbf{A}) Gradient descent in action space; decision vector distributions.
(\textbf{B}) Conjectural variation in policy space; decision vector distributions, game theoretic equilibria and best-response functions.
(\textbf{C}) Gradient descent in policy space; decision vector distributions and policy gradient iterations.   
}
\end{figure*}

\begin{figure*}[!ht]
\centering
\includegraphics[width=\textwidth]{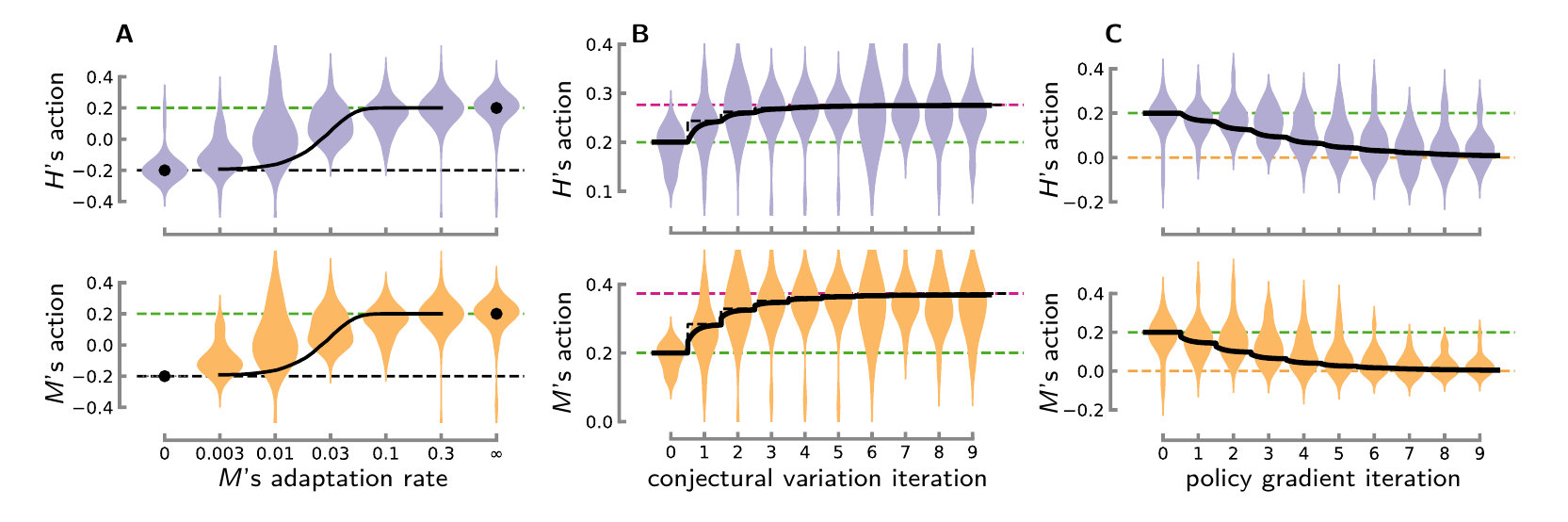}
  \caption[Simulations of Experiments 1, 2 and 3]{\label{fig:sim-quad}
  \footnotesize
  \textbf{Simulations of Experiments 1, 2 and 3}: Solid lines and dots are the simulation data, overlaid on violin plots from main paper. Dashed lines are analytical predictions.
  (\textbf{A}) Gradient descent in action space.
  (\textbf{B}) Conjectural variation in policy space.
  (\textbf{C}) Gradient descent in policy space.
  }
\vspace{10em}
\end{figure*}

\begin{figure*}[!ht]
\centering
\includegraphics[width=\textwidth]{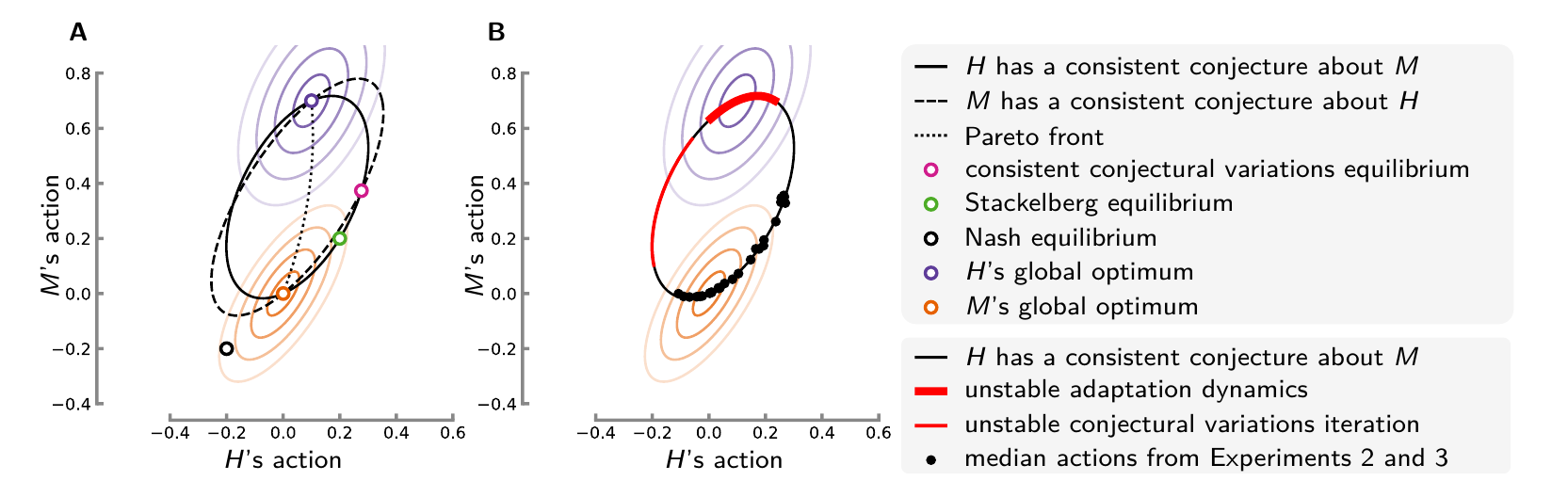}
  \caption[Comparing Pareto optimality with conjecture consistency]{\label{fig:pareto-consistency}   \scriptsize
  \textbf{Comparing Pareto optimality with conjecture consistency}
    \textbf{(\sf A)} 
    The analytical solution for the continuum of equilibria where the human has a consistent conjecture about the machine and vice versa, compared with the Pareto optimal points.
  \textbf{(\sf B)} The median actions from Experiments 2 and 3 coincide with the ellipse that corresponds to the human having a consistent conjecture about the machine. 
  }
\vspace{30em}
\end{figure*}

\restoregeometry
\baselineskip24pt

\newpage

\end{document}